%%%%%%%%%%%%%%%%%%%%%%% file template.tex %%%%%%%%%%%%%%%%%%%%%%%%%
%
% This is a general template file for the LaTeX package SVJour3
% for Springer journals.          Springer Heidelberg 2010/09/16
%
% Copy it to a new file with a new name and use it as the basis
% for your article. Delete % signs as needed.
%
% This template includes a few options for different layouts and
% content for various journals. Please consult a previous issue of
% your journal as needed.
%
%%%%%%%%%%%%%%%%%%%%%%%%%%%%%%%%%%%%%%%%%%%%%%%%%%%%%%%%%%%%%%%%%%%
%
% First comes an example EPS file -- just ignore it and
% proceed on the \documentclass line
% your LaTeX will extract the file if required
% [arxiv_v2: filecontents example.eps stripped, 188 chars]
\RequirePackage{fix-cm}
\documentclass[smallcondensed]{svjour3}     % onecolumn (ditto)
%\documentclass[smallextended]{svjour3}       % onecolumn (second format)
%\documentclass[twocolumn]{svjour3}          % twocolumn
%
%\smartqed  % flush right qed marks, e.g. at end of proof
%
\usepackage{graphicx}
\usepackage{mathptmx}      % use Times fonts if available on your TeX system
%
% insert here the call for the packages your document requires
%\usepackage{latexsym}

\usepackage{microtype}
\usepackage{natbib}

\usepackage{amsmath}
\usepackage{amssymb}

\usepackage{amsthm}
\usepackage{caption}
\usepackage{hyperref}
\usepackage{url}

\usepackage{array}
\newcolumntype{?}{!{\vrule width 1pt}}

\newtheorem{thm}{Theorem}
\newtheorem{clm}{Claim}
\newtheorem{lem}{Lemma}
\newtheorem{cor}[lem]{Corollary}

\theoremstyle{definition}
\newtheorem{notation}{Notation}
\newtheorem{defn}[notation]{Definition}

\theoremstyle{remark}
\newtheorem{rem}{Remark}

\DeclareMathOperator*{\argmax}{arg\,max}

\DeclareMathOperator{\var}{Var}

\DeclareMathOperator{\D}{d\!}
\DeclareMathOperator{\COMP}{COMP}

% etc.
%
% please place your own definitions here and don't use \def but
% \newcommand{}{}
%
% Insert the name of "your journal" with
\journalname{Machine Learning}
\begin{document}

\title{The Mechanism of Additive Composition%\thanks{Grants or other notes
%about the article that should go on the front page should be
%placed here. General acknowledgments should be placed at the end of the article.}
}
%\subtitle{Do you have a subtitle?\\ If so, write it here}

%\titlerunning{Short form of title}        % if too long for running head

\author{Ran Tian         \and
        Naoaki Okazaki  \and
        Kentaro Inui %etc.
}

%\authorrunning{Short form of author list} % if too long for running head

\institute{
Communication Science Laboratory\\
   Tohoku University\\
   6-6-05 Sendai, Miyagi 980-8579, Japan\\
              Tel.: +81-22-795-7091\\
              Fax: +81-22-795-4285\\
              \email{\{tianran, okazaki, inui\}@ecei.tohoku.ac.jp}           %  \\
}

\date{Received: date / Accepted: date}
% The correct dates will be entered by the editor

\maketitle

\begin{abstract}
Additive composition~\citep{foltz98,landauer97,mitchell10} is a 
widely used method for computing meanings of 
phrases, which takes the average of vector representations of the constituent words. 
In this article, we prove an upper bound for the bias of additive composition, which 
is the first theoretical analysis on compositional frameworks from a machine learning point of view. 
The bound is written in terms of collocation strength; we prove that the more exclusively two successive 
words tend to occur together, the more accurate one can guarantee their additive composition as an 
approximation to the natural phrase vector. 
Our proof relies on properties of natural language data that are empirically verified, and can be 
theoretically derived from an assumption that the data is generated from a Hierarchical Pitman-Yor Process. 
The theory endorses additive composition as a reasonable operation for calculating meanings of phrases, 
and suggests ways to improve additive compositionality, including: 
transforming entries of distributional word vectors by a function that meets a specific condition, 
constructing a novel type of vector representations to make additive composition sensitive to word order, 
and utilizing singular value decomposition to train word vectors. 
%Insert your abstract here. Include keywords, PACS and mathematical
%subject classification numbers as needed.
\keywords{
Compositional Distributional Semantics \and 
bias and variance \and 
approximation error bounds \and 
natural language data \and 
Hierarchical Pitman-Yor Process
}
% \PACS{PACS code1 \and PACS code2 \and more}
% \subclass{MSC code1 \and MSC code2 \and more}
\end{abstract}

\section{Introduction}
\label{sec:introduction}

The decomposition of generalization errors into bias and variance~\citep{Geman:1992} 
is one of the most profound insights of 
learning theory. Bias is caused by low capacity of models when the training samples are assumed to be 
infinite, whereas variance is caused by overfitting to finite samples. In this article, we apply the analysis 
to a new set of problems in Compositional Distributional Semantics, which studies the calculation of meanings 
of natural language phrases by vector representations of their constituent words. We prove an upper 
bound for the bias of a widely used compositional framework, the 
additive composition~\citep{foltz98,landauer97,mitchell10}. 

Calculations of meanings are fundamental problems in Natural Language Processing (NLP). In recent years, 
vector representations have seen great success at conveying meanings of individual words \citep{levyTACL}. 
These vectors are constructed from statistics of contexts surrounding the words, based on the 
Distributional Hypothesis that words occurring in similar contexts tend to have similar meanings~\citep{harris54}. 
For example, given a target word $t$, one can consider its context as close neighbors of $t$ in a 
corpus, and assess the probability $p^t_{i}$ of the $i$-th word (in a fixed lexicon) occurring in the context of $t$. 
Then, the word $t$ is represented by a vector $\bigl(F(p^t_{i})\bigr)_i$ (where $F$ is some function), 
and words with similar meanings to $t$ will have similar vectors \citep{miller91}. 

Beyond the word level, a naturally following challenge is to represent meanings of phrases or 
even sentences. 
Based on the Distributional Hypothesis, it is generally believed that vectors should be constructed from surrounding contexts, at least for phrases observed in a corpus 
\citep{boleda-et-al:2013:IWCS2013}. 
However, 
%, it is expected that at least for short phrases, the 
%meanings can still be represented by vectors constructed from surrounding contexts. However, 
a main obstacle here is that phrases are far more sparse than individual words. For example, 
in the British National Corpus (BNC) \citep{bnc}, which consists of 100M word tokens, 
a total of 16K lemmatized words are observed more than 200 times, but there are only 
46K such bigrams, far less than the $16000^2$ possibilities for two-word combinations. 
Be it a larger corpus, one might only observe more rare words due to Zipf's Law, so 
most of the two-word combinations will always be rare or unseen. 
Therefore, 
%In other words, there are too few training samples for even two-word phrases. Therefore, 
a direct estimation of the surrounding contexts of a phrase can have large sampling error. 
This partially fuels the motivation to construct phrase vectors from combining word vectors 
\citep{mitchell10}, 
which also bases on the linguistic intuition that meanings of phrases are ``composed'' from 
meanings of their 
constituent words. In view of machine learning, word vectors 
have smaller sampling errors, or lower variance since words are more abundant than phrases. 
Then, a compositional framework which calculates meanings from word vectors 
will be favorable if its bias is also small. 

Here, ``bias'' is the distance between two types of phrase vectors, one calculated from composing 
the vectors of constituent words (composed vector), 
and the other assessed from context statistics where the phrase is treated as a target (natural vector). 
The statistics is assessed from an infinitely large \emph{ideal corpus}, so that the natural vector of the phrase 
can be 
reliably estimated without sampling error, hence conveying the meaning of the phrase by Distributional 
Hypothesis. If the distance between the two vectors is small, the composed vector can be viewed as a 
reasonable approximation of 
the natural vector, hence an approximation of meaning; moreover the composed vector can be more reliably 
estimated from finite \emph{real corpora} because words are more abundant than phrases. 
Therefore, an upper bound for the bias will provide a learning-theoretic support for the composition operation. 

A number of compositional frameworks have been proposed in the literature 
\citep{baroni-zamparelli10,grefenstette-sadrzadeh:2011:EMNLP,socher12,paperno-pham-baroni14,hashimoto-EtAl:2014:EMNLP2014}. Some are complicated methods based on linguistic intuitions~\citep{coecke10}, 
and others 
%. Some are complicated methods 
%that rely on the underlying linguistic structures of 
%phrases~\citep{baroni-zamparelli10,socher12,paperno-pham-baroni14}, 
%which are rationalized by linguistic intuitions~\citep{coecke10}. Others 
are compared to human judgments for evaluation~\citep{mitchell10}. However, none of them has been 
previously analyzed regarding their bias\footnote{Unlike natural vectors which always lie in the same space as 
word vectors, some compositional frameworks construct meanings of phrases in different spaces. Nevertheless, 
we argue that even in such cases it is reasonable to require some mappings to a common space, 
because humans can usually compare meanings of a word and a phrase. 
Then, by considering distances between mapped images of composed vectors and natural vectors, 
we can define bias and call for theoretical analysis.}. 
The most widely used framework is the 
additive composition~\citep{foltz98,landauer97}, in which the composed vector is calculated by 
averaging word vectors. Yet, it was unknown if this average is by any means related to 
statistics of contexts surrounding the corresponding phrases. 

In this article, we prove an upper bound for the bias of additive composition of two-word phrases, 
and demonstrate several applications of the theory. An overview is given in Figure~\ref{fig:posting}; 
we summarize as follows.

%The summary is as follows. 
%(Figure~\ref{fig:posting}). 

%We expect this work to inspire further 
%theoretical research on composition operations, which would be important for 
%the pursuit of better compositional frameworks, as empirical evaluations are 
%already considerably complicated~\citep{dinu-pham-baroni:2013:CVSC}. 
%We summarize our work as follows (Figure~\ref{fig:posting}). 

\begin{figure}[t]
\centering
\includegraphics[scale=0.45,bb=0 0 940 340,clip]{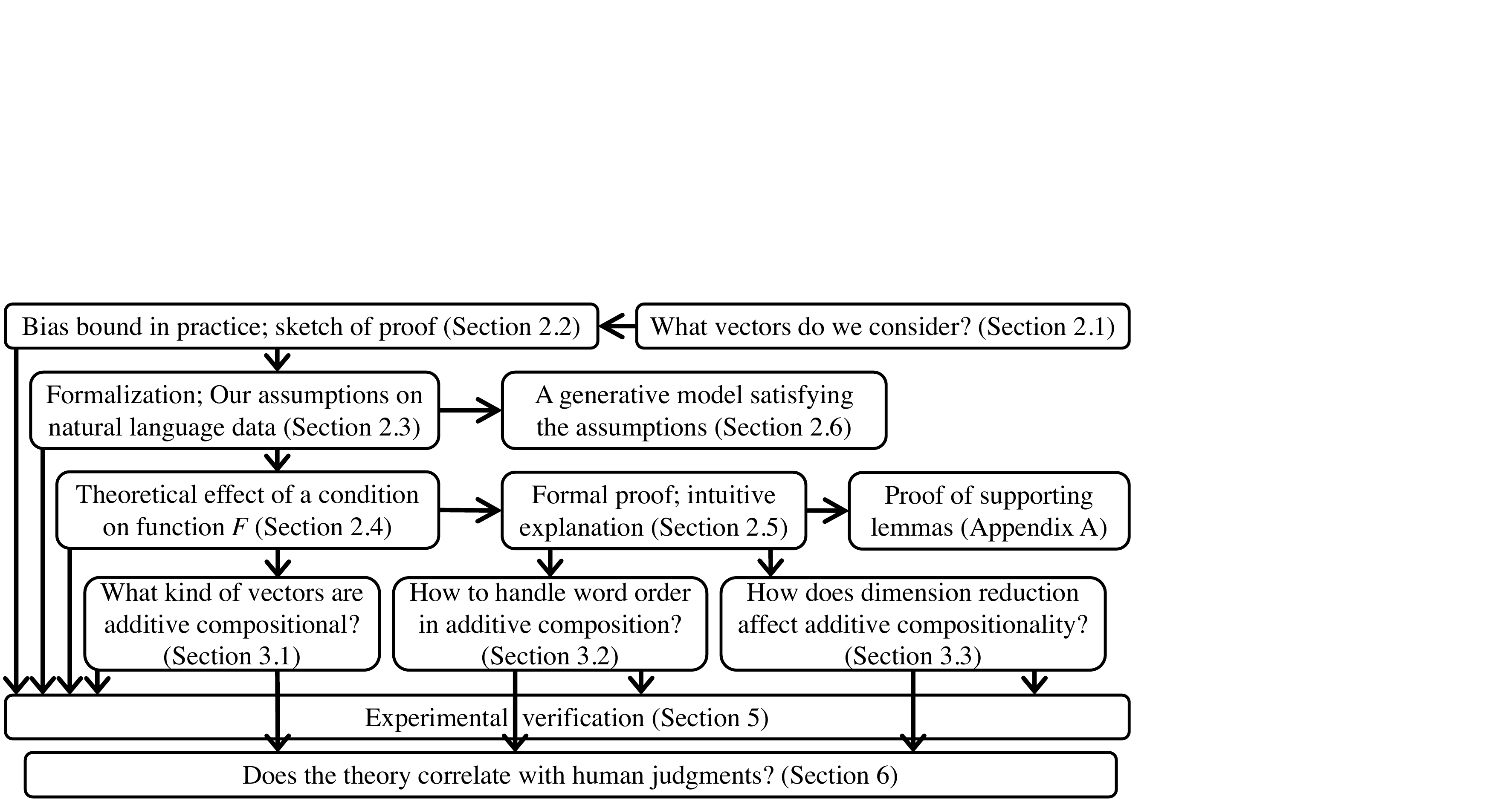}
\caption{An overview of this article. Arrows show dependencies between sections.}
\label{fig:posting}
\end{figure}

%We expect the result to inspire more theoretical 
%analysis on other composition operations, which would be important for pursuing better compositional 
%frameworks, given that empirical evaluation is 
%%paper, we show that the bias of additive composition of two-word phrases is 
%%bounded from above (\Cref{sec:formalization}). 
%%This result provides a learning-theoretical support for additive composition, and we expect it to inspire more 
%%theoretical analysis on other composition operations. Such theories would be important for 
%%the pursuit of better compositional frameworks, given that the empirical evaluation of compositional 
%%frameworks is 
%already considerably complicated, due to the different choices of word vectors, 
%composition operations, and methods for estimating parameters~\citep{dinu-pham-baroni:2013:CVSC}. 
%What is worse, newly proposed word vectors may not necessarily conform to previously established 
%composition operations. Therefore, theories would be useful for guiding the empirical research. 

In Section~\ref{sec:notation}, we introduce notations and define the vectors we consider in this work. 
Roughly speaking, we use $p^\Upsilon_{i}$ to denote the probability of the $i$-th word in a fixed lexicon 
occurring within a context window of a target (i.e.~a word or phrase) $\Upsilon$, and define the $i$-th 
entry of the \emph{natural vector} as $\mathbf{w}^\Upsilon:=\bigl(c\cdot w^\Upsilon_{i}\bigr)$ and 
$$
w^\Upsilon_{i}:=F(p^\Upsilon_{i}\!+\!1/n)-a^\Upsilon-b_{i}. 
$$
Here, $n$ is the lexicon size, $a^\Upsilon$, $b_{i}$ and $c$ are real numbers and $F$ is a function. 
We note that the formalization is general enough to be compatible with several previous research. 

In Section~\ref{sec:biaspractical}, we describe our bias bound for additive composition, sketch its proof, 
and emphasize its 
%focusing on its 
practical consequences that can be tested on a natural language corpus. Briefly, we show that 
the more exclusively two successive 
words tend to occur together, the more accurate one can guarantee their additive composition as an 
approximation to the natural phrase vector; but this guarantee comes with one condition that 
$F$ should be a function that decreases steeply around $0$ and grows slowly at $\infty$; and when 
such condition is satisfied, one can derive an additional property that all natural vectors have approximately 
the same norm. 
These consequences are all experimentally verified in Section~\ref{sec:expfunctionF}. 

In Section~\ref{sec:formalization}, we give a formalized version of the bias bound (Theorem~\ref{thm:main}), 
with our assumptions on natural language data clarified. These assumptions include the well-know Zipf's Law, 
a similar law applied to word co-occurrences which we call \emph{the Generalized Zipf's Law}, 
%a generalized version of the law applied to word co-occurrences, 
and some intuitively acceptable conditions. The assumptions are experimentally tested in 
Section~\ref{sec:indeptest} and Section~\ref{sec:genzipf}. 
%and their implications are experimentally verified in Section~\ref{sec:expveri}. 
Moreover, we show that the Generalized Zipf's Law can be drived from 
%holds if the data is generated by 
a widely used generative model for natural language (Section~\ref{sec:pitman}). 

In Section~\ref{sec:effectF}, we prove some key lemmas regarding the aforementioned condition on 
function $F$; in Section~\ref{sec:sketch} we formally prove the bias bound (with some supporting lemmas 
proven in  Appendix~\ref{app:fullproof}), 
and further give an intuitive explanation 
for the strength of additive composition: namely, with two words given, the vector of each can be 
decomposed into two parts, one encoding the contexts shared by both words, and the other encoding contexts 
not shared; 
when the two word vectors are added up, the non-shared part of each of them tend to cancel out, 
because non-shared parts have nearly independent distributions; as a result, the shared part gets 
reinforced, which is coincidentally encoded by the natural phrase vector.

Empirically, we demonstrate three applications of our theory:
\begin{enumerate}
\item The condition required to be satisfied by $F$ provides a unified explanation on why some recently 
proposed word vectors are good at additive composition (Section~\ref{sec:functionF}). 
%suggests a unified theoretical explanation for the additive 
%compositionality of several recently proposed models (Section~\ref{sec:functionF}). 
Our experiments 
also verify that the condition drastically affects additive compositionality and other properties of 
vector representations (Section~\ref{sec:expfunctionF}, Section~\ref{sec:exteval}). 
\item Our intuitive explanation inspires a novel method for 
making vectors recognize word order, which was long thought as an issue for additive composition. 
Briefly speaking, since additive composition cancels out non-shared parts of word vectors and reinforces the 
shared one, we show that one can use labels on context words to control what is shared. In this case, 
we propose the \emph{Near-far Context} in which the contexts of \emph{ordered} bigrams are shared 
(Section~\ref{sec:wordorder}). 
Our experiments show that the resulting vectors can indeed assess meaning similarities between ordered 
bigrams (Section~\ref{sec:expnearfar}), and demonstrate strong performance on phrase similarity tasks 
(Section~\ref{sec:phrasesim}). 
Unlike previous approaches, Near-far Context still composes vectors by taking average, 
retaining the merits of being parameter-free and having a bias bound. 
\item Our theory suggests that Singular Value Decomposition (SVD) is suitable for 
preserving additive compositionality in dimension reduction of word vectors 
(Section~\ref{sec:dimensionreduction}). 
Experiments also show that SVD might perform better than other models in additive composition 
(Section~\ref{sec:expdimred}, Section~\ref{sec:exteval}). 
\end{enumerate}

We discuss related works in Section~\ref{sec:relwork} and conclude in Section~\ref{sec:conclusion}. 

\section{Theory}
\label{sec:theory}

In this section, we discuss vector representations constructed from an ideal natural language corpus, and 
establish a mathematical framework for analyzing additive composition. 
Our analysis makes several assumptions on the ideal corpus, which might be approximations or 
oversimplifications of real data. In 
Section~\ref{sec:expveri}, we will test these assumptions on a real corpus and verify that the 
theory still makes reasonable predictions. 

\subsection{Notation and Vector Representation}
\label{sec:notation}

A natural language corpus is a sequence of words. Ideally, we assume that the sequence is infinitely long 
and contains an infinite number of distinct words. 

\begin{notation}
\label{nota:inp}
We consider a finite sample of the infinite ideal corpus. In this sample, we denote the number of 
distinct words by $n$, and use the $n$ words as a lexicon to construct vector representations. 
From the sample, we assess the count $C_i$ of the $i$-th word in the lexicon, and assume that index 
$1\leq i\leq n$ is taken such that $C_{i}\geq C_{i+1}$. Let $C:=\sum_{i=1}^{n}C_{i}$ be the total 
count, and denote $p_{i,n}:=C_i/C$. 
%In practice, only finite restrictions of the ideal corpus are observed. We denote the observed number of 
%distinct words by $n$, and use the $n$ words as a lexicon to construct vector representations. Then, we will 
%consider the limit $n\rightarrow\infty$. Let $p_{i,n}$ be the observed probability of the $i$-th word $(1\leq i\leq n)$. 
%Assume $i$ is taken such that $p_{i,n}\geq p_{i+1, n}$. 
%%Let $p_i$ $(i\in\mathbb{N})$ be the occurrence probability of the $i$-th word. Assume the index $i$ is taken 
%%such that $p_i\geq p_{i+1}$. In practice, only the top $n$ words are observed in a finite restriction of the 
%%ideal corpus, so we use the $n$ words as a lexicon to construct vector representations. Then, we will 
%%consider the limit $n\rightarrow\infty$. 
\end{notation}

\begin{table}[t]
\centering
\textit{\ldots as a percentage of your income, your \textbf{tax rate} is generally less than that \ldots}
\setlength{\tabcolsep}{12pt}
\begin{tabular}{| c | l |}
\hline
 target & \multicolumn{1}{ c |}{words in context} \\
\hline
 \textit{tax} & \textit{percentage, of, your, income, your, rate, is, generally, less, than} \\
 \textit{rate} & \textit{of, your, income, your, tax, is, generally, less, than, that} \\
 \textit{tax\_rate} & \textit{of, your, income, your, is, generally, less, than} \\
\hline
\end{tabular}
\caption{Contexts are taken as the closest five words to each side for the targets ``\textit{tax}'' and ``\textit{rate}'', 
and four for the target ``\textit{tax\_rate}''.} 
\label{tab:context}
\end{table}

With a sample corpus given, we can construct vector representations for \emph{targets}, which are either words or 
phrases. To define the vectors one starts from specifying a \emph{context} for each target, which is usually taken as 
words surrounding the target in corpus. As an example, Table~\ref{tab:context} shows a word sequence, 
a phrase target and two word targets; contexts are taken as the closest four or five words to the targets. 

\begin{notation}
\label{nota:targetgen}
We use $s$, $t$ to denote word targets, and $st$ a phrase target consisting 
of two consecutive words $s$ and $t$. When the word order is ignored (i.e., either $st$ or 
$ts$), we denote the target by $\{st\}$. A general target is denoted by $\Upsilon$. 
Later in this article, we will consider other types of targets as well, and a full list of target types is shown 
in Table~\ref{tab:targettypes}.
\end{notation}

\begin{notation}
\label{nota:conp}
Let $C(\Upsilon)$ be the count of target $\Upsilon$, and $C_i^\Upsilon$ the count of $i$-th word co-occurring 
in the context of $\Upsilon$. Denote $p^\Upsilon_{i,n}:=C_i^\Upsilon/C(\Upsilon)$. 
%Let $p^\Upsilon_{i,n}$ denote the co-occurrence probability of the $i$-th word, conditioned on it being in the context of $\Upsilon$ and the lexicon size being $n$. 
\end{notation}

In order to approximate the ideal corpus, we will take a sample larger and larger, then consider the 
limit. Under this limit, it is obvious that $n\rightarrow\infty$ and $C\rightarrow\infty$. 
Further, we will assume some limit properties on $p_{i,n}$ and $p^\Upsilon_{i,n}$ as specified in 
Section~\ref{sec:formalization}. These properties capture our idealization of an infinitely large natural 
language corpus. In Section~\ref{sec:pitman}, we will show that such properties can be derived from a 
Hierarchical Pitman-Yor Process, a widely used generative model for natural language data. 

\begin{defn}
\label{defn:wvec}
We construct a \emph{natural vector} for $\Upsilon$ from the statistics $p^\Upsilon_{i,n}$ as follows: 
%vectors of the form 
%In this work, we consider vector representations of the form 
$$
\mathbf{w}^\Upsilon_{n}:=\bigl(c_n\cdot w^\Upsilon_{i,n}\bigr)_{1\leq i\leq n} \quad\text{and}\quad w^\Upsilon_{i,n}:=F(p^\Upsilon_{i,n}\!+\!1/n)-a^\Upsilon_n-b_{i,n}. 
$$
Here, $a^\Upsilon_n$, $b_{i,n}$ and $c_n$ are real numbers and $F$ is a smooth function on $(0,\infty)$. 
The subscript $n$ emphasizes that the vector will change if $n$ becomes larger (i.e.~a larger sample 
corpus is taken). 
The scalar $c_n$ is for normalizing scales of vectors.
In Section~\ref{sec:biaspractical}, we will further specify some conditions on $a^\Upsilon_n$, $b_{i,n}$, $c_n$ 
and $F$, but without much loss of generality. 
\end{defn}

To consider $F(p^\Upsilon_{i,n}\!+\!1/n)$ instead of $F(p^\Upsilon_{i,n})$ can be viewed as a smoothing 
scheme that guarantees 
$F(x)$ being applied to $x>0$. We will consider $F$ that is not continuous at $0$, such as $F(x):=\ln{x}$; 
yet, $w^\Upsilon_{i,n}$ has to be well-defined even if $p^\Upsilon_{i,n}=0$. In practice, 
the $p^\Upsilon_{i,n}$ estimated from a finite corpus can often be $0$; theoretically, 
the smoothing scheme plays a role in our proof as well. 

The definition of $\mathbf{w}^\Upsilon_{n}$ is general enough to cover a wide range of previously proposed 
distributional word vectors. 
For example, if $F(x)=\ln{x}$, $a^\Upsilon_n=0$ and $b_{i,n}=\ln{p_{i,n}}$, then $w^\Upsilon_{i,n}$ is the 
Point-wise Mutual Information (PMI) value that has been widely adopted in 
NLP~\citep{Church:1990,Dagan:1994,Turney:2001,Turney:2010}. More recently, 
the Skip-Gram with Negative Sampling (SGNS) model \citep{word2vecNIPS} is shown to be a matrix factorization 
of the PMI matrix \citep{levyNIPS}; and the more general form of $a^\Upsilon_n$ and $b_{i,n}$ is 
explicitly introduced 
by the GloVe model \citep{pennington-socher-manning14}. Regarding other forms of $F$, it has been reported in 
\citet{HellingerPCA:EACL} and \citet{stratos-collins-hsu:2015:ACL-IJCNLP} that empirically 
$F(x):=\sqrt{x}$ outperforms $F(x):=x$. We will discuss function $F$ further in Section~\ref{sec:functionF}, 
and review some other distributional vectors in Section~\ref{sec:relwork}.

We finish this section by pointing to Table~\ref{tab:notations} for a list of frequently used notations.

\subsection{Practical Meaning of the Bias Bound}
\label{sec:biaspractical}

\begin{table}[t]
\centering
\renewcommand{\arraystretch}{1.2}
\begin{tabular}{ c | c  p{8.2cm} }
\hline
Notation~\ref{nota:targetgen} & $\Upsilon$ & a general target can denote either of the following: \\
\hline
Notation~\ref{nota:targetgen} & $s,t$ & word targets \\
\hline
Notation~\ref{nota:targetgen} & $st$ & two-word phrase target \\
\hline
Notation~\ref{nota:targetgen} & $\{st\}$ & two-word phrase target with word order ignored \\
\hline
Definition~\ref{defn:collo} & $s/t\backslash s$ & a token of word $t$ not next to the word $s$ in corpus \\
\hline
Definition~\ref{defn:nflabel} & $s\bullet,\bullet t$ & words in the context of $s\bullet$ (resp.~$\bullet t$) are 
assigned the left-hand-side (resp.~right-hand-side) Near-far labels \\
\hline
Definition~\ref{defn:nfdecomp} & ${s\bullet}\backslash t,s/{\bullet t}$ & a target $s\bullet$ (resp.~$\bullet t$) not at the left 
(resp.~right) of word $t$ (resp.~$s$) \\
\hline
Theorem~\ref{thm:main} & $S,T$ & random word targets \\
\hline
General & \multicolumn{2}{ c }{random word targets can form different types such as $\{ST\}$ and 
$S/T\backslash S$} \\
\hline
\end{tabular}
\caption{List of target types} 
\label{tab:targettypes}
\end{table}

A compositional framework combines vectors $\mathbf{w}^s_{n}$ and $\mathbf{w}^t_{n}$ to represent the 
meaning of phrase ``\textit{s t}''. In this work, we study relations between this \emph{composed vector} and 
the natural vector $\mathbf{w}^{\{st\}}_{n}$ of the phrase target\footnote{Or it should be $\mathbf{w}^{st}_{n}$ 
if one cares about word order, which we will discuss in Section~\ref{sec:wordorder}.}. More precisely, we study 
the Euclidean distance 
$$\lim_{n\rightarrow\infty} \lVert \mathbf{w}^{\{st\}}_{n}-\COMP(\mathbf{w}^s_{n}, \mathbf{w}^t_{n})\rVert$$
where $\COMP(\cdot,\cdot)$ is the composition operation. If a sample corpus is taken larger and larger, 
we have limit $n\rightarrow\infty$, and $\mathbf{w}^{\{st\}}_{n}$ will be well estimated to represent the 
meaning of ``\textit{s t} or \textit{t s}''.
%At limit $n\rightarrow\infty$, the corpus is 
%infinitely large, so $\mathbf{w}^{\{st\}}_{n}$ can be well estimated to represent the meaning of 
%``\textit{s t} or \textit{t s}''. 
Therefore, the above distance can be viewed as the bias of approximating 
$\mathbf{w}^{\{st\}}_{n}$ by the composed vector $\COMP(\mathbf{w}^s_{n}, \mathbf{w}^t_{n})$. 
In practice, especially when $\COMP$ is a complicated operation with parameters, 
it has been a widely adopted approach to learn the parameters 
by minimizing the same distances for phrases observed in 
corpus~\citep{dinu-pham-baroni:2013:CVSC,baroni-zamparelli10,guevara:2010:GEMS}. 
These practices further motivate our study on the bias. 

\begin{defn}
\label{defn:error}
We consider \emph{additive composition}, where 
$\COMP(\mathbf{w}^s_{n}, \mathbf{w}^t_{n}):=\frac{1}{2}(\mathbf{w}^s_{n}+\mathbf{w}^t_{n})$ 
is a parameter-free composition operator. 
We define 
$$\mathcal{B}^{\{st\}}_{n}:=\lVert \mathbf{w}^{\{st\}}_{n}-\frac{1}{2}(\mathbf{w}^s_{n}+\mathbf{w}^t_{n})\rVert.$$
\end{defn}

\begin{table}[t]
\centering
\setlength{\tabcolsep}{5pt}
\renewcommand{\arraystretch}{1.6}
\begin{tabular}{ c | c  p{8cm} }
\hline
Notation~\ref{nota:inp} & $i$, $n$ & index $1\leq i \leq n$, where $n$ is the lexicon size \\
\hline
Notation~\ref{nota:inp} & $p_{i,n}$ & empirical probability of the $i$-th word, $p_{i,n}\geq p_{i+1,n}$ \\
\hline\\[-14pt]
Notation~\ref{nota:conp} & $p^\Upsilon_{i,n}$ & {probability of the $i$-th word co-occurring in context of 
$\Upsilon$; defined as $p^\Upsilon_{i,n}:=C_i^\Upsilon/C(\Upsilon)$.} 
%conditioned on it being in the 
%context of $\Upsilon$ and lexicon size being $n$} 
\\
\hline\\[-14pt]
Definition~\ref{defn:wvec} & \multicolumn{2}{ c }{$
\mathbf{w}^\Upsilon_{n}:=\bigl(c_n\cdot w^\Upsilon_{i,n}\bigr)_{1\leq i\leq n}, \quad\text{where}\quad w^\Upsilon_{i,n}=F(p^\Upsilon_{i,n}\!+\!1/n)-a^\Upsilon_n-b_{i,n}
%\mathbf{w}_{\tau,n}:=c_n\bigl( F(p_{\tau,i}\!+\!1/n)+a_\tau+b_i  \bigr)_{1\leq i\leq n}
$} \\[2pt]
\hline\\[-14pt]
Definition~\ref{defn:error} & \multicolumn{2}{ c }{$\mathcal{B}^{\{st\}}_{n}:=\lVert \mathbf{w}^{\{st\}}_{n}-\frac{1}{2}(\mathbf{w}^s_{n}+\mathbf{w}^t_{n})\rVert$} \\%[2pt]
\hline
Definition~\ref{defn:collo} & $\pi_{s/t\backslash s}$ & probability for an occurrence of $t$ being 
non-neighbor of $s$ \\
\hline
Definition~\ref{defn:cdef} & $\Lambda_n$ & set of observed two-word phrases, word order ignored \\
\hline
General & $\mathbb{E}[\cdot],\var[\cdot]$ & expected value and variance of a random variable \\
\hline
General & $I_{\mathcal{H}}$ & indicator; $I_{\mathcal{H}}=1$ if condition $\mathcal{H}$ is true, 0 otherwise \\
\hline
General & $\mathbb{P}(\mathcal{H})$ & probability of $\mathcal{H}$ being true; 
$\mathbb{P}(\mathcal{H})=\mathbb{E}[I_{\mathcal{H}}]$ \\
\hline
General & $\lambda,\beta,\xi,\ldots$ & lowercase Greek letters denote real constants \\
%\hline
%General & $X,Y,Z,\ldots$ & capital letters denote random variables \\
\hline\\[-14pt]
Theorem~\ref{thm:main} & \multicolumn{2}{ c }{$X:=p^\Upsilon_{i,n}/p_{i,n}$ where $\Upsilon:=$ $\{ST\}$, 
$S/T\backslash S$, or $T/S\backslash T$.} \\
\hline\\[-14pt]
Lemma~\ref{lem:ycalc} & \multicolumn{2}{ c }{$Y_{i,n}:=F(p_{i,n}X\!+\!1/n)-F(p_{i,n}\beta\!+\!1/n)$} \\
\hline\\[-14pt]
Lemma~\ref{lem:ycalc} & \multicolumn{2}{ c }{$\varphi_{i,n}:=(p_{i,n})^{2\lambda}\bigl(1+(\beta np_{i,n})^{-1}\bigr)^{-1+2\lambda}$  } \\
\hline%\\[-12pt]
%Notation~\ref{...} & \multicolumn{2}{ c }{$e_{i,n}:=\mathbb{E}\bigl[F(p_{\tau,i}\!+\!1/n)\bigr]$ where 
%$\tau:=\{ST\}$ or $\tau:=S/T\backslash S$} \\[2pt]
%\hline\\[-12pt]
%Definition~\ref{defn:randomvarX} & \multicolumn{2}{ c }{$X_{i,n}:=\dfrac{p_{\tau,i}\!+\!1/n}{p_{i}\!+\!1/n}$ where 
%$\tau:=\{ST\}$ or $\tau:=S/T\backslash S$} \\[8pt]
%\hline\\[-12pt]
%Theorem~\ref{thm:main} & \multicolumn{2}{ c }{$Y_{i,n}:=\dfrac{F(p_{\tau,i}\!+\!1/n)-F(p_{i}\!+\!1/n)}{(p_{i}\!+\!1/n)^\lambda}$ where $\tau:=\{ST\}$ or $\tau:=S/T\backslash S$} \\[8pt]
%\hline
\end{tabular}
\caption{Frequently used notations and general conventions in this article} 
\label{tab:notations}
\end{table}

Our analysis starts from the observation that, every word in the context of $\{st\}$ also occurs in the contexts of 
$s$ and $t$: as illustrated in Table~\ref{tab:context}, 
if a word token $t$ (e.g.~``\textit{rate}'') comes from a phrase $\{st\}$ (e.g.~``\textit{tax rate}''), 
and if the context window size is not too small, the 
context for this token of $t$ is almost the same as the context of 
$\{st\}$. This motivates us to decompose the context of 
$t$ into two parts, one coming from $\{st\}$ and the other not. 
\begin{defn}
\label{defn:collo}
Define target $s/t\backslash s$ as the tokens of word $t$ which do not occur next to word $s$ in corpus. 
We use $\pi_{s/t\backslash s}$ to denote the probability of $t$ not occurring next to $s$, conditioned 
on a token of word $t$. Practically, $(1-\pi_{s/t\backslash s})$ can be estimated by the count ratio 
$C(\{st\})/C(t)$. 
Then, we have the following equation 
\begin{equation}
\label{eq:collo}
p^t_{i,n} =\pi_{s/t\backslash s}p^{s/t\backslash s}_{i,n} + (1 - \pi_{s/t\backslash s})p^{\{st\}}_{i,n} 
\quad\text{for all $i,n$} 
\end{equation}
because a word in the context of $t$ occurs in the context of either $\{st\}$ or $s/t\backslash s$. 
\end{defn}

We can view $\pi_{s/t\backslash s}$ and $\pi_{t/s\backslash t}$ as indicating how weak the 
``collocation'' between $s$ and $t$ is. 
When $\pi_{s/t\backslash s}$ and $\pi_{t/s\backslash t}$ are small, 
$s$ and $t$ tend to occur next to each other exclusively, so $\mathbf{w}^s_{n}$ and $\mathbf{w}^t_{n}$ are likely 
to correlate with $\mathbf{w}^{\{st\}}_{n}$, making $\mathcal{B}^{\{st\}}_{n}$ small. This is the fundamental idea 
of our bias bound, which estimates 
%and we will bound 
$\mathcal{B}^{\{st\}}_{n}$ in terms of $\pi_{s/t\backslash s}$ and $\pi_{t/s\backslash t}$. We give a 
detailed sketch below. 
First, by Triangle Inequality one immediately has 
$$
\mathcal{B}^{\{st\}}_{n}\leq\lVert \mathbf{w}^{\{st\}}_{n}\rVert +\frac{1}{2}\bigl(\lVert\mathbf{w}^s_{n}\rVert+\lVert\mathbf{w}^t_{n}\rVert\bigr).
$$
Then, we note that both $\mathcal{B}^{\{st\}}_{n}$ and $\lVert \mathbf{w}^{\Upsilon}_{n}\rVert$ scale with 
$c_n$. Without loss of generality, we can assume that $c_n$ is normalized such that the \emph{average} 
norm of $\mathbf{w}^{\Upsilon}_{n}$ equals $1$. Thus, if we can prove that 
$\lVert \mathbf{w}^{\Upsilon}_{n}\rVert=1$ for \emph{every} target $\Upsilon$, we will have an upper bound 
\begin{equation}
\label{eq:firstbound}
\mathcal{B}^{\{st\}}_{n}\leq 2. 
\end{equation}
This is intuitively obvious because if all vectors lie on the unit sphere, distances between them will be less than 
the diameter 2. 
In this article, we will show that roughly speaking, it is indeed that 
$\lVert \mathbf{w}^{\Upsilon}_{n}\rVert=1$ for ``every'' $\Upsilon$, and the above ``upper bound'' can 
further be strengthened using Equation \eqref{eq:collo}. 

More precisely, we will prove that if a target phrase $ST$ is \emph{randomly} chosen, then 
$\lim\limits_{n\rightarrow\infty}\lVert \mathbf{w}^{\{ST\}}_{n}\rVert$ converges to $1$ in probability. 
The argument is sketched as follows. First, when $ST$ is random, $p^{\{ST\}}_{i,n}$ and $w^{\{ST\}}_{i,n}$ 
become random variables. We assume that for each $i\neq j$, $p^{\{ST\}}_{i,n}$ and $p^{\{ST\}}_{j,n}$ are 
\emph{independent} random variables. Note this assumption in contrast to the fact that 
$p_{i,n}\geq p_{j,n}$ for $i>j$; nonetheless, we assume that 
$p^{\{ST\}}_{i,n}$ is random enough so that when $i$ changes, no obvious relation exists between 
$p^{\{ST\}}_{i,n}$ and $p^{\{ST\}}_{j,n}$. Thus, $\bigl(w^{\{ST\}}_{i,n}\bigr)^2$'s ($1\leq i\leq n$, $n$ fixed) are 
independent and we can apply the Law of Large Numbers:
\begin{equation}
\label{eq:roughlln}
\lim_{n\rightarrow\infty}\frac{\sum_{i=1}^n \bigl(w^{\{ST\}}_{i,n}\bigr)^2}{\sum_{i=1}^n \mathbb{E}\bigl[\bigl(w^{\{ST\}}_{i,n}\bigr)^2\bigr]}=1 \quad\text{in probability}.
\end{equation}
In words, the fluctuations of $\bigl(w^{\{ST\}}_{i,n}\bigr)^2$'s cancel out each other and their sum 
converges to expectation. However, Equation \eqref{eq:roughlln} requires a stronger statement than the ordinary 
Law of Large Numbers; namely, we do \emph{not} assume $p^{\{ST\}}_{i,n}$ and $p^{\{ST\}}_{j,n}$ are 
identically distributed\footnote{This is reasonable, because $p^{\Upsilon}_{i,n}$ is likely to be at the same 
scale as $p_{i,n}$, whereas $p_{i,n}$ varies for different $i$.}. 
For this generalized Law of Large Numbers we need some technical 
conditions. One necessary condition is $\lim\limits_{n\rightarrow\infty}\sum_{i=1}^n \mathbb{E}\bigl[\bigl(w^{\{ST\}}_{i,n}\bigr)^2\bigr]=\infty$, which we prove by explicitly calculating 
$\mathbb{E}\bigl[\bigl(w^{\{ST\}}_{i,n}\bigr)^2\bigr]$; another requirement is that the fluctuations of 
$\bigl(w^{\{ST\}}_{i,n}\bigr)^2$ must be at comparable scales so they indeed cancel out. 
This is formalized as a uniform integrability condition, and we will show it 
%We will show that this uniform integrability condition 
imposes a non-trivial constraint on the function 
$F$ in definition of word vectors. Finally if Equation \eqref{eq:roughlln} holds, by setting 
$c_n:=\sum_{i=1}^n \mathbb{E}\bigl[\bigl(w^{\{ST\}}_{i,n}\bigr)^2\bigr]$ we are done. 
%indeed $\lim\limits_{n\rightarrow\infty}\lVert \mathbf{w}^{\{ST\}}_{n}\rVert$ converges to $1$. 

We further strengthen upper bound \eqref{eq:firstbound} as follows. First, 
Equation \eqref{eq:collo} suggests: 
$$
F(p^t_{i,n}\!+\!1/n) \approx \pi_{s/t\backslash s}F(p^{s/t\backslash s}_{i,n}\!+\!1/n) + 
(1 - \pi_{s/t\backslash s})F(p^{\{st\}}_{i,n}\!+\!1/n). 
$$
Since $F$ is smooth, this equation can be justified as long as $p^{s/t\backslash s}_{i,n}$ and 
$p^{\{st\}}_{i,n}$ are small compared to $1/n$. 
Then, we will rigorously prove that, when $n$ is sufficiently large, the total error in the above 
approximation becomes infinitesimal:
$$
\mathbf{w}^t_{n} \risingdotseq \pi_{s/t\backslash s}\mathbf{w}^{s/t\backslash s}_{n} + 
(1 - \pi_{s/t\backslash s})\mathbf{w}^{\{st\}}_{n}. 
$$
So we can replace $\mathbf{w}^t_{n}$ and $\mathbf{w}^s_{n}$ in definition of 
$\mathcal{B}^{\{st\}}_{n}$: %and apply Triangle Inequality: 
\begin{equation}
\label{eq:infinitesimalB}
\mathcal{B}^{\{st\}}_{n}\risingdotseq\frac{1}{2}
\lVert(\pi_{s/t\backslash s}+\pi_{t/s\backslash t})\mathbf{w}^{\{st\}}_{n}
-\pi_{s/t\backslash s}\mathbf{w}^{s/t\backslash s}_{n}
-\pi_{t/s\backslash t}\mathbf{w}^{t/s\backslash t}_{n}\rVert. 
\end{equation}
With arguments similar to the previous paragraph, we have 
$\lim\limits_{n\rightarrow\infty}\lVert \mathbf{w}^{S/T\backslash S}_{n}\rVert$ and 
$\lim\limits_{n\rightarrow\infty}\lVert \mathbf{w}^{T/S\backslash T}_{n}\rVert$ converge to $1$ in probability. 
Therefore, by Triangle Inequality we get 
$$
\eqref{eq:infinitesimalB}\leq\frac{1}{2}\Bigl(
(\pi_{s/t\backslash s}+\pi_{t/s\backslash t})\lVert\mathbf{w}^{\{st\}}_{n}\rVert
+\pi_{s/t\backslash s}\lVert\mathbf{w}^{s/t\backslash s}_{n}\rVert
+\pi_{t/s\backslash t}\lVert\mathbf{w}^{t/s\backslash t}_{n}\rVert\Bigr) = \pi_{s/t\backslash s}+\pi_{t/s\backslash t}, 
$$
a better bound than \eqref{eq:firstbound}. 
However, our bias bound is even stronger than this. Our further argument goes to the intuition 
that, $\mathbf{w}^{s/t\backslash s}_{n}$ should have ``positive 
correlation'' with $\mathbf{w}^{\{st\}}_{n}$ because as targets, both $s/t\backslash s$ 
and $\{st\}$ contain word $t$; on the other hand, $\mathbf{w}^{s/t\backslash s}_{n}$ and 
$\mathbf{w}^{t/s\backslash t}_{n}$ should be ``independent'' because targets $s/t\backslash s$ and 
$t/s\backslash t$ cover disjoint tokens of different words. With this intuition, we will derive the 
following bias bound: 
%Therefore, we can show that the intuition leads to 
%the following bias bound: 
$$
\eqref{eq:infinitesimalB}\leq\frac{1}{2}\sqrt{(\pi_{s/t\backslash s}
+\pi_{t/s\backslash t})^2+\pi_{s/t\backslash s}^2+\pi_{t/s\backslash t}^2}=
\sqrt{\frac{1}{2}(\pi_{s/t\backslash s}^2+\pi_{t/s\backslash t}^2+\pi_{s/t\backslash s}\pi_{t/s\backslash t})}. 
$$
A brief explanation can be found in Section~\ref{sec:sketch}, after the formal proof. 
Our experiments suggest that this bound is remarkably tight (Section~\ref{sec:expfunctionF}). 
In addition, the intuitive explanation inspires a way to make additive composition 
aware of word order (Section~\ref{sec:wordorder}). 

In the rest of this section, we will formally normalize $c_n$, $a^\Upsilon_n$, $b_{i,n}$ and $F$ for simplicity of 
discussion. These are mild conditions and do not affect the generality of our results. Then, 
we will summarize our claim of the bias bound, focusing on its practical verifiability. 

\begin{defn}
\label{defn:cdef}
Let $\Lambda_n$ be the set of two-word phrases observed in a finite corpus, word order ignored. 
We normalize $c_n$ such that the average norm of natural phrase 
vectors becomes $1$: 
%For $c_n$, we note that it changes the scales of both $\mathcal{B}^{\{st\}}_{n}$ and 
%$\lVert \mathbf{w}^{\{st\}}_{n} \rVert$. So we set $c_n$ such that the average norm of natural phrase 
%vectors becomes $1$: 
%Since $c_n$ scales linearly with $\lVert \mathbf{w}_{\{st\},n} \rVert$, we can set $c_n$ such that 
%\begin{equation}
%\label{eq:cdef}
$$
\frac{1}{|\Lambda_n |}\sum_{\{st\}\in\Lambda_n}\lVert \mathbf{w}^{\{st\}}_{n} \rVert=1. 
$$
%\end{equation}
%so the average norm of $\mathbf{w}_{\{st\},n}$ is $1$. 
\end{defn}
\begin{defn}
\label{defn:bdef}
Since $b_{i,n}$ is canceled out in definition of $\mathcal{B}^{\{st\}}_{n}$, it does not 
affect the bias. It does affect $\mathbf{w}^{\{st\}}_{n}$; we 
set $b_{i,n}$ such that the centroid of natural phrase vectors becomes $\mathbf{0}$: 
\begin{equation}
\label{eq:bdef}
b_{i,n}:=\frac{1}{|\Lambda_n |}\sum_{\{st\}\in\Lambda_n}F(p^{\{st\}}_{i,n}\!+\!1/n)-a^{\{st\}}_{n}, 
\end{equation}
so that 
$$
\frac{c_n}{|\Lambda_n |}\sum_{\{st\}\in\Lambda_n}w^{\{st\}}_{i,n}=0\quad\text{for all $i$}. 
$$
\end{defn}

Note that, if the centroid of natural phrase vectors is far from $\mathbf{0}$, the normalization in 
Definition~\ref{defn:cdef} would cause all phrase vectors cluster around one point on the unit sphere. 
Then, the phrase vectors would not be able 
to distinguish different meanings of phrases. The choice of $b_{i,n}$ in Definition~\ref{defn:bdef} prevents 
such degenerated cases.

Next, if $c_n$ and $F$ are fixed, $\mathcal{B}^{\{st\}}_{n}$ is taking minimum at 
$$
a^{\{st\}}_{n}-\frac{a^s_n+a^t_n}{2}=\frac{1}{n}\sum_{i=1}^{n}F(p^{\{st\}}_{i,n}\!+\!1/n)-\frac{F(p^s_{i,n}\!+\!1/n)+F(p^t_{i,n}\!+\!1/n)}{2}. 
$$
Hence, it is favorable to have the above equality. 
We can achieve it by adjusting $a^\Upsilon_n$ such that the entries 
of each vector average to $0$. 
%make this equality hold. We achieve it as below. 
%set $a^{\{st\}}_{n}$, $a^s_n$ and $a^t_n$ such that the above holds. We achieve it as below. 
\begin{defn}
\label{defn:adef}
We set 
\begin{equation}
\label{eq:adef}
a^\Upsilon_n:=\frac{1}{n}\sum_{i=1}^n F(p^\Upsilon_{i,n}\!+\!1/n)-b_{i,n}, 
\end{equation}
so that 
$$
\frac{c_n}{n}\sum_{i=1}^n w^\Upsilon_{i,n}=0 \quad\text{for all $\Upsilon$}. 
$$
\end{defn}

Practically, one can calculate $a^\Upsilon_n$ and $b_{i,n}$ by first assuming $b_{i,n}=0$ in \eqref{eq:adef} to 
obtain $a^\Upsilon_n$, and then substitute $a^{\{st\}}_n$ in \eqref{eq:bdef} to obtain the actual $b_{i,n}$. 
The value of $a^\Upsilon_n$ will not change because if all vectors have average entry $0$, so dose their centroid. 
In Section~\ref{sec:effectF}, we will derive asymptotic values of $a^\Upsilon_n$, $b_{i,n}$ and $c_n$ 
theoretically. 

\begin{defn}
\label{defn:fdef} We assume there is a $\lambda$ such that $F'(x)=x^{-1+\lambda}$. 
So $F(x)$ can be either $x^\lambda/\lambda$ (if $\lambda\neq 0$) or $\ln x$ (if $\lambda=0$). 
This assumption is mainly for simplicity; 
%For simplicity, we consider $F$ such that $F'(x)=x^{-1+\lambda}$ in this article. So 
%$F(x)$ can be $x^\lambda/\lambda$ (where $\lambda\neq 0$) or $\ln x$ (where $\lambda=0$). 
%But 
intuitively, behavior of $F(x)$ only matters at $x\approx 0$, 
because $F$ is applied to probability values which are close to $0$. 
Indeed, our results can be generalized to $G(x)$ such that 
$$
\lim_{x\rightarrow 0}G'(x)x^{1-\lambda}=1\quad\text{ and }\quad G'(x)x^{1-\lambda}\leq M \text{ 
for some constant $M$}. 
$$
See Remark~\ref{rem:extendG} in Section~\ref{sec:formalization} for further discussion. 
The exponent $\lambda$ turns out to be a crucial factor in our theory; we require $\lambda<0.5$, which 
imposes a non-trivial constraint on $F$. 
%For simplicity, we consider $F$ such that 
%$$
%
%$$
%in this article. So $F(x)$ can be of the forms $x^\lambda/\lambda$ or $\ln x$
%
%
%
%Since $F(x)$ is applied to probability values close to $0$, it is conceivable that the asymptotic behavior 
%of $F(x)$ at $x\rightarrow0$ is crucial to  vector representations. We characterize the behavior by $\lambda$, 
%which is defined such that 
%\begin{equation}
%\label{eq:flambda}
%\lim_{x\rightarrow 0}\frac{F'(x)}{x^{\lambda-1}}=1. 
%\end{equation}
%Note that, once a finite non-zero limit of $F'(x)/x^{\lambda-1}$ exists, $\lambda$ is uniquely determined; 
%and it will not lose generality to scale the limit to $1$, because scales of vectors are normalized by $c_n$. 
%Assuming existence of the limit only imposes mild constraints on $F$. 
\end{defn}

Our bias bound is summarized as follows.
%Now, we can summarize our bias bound. 
\begin{clm}
\label{claim:biasbound}
Assume $\lambda< 0.5$, the factors $a^\Upsilon_n$, $b_{i,n}$ and $c_n$ are normalized as above, and 
distributional vectors are constructed from an ideal natural language corpus. Then: 
$$
\lim_{n\rightarrow\infty}\mathcal{B}^{\{st\}}_{n}\leq
\sqrt{\frac{1}{2}(\pi_{s/t\backslash s}^2+\pi_{t/s\backslash t}^2+\pi_{s/t\backslash s}\pi_{t/s\backslash t})}. 
$$
\end{clm}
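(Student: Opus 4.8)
The plan is to carry out the three-step program already sketched above, with the normalizations of Definitions~\ref{defn:cdef}--\ref{defn:adef} in force. \emph{Step 1: every natural vector asymptotically lies on the unit sphere.} The aim is to show that for a \emph{randomly} drawn target $\Upsilon$ of one of the types $\{ST\}$, $S/T\backslash S$, $T/S\backslash T$, one has $\lim_{n\to\infty}\lVert\mathbf{w}^\Upsilon_n\rVert=1$ in probability. Under the independence assumption on the ideal corpus the random variables $(w^\Upsilon_{i,n})^2$ ($1\le i\le n$, $n$ fixed) are independent but not identically distributed, so I would establish and invoke a generalized (non-i.i.d.) Law of Large Numbers giving $\sum_{i=1}^n(w^\Upsilon_{i,n})^2\big/\sum_{i=1}^n\mathbb{E}[(w^\Upsilon_{i,n})^2]\to1$. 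Two hypotheses must be checked: first, $\sum_{i=1}^n\mathbb{E}[(w^\Upsilon_{i,n})^2]\to\infty$, obtained from an explicit asymptotic evaluation of $\mathbb{E}[(w^\Upsilon_{i,n})^2]$ built on $F'(x)=x^{-1+\lambda}$ — this is where the auxiliary quantities $Y_{i,n},\varphi_{i,n}$ of Lemma~\ref{lem:ycalc} are used and where the Generalized Zipf's Law governing $p^\Upsilon_{i,n}/p_{i,n}$ enters; second, a uniform-integrability (Lindeberg-type) condition ensuring that the fluctuations of the individual $(w^\Upsilon_{i,n})^2$ are negligible against their sum. Choosing $c_n$ accordingly, consistently with Definition~\ref{defn:cdef}, then forces $\lVert\mathbf{w}^\Upsilon_n\rVert\to1$.

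\emph{Step 2: linearizing $F$ through the collocation identity.} From Equation~\eqref{eq:collo}, $p^t_{i,n}=\pi_{s/t\backslash s}p^{s/t\backslash s}_{i,n}+(1-\pi_{s/t\backslash s})p^{\{st\}}_{i,n}$; since each probability here is of the order of $p_{i,n}$, hence small against the smoothing term $1/n$ for all but a negligible set of indices, smoothness of $F$ gives coordinatewise $F(p^t_{i,n}+1/n)\approx\pi_{s/t\backslash s}F(p^{s/t\backslash s}_{i,n}+1/n)+(1-\pi_{s/t\backslash s})F(p^{\{st\}}_{i,n}+1/n)$ up to a second-order remainder. The work is to show that the \emph{aggregated} remainder, measured in the $c_n$-scaled vector norm, is $o(1)$: I would bound each coordinate error by a multiple of $|F''|$ near $1/n$ times $(p^{s/t\backslash s}_{i,n})^2+(p^{\{st\}}_{i,n})^2$, sum over $i$, and compare against $c_n$ using the same $\lambda<0.5$ moment estimates as in Step~1. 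This yields $\mathbf{w}^t_n\risingdotseq\pi_{s/t\backslash s}\mathbf{w}^{s/t\backslash s}_n+(1-\pi_{s/t\backslash s})\mathbf{w}^{\{st\}}_n$ and its analogue for $s$, hence the reduction to~\eqref{eq:infinitesimalB}.

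\emph{Step 3: expanding the squared norm.} I would expand $\lVert(\pi_{s/t\backslash s}+\pi_{t/s\backslash t})\mathbf{w}^{\{st\}}_n-\pi_{s/t\backslash s}\mathbf{w}^{s/t\backslash s}_n-\pi_{t/s\backslash t}\mathbf{w}^{t/s\backslash t}_n\rVert^2$ into norms and inner products. Step~1 supplies $\lVert\mathbf{w}^{\{st\}}_n\rVert^2,\lVert\mathbf{w}^{s/t\backslash s}_n\rVert^2,\lVert\mathbf{w}^{t/s\backslash t}_n\rVert^2\to1$; the two further ingredients are $\langle\mathbf{w}^{s/t\backslash s}_n,\mathbf{w}^{\{st\}}_n\rangle\ge 0$ in the limit — a positive-association estimate reflecting that both targets contain the word $t$, which I would obtain from a covariance computation over the random choice of target — and $\langle\mathbf{w}^{s/t\backslash s}_n,\mathbf{w}^{t/s\backslash t}_n\rangle\to0$, asymptotic independence since the two targets cover disjoint token sets of different words. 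Discarding the non-positive cross terms and the vanishing one leaves $\lVert\cdots\rVert^2\le(\pi_{s/t\backslash s}+\pi_{t/s\backslash t})^2+\pi_{s/t\backslash s}^2+\pi_{t/s\backslash t}^2+o(1)$; taking square roots and letting $n\to\infty$ gives $\sqrt{\frac{1}{2}(\pi_{s/t\backslash s}^2+\pi_{t/s\backslash t}^2+\pi_{s/t\backslash s}\pi_{t/s\backslash t})}$, which is the claim.

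The main obstacle is the generalized Law of Large Numbers of Step~1, and inside it the uniform-integrability condition. This is precisely the point that breaks when $\lambda\ge 0.5$: the upper tail of $p^\Upsilon_{i,n}/p_{i,n}$ is then heavy enough that a few of the largest $(w^\Upsilon_{i,n})^2$ stay comparable with the whole sum, so the fluctuations no longer cancel and $\lVert\mathbf{w}^\Upsilon_n\rVert$ fails to concentrate. The remainder control in Step~2 and the positive-association estimate in Step~3 are also delicate, but they reduce to the same family of moment bounds on $F(p_{i,n}X+1/n)$.
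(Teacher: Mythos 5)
Your overall architecture is the same as the paper's: Step~1 is exactly the generalized Law of Large Numbers route (Lemma~\ref{lem:lln}, Corollary~\ref{cor:normone}, Theorem~\ref{thm:normalize}), and Step~3 is the paper's expansion of the squared norm, where the positive cross term and the vanishing cross term come from Assumption~(C) together with another application of Lemma~\ref{lem:lln} (note the paper does not \emph{derive} the positive association of $F(p^{S/T\backslash S}_{i,n}\!+\!1/n)$ and $F(p^{\{ST\}}_{i,n}\!+\!1/n)$ by a covariance computation; it is an assumption about the ideal corpus, only empirically checked, so your plan to ``obtain'' it should really be ``assume it'').

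The genuine gap is in Step~2. You propose to bound the coordinatewise linearization error by $\sup|F''|$ near $1/n$ times $(p^{s/t\backslash s}_{i,n})^2+(p^{\{st\}}_{i,n})^2$ and sum. This fails because of the very heavy tail that drives the whole analysis: writing $p^{\Upsilon}_{i,n}=p_{i,n}X$ with $\mathbb{P}(x\leq X)=\xi/x$, even $\mathbb{E}[X]$ is infinite, and the event $X\gtrsim (np_{i,n})^{-1}$ (i.e.\ $p^{\Upsilon}_{i,n}\gtrsim 1/n$, where your premise ``the probabilities are small against $1/n$'' breaks) has probability of order $np_{i,n}$ and carries a constant fraction of the second moment $\varphi_{i,n}$. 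On that event the quadratic remainder bound $n^{2-\lambda}(P_1-P_2)^2$ is of the same order as $F(p^{\Upsilon}_{i,n}\!+\!1/n)$ itself, so after summing over $i$ and multiplying by $c_n^2$ your bound is of constant order, not $o(1)$ --- no choice of ``negligible set of indices'' fixes this, since the problem is a per-index tail event, not the head indices. The paper's Lemma~\ref{lem:linear} avoids this by a two-regime argument: it first bounds the discrepancy by $\bigl(\tilde F(P_1)-\tilde F(P_2)\bigr)^2$ and discards the event where either $\tilde F(P_j)^2>N\varphi_{i,n}$ using the uniform integrability of Lemma~\ref{lem:ycalc}(c) (this is where $\lambda<0.5$, i.e.\ $\mathbb{E}[F(X)^2]<\infty$, is used), making that contribution at most $8\varepsilon\varphi_{i,n}$; on the complementary event it rescales by $n^{\lambda}$ and uses that the nonlinearity functional $J(u_1,u_2;\pi)$ has vanishing first-order partials at the origin, so $J^2=o(u_1^2+u_2^2)$ uniformly in $\pi$ as the truncated arguments (bounded by $\sqrt{Nnp_{i,n}}\to 0$) shrink. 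In other words, the linearization is only ever invoked where the rescaled arguments are uniformly small, and the heavy-tail region is controlled by uniform integrability rather than by any Taylor estimate; without this split your Step~2 does not go through.
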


As we expected, for more ``collocational'' phrases, since $\pi_{s/t\backslash s}$ and $\pi_{t/s\backslash t}$ are 
smaller, the bias bound becomes stronger. Claim~\ref{claim:biasbound} 
states a prediction that can be empirically tested on a real large corpus; namely, one can 
estimate $p^\Upsilon_{i,n}$ 
from the corpus and construct $\mathbf{w}^\Upsilon_{n}$ for a fixed $n$, then check if the inequality 
holds approximately while omitting the limit. 
In Section~\ref{sec:expfunctionF}, we conduct the experiment 
and verify the prediction. Our theoretical assumptions 
%In Section~\ref{sec:formalization}, we will specify the theoretical assumptions 
%we put 
on the ``ideal natural language corpus'' will be specified in Section~\ref{sec:formalization}. % to prove the claim as a mathematical theorem. 

Besides it being empirically verified for phrases observed in a real corpus, 
%Though it is empirically verified for phrases observed in a real corpus, 
the true value of Claim~\ref{claim:biasbound} is that the upper bound holds for an arbitrarily large ideal corpus. 
We can assume any plausible two-word phrase to occur sufficiently many times in the ideal corpus, even 
when it is unseen in the real one. In that case, a natural vector for the phrase can only be 
reliably estimated from the ideal corpus, but Claim~\ref{claim:biasbound} suggests that 
additive composition of word vectors provides a reasonable approximation for that unseen natural vector. 
Meanwhile, since word vectors \emph{can} be reliably estimated from the real corpus, Claim~\ref{claim:biasbound} 
endorses additive composition as a reasonable meaning representation for unseen or rare phrases. 
%the phrase because we know 
%that additive composition has bounded approximation error for its true distributional vector in the ideal corpus, 
%and meanwhile addition of word vectors \emph{can} be reliably estimated from the real one. Thus, 
%additive composition is suitable for unseen or rare phrases. 
On the other hand, it endorses additive composition for frequent 
phrases as well, because such phrases usually have strong collocations and Claim~\ref{claim:biasbound} says 
that the bias in this case is small. 

The condition $\lambda< 0.5$ on function $F$ is crucial; we discuss its empirical implications in 
Section~\ref{sec:functionF}. 

Further, the following is a by-product of 
Theorem~\ref{thm:normalize} in Section~\ref{sec:effectF}, which corresponds to the previous 
Equation \eqref{eq:roughlln} in our sketch of proof. 
\begin{clm}
\label{claim:norm}
Under the same conditions in Claim~\ref{claim:biasbound}, we have 
$\lim\limits_{n\rightarrow\infty}\lVert\mathbf{w}^{\{st\}}_{n}\rVert=1$ for all $\{st\}$. 
\end{clm}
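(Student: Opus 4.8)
The quantity to bound is $\lVert\mathbf{w}^{\{st\}}_n\rVert^{2}=c_n^{2}\sum_{i=1}^{n}\bigl(w^{\{st\}}_{i,n}\bigr)^{2}$, with $w^{\{st\}}_{i,n}=F(p^{\{st\}}_{i,n}\!+\!1/n)-a^{\{st\}}_n-b_{i,n}$; by Definition~\ref{defn:adef} the entries of $\mathbf{w}^{\{st\}}_n$ already average to $0$, so this sum is a variance-type quantity over the index $i$. The plan is to show that, for \emph{every} fixed phrase $\{st\}$ satisfying the ideal-corpus regularity, the inner sum is asymptotically equivalent to one and the same deterministic sequence $\Sigma_n$ that does not depend on $\{st\}$; the normalization of Definition~\ref{defn:cdef} then pins down $c_n^{2}\Sigma_n\to1$, which is precisely Claim~\ref{claim:norm}. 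This is the statement behind Equation~\eqref{eq:roughlln} and is established as Theorem~\ref{thm:normalize}.

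To get the asymptotic equivalence I would use the probabilistic device from the sketch: under the Generalized Zipf's Law (Theorem~\ref{thm:main}) together with the independence assumption on the co-occurrence probabilities, for a fixed phrase the ratios $X_i:=p^{\{st\}}_{i,n}/p_{i,n}$ behave as independent random variables across $i$ whose common distribution does not depend on which phrase was chosen, while $p_{i,n}$ is the deterministic Zipfian profile. Writing $F(p^{\{st\}}_{i,n}\!+\!1/n)=F(p_{i,n}X_i\!+\!1/n)$ and using $F'(x)=x^{-1+\lambda}$ (Definition~\ref{defn:fdef}), each $\bigl(w^{\{st\}}_{i,n}\bigr)^{2}$ becomes a function of the deterministic $p_{i,n}$ and the random $X_i$ — the shifts $a^{\{st\}}_n$ and $b_{i,n}$, being averages, converge to deterministic limits (computed in Section~\ref{sec:effectF}) and do not disturb the concentration. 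A generalized, non-identically-distributed Law of Large Numbers of Kolmogorov type then yields $\sum_{i=1}^{n}\bigl(w^{\{st\}}_{i,n}\bigr)^{2}\risingdotseq\Sigma_n$ with $\Sigma_n:=\sum_{i=1}^{n}\mathbb{E}\bigl[\bigl(w^{\{st\}}_{i,n}\bigr)^{2}\bigr]$, provided (i) $\Sigma_n\to\infty$ and (ii) a variance/uniform-integrability bound holds, roughly $\sum_{i=1}^{n}\var\bigl[\bigl(w^{\{st\}}_{i,n}\bigr)^{2}\bigr]=o(\Sigma_n^{2})$. Hypothesis (i) I would verify by linearizing $F(p_{i,n}X_i\!+\!1/n)\approx F(1/n)+F'(1/n)p_{i,n}X_i$ for the many indices with $p_{i,n}\ll1/n$, estimating directly where $p_{i,n}\gtrsim1/n$, and summing over $i$ with Zipf's Law — this is where the factor $\varphi_{i,n}=(p_{i,n})^{2\lambda}\bigl(1+(\beta np_{i,n})^{-1}\bigr)^{-1+2\lambda}$ enters and where $\lambda$ governs divergence.

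Finally, since $\Sigma_n$ does not depend on $\{st\}$, $\lVert\mathbf{w}^{\{st\}}_n\rVert^{2}\risingdotseq c_n^{2}\Sigma_n$ uniformly; averaging over $\Lambda_n$ and applying Definition~\ref{defn:cdef} gives $1=\tfrac1{|\Lambda_n|}\sum_{\{st\}\in\Lambda_n}\lVert\mathbf{w}^{\{st\}}_n\rVert\risingdotseq c_n\sqrt{\Sigma_n}$, so $c_n^{2}\Sigma_n\to1$ and hence $\lVert\mathbf{w}^{\{st\}}_n\rVert\to1$ for each $\{st\}$, including plausible phrases never observed in $\Lambda_n$ (the common value $\Sigma_n$ is what lets the conclusion extend past the observed set).

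I expect the main obstacle to be hypothesis (ii): controlling $\var\bigl[\bigl(w^{\{st\}}_{i,n}\bigr)^{2}\bigr]$, equivalently the fourth moments $\mathbb{E}\bigl[F(p_{i,n}X_i\!+\!1/n)^{4}\bigr]$, well enough that after summation they are negligible against $\Sigma_n^{2}$. The delicacy is that the terms with small $i$ (where $p_{i,n}$ is comparatively large and $F$ is evaluated away from its singularity at $0$) and the terms with large $i$ (where $p_{i,n}$ is tiny, the linearization of $F$ is accurate, but $X_i$ may be heavy-tailed) must both be tamed, and the interplay between the tails of $X_i$ and the growth of $F$ — since $F(x)\sim x^{\lambda}/\lambda$ makes $F^{2}\sim x^{2\lambda}$ — is exactly what produces the threshold $\lambda<0.5$: for $\lambda\ge0.5$ the normalized summands fail to be uniformly integrable and the concentration in Equation~\eqref{eq:roughlln} collapses. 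A secondary technicality is the self-consistency of $a^{\{st\}}_n$ and $b_{i,n}$, which are defined through averages of the same $F(p^{\cdot}_{i,n}\!+\!1/n)$ terms and must themselves be shown to converge before the concentration argument closes.
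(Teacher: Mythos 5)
Your overall architecture is the paper's: show $\sum_i\bigl(w^{\{st\}}_{i,n}\bigr)^2$ concentrates, for every phrase, on one phrase-independent sequence (the paper's $\eta\sum_i\varphi_{i,n}$, your $\Sigma_n$), then let the normalization of $c_n$ force the common limit to be $1$; this is exactly Corollary~\ref{cor:normone} and Theorem~\ref{thm:normalize}. The genuine gap is in the limit theorem you invoke to get the concentration. You propose a Kolmogorov-type law of large numbers whose hypothesis (ii) is a variance bound on the \emph{squared} entries, $\sum_i\var\bigl[(w^{\{st\}}_{i,n})^2\bigr]=o(\Sigma_n^2)$, i.e.\ fourth-moment control of $F(p_{i,n}X_i+1/n)$, and you correctly identify this as the hard point — but under Assumption (B2) the tail density of $X$ decays like $x^{-2}$, so $\mathbb{E}[F(X)^4]=\int F(x)^4x^{-2}\,\mathrm{d}x$ is infinite whenever $\lambda\geq 1/4$ (indeed $\mathbb{E}[Y_{i,n}^4]=\infty$ for every fixed $i,n$ once $\lambda\geq 1/4$, since $Y_{i,n}\sim(p_{i,n}X)^\lambda/\lambda$ in the tail). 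Hence hypothesis (ii) cannot be verified, or even stated, on the range $1/4\leq\lambda<1/2$ that the claim must cover; your route only works for $\lambda<1/4$ (or $\lambda\leq 0$). Relatedly, your remark that the $\lambda<0.5$ threshold is "exactly what" makes (ii) work conflates two conditions: $\lambda<0.5$ (equivalently $\mathbb{E}[F(X)^2]<\infty$) gives uniform integrability of $Y_{i,n}^2/\varphi_{i,n}$ (Lemma~\ref{lem:ycalc}(c)), which is strictly weaker than finiteness of the variances of the squares.

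The paper closes this gap with Lemma~\ref{lem:lln}, a weak LLN for independent, non-identically distributed arrays that requires only uniform integrability: truncate each $U_{i,n}:=Y_{i,n}^2$ at level $N\varphi_{i,n}$; the truncated part has variance at most $N\varphi_{i,n}\,\mathbb{E}\lvert U_{i,n}\rvert\leq NM\varphi_{i,n}^2$, and $\sum_i\varphi_{i,n}^2=o\bigl((\sum_i\varphi_{i,n})^2\bigr)$ by Lemma~\ref{lem:phicalc}(b)(d), so Chebyshev handles it; the excess part has expectation below $\varepsilon^2\varphi_{i,n}$ by uniform integrability and is handled by Markov's inequality. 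If you replace your Kolmogorov criterion by this truncation argument, your proof goes through for all $\lambda<0.5$. Two smaller points: identifying $\Sigma_n$ with $\eta\sum_i\varphi_{i,n}$ needs the Stolz–Ces\`aro-type step in Lemma~\ref{lem:lln} (the moment asymptotics of Lemma~\ref{lem:ycalc}(b)(d) hold only in the regime $np_{i,n}\to 0$, and one must show the low-$i$ terms are negligible via Lemma~\ref{lem:phicalc}(c)(d)), which your "estimate directly where $p_{i,n}\gtrsim 1/n$" gestures at but does not carry out; and the paper simply \emph{sets} $a^{\{st\}}_n=0$, $b_{i,n}=\mathbb{E}\bigl[F(p^{\{ST\}}_{i,n}+1/n)\bigr]$, $c_n=(\eta\sum_i\varphi_{i,n})^{-1/2}$ and verifies Definitions~\ref{defn:cdef}--\ref{defn:adef} asymptotically, rather than deriving these values from the definitions as you propose — your direction is workable but adds an extra convergence argument you would still have to supply.
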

Thus, all natural phrase vectors approximately lie on the unit sphere. 
This claim is also empirically verified in Section~\ref{sec:expfunctionF}. 
It enables a link between the 
Euclidean distance $\mathcal{B}^{\{st\}}_{n}$ and the cosine similarity, which is the most widely used similarity 
measure in practice.

\subsection{Formalization and Assumptions on Natural Language Data}
\label{sec:formalization}

Claim~\ref{claim:biasbound} is formalized as Theorem~\ref{thm:main} in the following. 

 \begin{thm}
\label{thm:main}
For an ideal natural language corpus, we assume that:
\begin{enumerate}
\item[\rm (A)] $\lim\limits_{n\rightarrow\infty}p_{i,n}\cdot i\ln n=1$. 
\item[\rm (B)] Let $S$, $T$ be randomly chosen word targets. If $\Upsilon:=$ $\{ST\}$, 
$S/T\backslash S$ or $T/S\backslash T$, then: 
\begin{enumerate}
\item[\rm (B1)] For $n$ fixed, $p^{\Upsilon}_{i,n}$'s $(1\leq i\leq n)$ can be viewed as independent 
random variables. 
\item[\rm (B2)] Put $X:=p^{\Upsilon}_{i,n}/p_{i,n}$. 
There exist $\xi,\beta$ such that $\mathbb{P}(x\leq X)=\xi/x$ for $x\geq\beta$.
\end{enumerate}
\item[\rm (C)] For each $i$ and $n$, the random variables $p^{S/T\backslash S}_{i,n}$ and 
$p^{T/S\backslash T}_{i,n}$ 
are independent, whereas 
$F(p^{S/T\backslash S}_{i,n}\!+\!1/n)$ and $F(p^{\{ST\}}_{i,n}\!+\!1/n)$ have positive correlation. 
%\item[(D)] For each $i$,  
%$$
%\lim_{n\rightarrow\infty}\Bigl(\mathbb{E}\bigl[F(p^{\{ST\}}_{i,n}\!+\!1/n)\bigr]-\frac{1}{|\Lambda_n |}\sum_{\{st\}\in\Lambda_n}F(p^{\{st\}}_{i,n}\!+\!1/n)\Bigr)^2/\var\bigl[F(p^{\{ST\}}_{i,n}\!+\!1/n)\bigr]=0. 
%$$
\end{enumerate}
Then, if $\mathbb{E}\bigl[F(X)^2\bigr]<\infty$, we have
$$
\lim_{n\rightarrow\infty}\mathcal{B}^{\{ST\}}_{n}\leq
\sqrt{\frac{1}{2}(\pi_{S/T\backslash S}^2+\pi_{T/S\backslash T}^2+\pi_{S/T\backslash S}\pi_{T/S\backslash T})}\quad\text{in probability}. 
$$
 \end{thm}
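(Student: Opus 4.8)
The plan is to convert the three ``$\risingdotseq$'' steps sketched in Section~\ref{sec:biaspractical} into honest limits. The proof splits into: (i) a generalized Law of Large Numbers giving $\lim_{n\to\infty}\lVert\mathbf{w}^{\Upsilon}_{n}\rVert=1$ in probability for each of $\Upsilon\in\{\{ST\},\,S/T\backslash S,\,T/S\backslash T\}$; (ii) a linearization lemma that, using smoothness of $F$ and the smoothing term $1/n$, replaces $\mathbf{w}^{T}_{n}$ (and $\mathbf{w}^{S}_{n}$) by the convex combination $\pi_{S/T\backslash S}\mathbf{w}^{S/T\backslash S}_{n}+(1-\pi_{S/T\backslash S})\mathbf{w}^{\{ST\}}_{n}$ with an $\ell^{2}$ error that vanishes after multiplication by $c_{n}$, which yields \eqref{eq:infinitesimalB}; and (iii) expanding the squared norm in \eqref{eq:infinitesimalB}, handling the diagonal terms by (i) and the cross terms by assumption~(C).

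For (i) I would first compute $\mathbb{E}\bigl[(w^{\Upsilon}_{i,n})^{2}\bigr]$ in closed form. Writing $w^{\Upsilon}_{i,n}=F(p_{i,n}X+1/n)-a^{\Upsilon}_{n}-b_{i,n}$ with $X=p^{\Upsilon}_{i,n}/p_{i,n}$ carrying the Pareto tail $\mathbb{P}(x\le X)=\xi/x$ of (B2) and $F'(x)=x^{-1+\lambda}$, a change of variables reduces the moment to the integrals packaged as $Y_{i,n}$ and $\varphi_{i,n}$ in Lemma~\ref{lem:ycalc}; feeding in the Zipf asymptotics (A), $p_{i,n}\sim 1/(i\ln n)$, gives the leading order of $\sum_{i=1}^{n}\mathbb{E}\bigl[(w^{\Upsilon}_{i,n})^{2}\bigr]$, shows it diverges (the necessary condition flagged in the sketch), and --- to the order that matters --- shows it does not depend on the target type, so that with $c_{n}$ normalized as in Definition~\ref{defn:cdef} all three types share the same limiting squared norm. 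Concentration of $\sum_{i}(w^{\Upsilon}_{i,n})^{2}$ about its mean is then a triangular-array LLN: the summands are independent by (B1) but emphatically not identically distributed, so one needs a uniform-integrability/Lindeberg bound on the normalized summands, and this is exactly where $\mathbb{E}\bigl[F(X)^{2}\bigr]<\infty$ is consumed. (With $F(x)=x^{\lambda}/\lambda$ and density of $X$ behaving like $\xi/x^{2}$, $\mathbb{E}[F(X)^{2}]$ is a constant times $\int^{\infty}x^{2\lambda-2}\,dx$, finite iff $\lambda<1/2$; without this the tail of $(w^{\Upsilon}_{i,n})^{2}$ is too heavy for the fluctuations to cancel.)

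For (ii): fix $i$. For all but a bounded number of small indices, $p^{S/T\backslash S}_{i,n}$ and $p^{\{ST\}}_{i,n}$ are, with high probability, of order $p_{i,n}=\Theta(1/(i\ln n))$, hence small compared with $1/n$, so all arguments of $F$ lie where $F$ is smooth; Taylor-expanding $F\bigl(\pi y+(1-\pi)z+1/n\bigr)$ about $1/n$, the first-order part is $\pi F(y+1/n)+(1-\pi)F(z+1/n)$ plus a remainder controlled by $F''$ times a $(\pi y+(1-\pi)z)^{2}$ factor; squaring, summing over $i$, and multiplying by $c_{n}$, the total still tends to $0$ (a supporting lemma I would place in Appendix~\ref{app:fullproof}), which together with \eqref{eq:collo} applied to both $s$ and $t$ gives \eqref{eq:infinitesimalB} with genuine convergence. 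For (iii), expand
$$
\bigl\lVert(\pi_{S/T\backslash S}+\pi_{T/S\backslash T})\mathbf{w}^{\{ST\}}_{n}-\pi_{S/T\backslash S}\mathbf{w}^{S/T\backslash S}_{n}-\pi_{T/S\backslash T}\mathbf{w}^{T/S\backslash T}_{n}\bigr\rVert^{2}.
$$
The three diagonal terms converge to their squared coefficients by (i). For $\langle\mathbf{w}^{S/T\backslash S}_{n},\mathbf{w}^{\{ST\}}_{n}\rangle$ (and likewise with $T$) the LLN reduces it to $c_{n}^{2}\sum_{i}\mathbb{E}\bigl[w^{S/T\backslash S}_{i,n}w^{\{ST\}}_{i,n}\bigr]$; since $\mathbb{E}[w^{\{ST\}}_{i,n}]=0$ by the centering of Definition~\ref{defn:bdef}, this equals $c_{n}^{2}\sum_{i}\cov\bigl(w^{S/T\backslash S}_{i,n},w^{\{ST\}}_{i,n}\bigr)\ge 0$ by the positive-correlation half of (C) --- the favorable sign. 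For $\langle\mathbf{w}^{S/T\backslash S}_{n},\mathbf{w}^{T/S\backslash T}_{n}\rangle$ the independence half of (C) collapses it to a sum of products of per-coordinate means, which the moment computation shows is dominated by the per-coordinate second moments and hence $\to 0$. Collecting signs, the cross-term contribution is $\le 0$ in the limit, so the squared norm is at most $(\pi_{S/T\backslash S}+\pi_{T/S\backslash T})^{2}+\pi_{S/T\backslash S}^{2}+\pi_{T/S\backslash T}^{2}+o(1)$; taking square roots, halving, and using $\tfrac14\bigl((a+b)^{2}+a^{2}+b^{2}\bigr)=\tfrac12(a^{2}+b^{2}+ab)$ yields the stated bound.

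The main obstacle is step (i): proving concentration of $\sum_{i}(w^{\Upsilon}_{i,n})^{2}$ for a triangular array that is genuinely non-identically distributed --- the coordinates sit at systematically different scales all the way down the Zipf tail --- and extracting from $\mathbb{E}[F(X)^{2}]<\infty$ precisely the integrability the generalized LLN needs. This is what pins the threshold at $\lambda<0.5$ and is the technical heart of the argument. A secondary difficulty is bookkeeping the two layers of ``in probability'' (over the draw of $S,T$) alongside the $n\to\infty$ limits, and making the error control in (ii) uniform enough to survive multiplication by the diverging normalizer $c_{n}$.
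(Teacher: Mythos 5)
Your plan follows the paper's own architecture almost exactly: step (i) is Lemma~\ref{lem:ycalc} plus Lemma~\ref{lem:phicalc} feeding the generalized LLN/Stolz--Ces\`aro argument of Lemma~\ref{lem:lln} (yielding Corollary~\ref{cor:normone} and Theorem~\ref{thm:normalize}), step (ii) is the paper's Lemma~\ref{lem:linear}, and step (iii) is the expansion in the proof of Theorem~\ref{thm:main}, with the centered variables $\hat{w}^\Upsilon_{i,n}$ playing the role you assign to Definition~\ref{defn:bdef} (strictly, $b_{i,n}$ there is an empirical centroid; the paper first shows in Theorem~\ref{thm:normalize} that taking $b_{i,n}=\mathbb{E}\bigl[F(p^{\{ST\}}_{i,n}\!+\!1/n)\bigr]$ satisfies it asymptotically). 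Your identification of where $\mathbb{E}[F(X)^2]<\infty$ is consumed, and your sign bookkeeping for the two halves of (C), match the paper.

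The one place your sketch would fail if executed as written is the linearization step (ii). You propose to bound the remainder by $F''$ times $(\pi y+(1-\pi)z)^2$ and then square and sum in expectation; but under (B2) the ratio $X$ has a Pareto tail of index $1$, so even $\mathbb{E}[X]=\infty$, and any remainder estimate involving second (let alone fourth) moments of $p^\Upsilon_{i,n}/p_{i,n}$ diverges. The ``with high probability of order $p_{i,n}$'' qualifier points in the right direction, but the complementary heavy-tail event is exactly the delicate part: the paper's Lemma~\ref{lem:linear} first truncates on the event that $\tilde{F}(P_1)^2$ or $\tilde{F}(P_2)^2$ exceeds $N\varphi_{i,n}$, controls that event by the uniform integrability already supplied by Lemma~\ref{lem:ycalc}(c) (i.e.~by moments of $F(X)$, not of $X$), and on the truncated event avoids $F''$ altogether, using instead that the rescaled discrepancy $J(u_1,u_2;\pi)$ has vanishing gradient at the origin plus dominated convergence. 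Your quantitative Taylor bound can be salvaged on the truncated event, but the truncation and the bad-event control must be spelled out, and they are the real content of that lemma. A smaller slip: the exceptional indices with $np_{i,n}>\delta$ are not ``a bounded number'' --- there are about $n/(\delta\ln n)$ of them --- the argument works because their total $\varphi$-weight stays bounded while the tail indices' weight diverges (Lemma~\ref{lem:phicalc}(c)(d)), which is what the Stolz--Ces\`aro half of Lemma~\ref{lem:lln} exploits.
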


We explain the assumptions of Theorem~\ref{thm:main} in details below. 

\begin{rem}
Assumption (A) is the Zipf's Law \citep{zipf35}, which states that the frequency of the $i$-th word is 
inversely proportional to $i$. So $p_{i,n}$ is proportional to $i^{-1}$, and the factor $\ln n$ comes from 
equations 
$\sum_{i=1}^np_{i,n}=1$ and $\sum_{i=1}^ni^{-1}\approx\ln n$. One immediate implication of Zipf's Law 
is that one can make $np_{i,n}$ arbitrarily small by choosing sufficiently large $n$ and $i$. More precisely, 
for any $\delta>0$, we have 
\begin{equation}
\label{eq:indexnp}
np_{i,n}\leq\delta\Leftrightarrow\frac{n}{\delta\ln n}\leq i\leq n, 
\end{equation}
so as long as $n$ is large enough that $\ln n\geq 1/\delta$, there is an $i$ in \eqref{eq:indexnp} 
such that $np_{i,n}\leq\delta$. 
The limit $np_{i,n}\rightarrow 0$ will be extensively explored in our theory. 
\end{rem}

Empirically, Zipf's Law has been thoroughly 
tested under several settings \citep{Montemurro01,Ha:2002:EZL,powerlaw,corral15}. 

\begin{rem}
When a target $\Upsilon$ is randomly chosen, (B1) assumes that the probability value $p^\Upsilon_{i,n}$ 
is random enough that, when $i\neq j$, there is no obvious relation between $p^\Upsilon_{i,n}$ and 
$p^\Upsilon_{j,n}$ (i.e.~they are independent). We test this assumption in Section~\ref{sec:indeptest}. 
%the probability value $p^\Upsilon_{i,n}$ shows 
%randomness. The assumption (B1) formalizes this intuition. 
Assumption (B2) suggests that $p^\Upsilon_{i,n}$ is 
at the same scale as $p_{i,n}$, and the random variable $X:=p^{\Upsilon}_{i,n}/p_{i,n}$ has a 
power law tail\footnote{The assumption can further be relaxed to 
$\lim\limits_{x\rightarrow\infty}x\mathbb{P}(x\leq X)=\xi$. We only consider (B2) for simplicity.} 
of index $1$. 
We regard (B2) as the \emph{Generalized Zipf's Law}, analogous to Zipf's Law because 
$p_{i,n}$'s ($1\leq i\leq n$, $n$ fixed) can also be viewed as i.i.d. samples drawn from 
a power law of index $1$. In Section~\ref{sec:pitman}, we show that Assumption (B) is closely 
related to a Hierarchical Pitman-Yor Process; and in Section~\ref{sec:genzipf} we empirically verify this assumption. 
\end{rem}

\begin{rem} 
Assumption (C) is based on an intuition that, since $S/T\backslash S$ and $T/S\backslash T$ are 
different word targets and $p^{S/T\backslash S}_{i,n}$ and $p^{T/S\backslash T}_{i,n}$ are assessed from 
disjoint parts of corpus, the two random variables should be independent. On the other hand, 
the targets $S/T\backslash S$ and $\{ST\}$ both contain a word $T$, 
so we expect $F(p^{S/T\backslash S}_{i,n}\!+\!1/n)$ and $F(p^{\{ST\}}_{i,n}\!+\!1/n)$ to have positive correlation. 
This assumption is also empirically tested in Section~\ref{sec:indeptest}. 
%the observation that generally, $S/T\backslash S$ and $T/S\backslash T$ are 
%different word targets; and by definition, 
%$p^{S/T\backslash S}_{i,n}$ and $p^{T/S\backslash T}_{i,n}$ are assessed from disjoint parts of the corpus. 
%Therefore, it is natural to assume that $p^{S/T\backslash S}_{i,n}$ and $p^{T/S\backslash T}_{i,n}$ are 
%independent. On the other hand, the targets $S/T\backslash S$ and $\{ST\}$ both contain a word $T$, 
%so we expect $F(p^{S/T\backslash S}_{i,n}\!+\!1/n)$ and $F(p^{\{ST\}}_{i,n}\!+\!1/n)$ to have positive correlation. 
\end{rem}

\begin{rem}
\label{rem:lambdacondition}
Since $X$ has a power law tail of index 1, the probability density $-\D\mathbb{P}(x\leq X)$ is a multiple of 
$x^{-2}\D x$ for sufficiently large $x$. Wherein, $\mathbb{E}\bigl[F(X)^2\bigr]$ becomes an integral of 
$F(x)^2x^{-2}\D x$, so $\lambda<0.5$ is a necessary condition for $\mathbb{E}\bigl[F(X)^2\bigr]<\infty$. 
\end{rem}

%The existence of second moment $\mathbb{E}\bigl[F(X)^2\bigr]<\infty$ 
%requires $\lambda<0.5$, because $X$ has 
%a power law tail of index 1 by Assumption (B2). 
Conversely, $\lambda<0.5$ is usually a sufficient condition for 
$\mathbb{E}\bigl[F(X)^2\bigr]<\infty$, for instance, if $X$ follows the Pareto Distribution 
(i.e.~$\xi=\beta$) or Inverse-Gamma Distribution. Another example %of such a distribution of $X$ 
will be given in Section~\ref{sec:pitman}. 

\begin{lem}
\label{lem:ycalc}
Define $Y_{i,n}:=F(p^\Upsilon_{i,n}\!+\!1/n)-F(p_{i,n}\beta\!+\!1/n)=F(p_{i,n}X\!+\!1/n)-F(p_{i,n}\beta\!+\!1/n)$. 
Put $e_{i,n}:=\mathbb{E}\bigl[Y_{i,n}\bigr]$, $v_{i,n}:=\var\bigl[Y_{i,n}\bigr]$, and 
$\varphi_{i,n}:=(p_{i,n})^{2\lambda}\bigl(1+(\beta np_{i,n})^{-1}\bigr)^{-1+2\lambda}$. Then, 
\begin{enumerate}
\item[\rm (a)] There exists $\chi$ such that 
$
\dfrac{|e_{i,n}|}{\sqrt{\varphi_{i,n}}}\leq\chi
$
for all $i,n$.
\item[\rm (b)] 
$
\lim\limits_{np_{i,n}\rightarrow 0}\dfrac{e_{i,n}}{\sqrt{\varphi_{i,n}}}=0. 
$
\item[\rm (c)] 
The set of random variables $\bigl\{Y_{i,n}^2/\varphi_{i,n}\bigr\}$ is uniformly integrable; i.e., for any 
$\varepsilon>0$, there exists $N$ such that $\mathbb{E}\bigl[Y_{i,n}^2I_{Y_{i,n}^2>N \varphi_{i,n}}\bigr]<\varepsilon\varphi_{i,n}$ for all $i,n$. 
\item[\rm (d)]
$\lim\limits_{np_{i,n}\rightarrow 0}\dfrac{v_{i,n}}{\varphi_{i,n}}=\eta\neq 0$, where 
$
\eta=\displaystyle\int_0^\infty\bigl(F(z+\beta)-F(\beta)\bigr)^2\cdot\frac{\xi\D z}{z^2}
$.
%\item[\rm (b)] There exists $\nu,\mu,\eta$ such that 
%$$
%0<\nu\leq\frac{v_{i,n}}{\varphi_{i,n}}\leq\mu \quad\text{and}\quad \lim_{np_{i,n}\rightarrow 0}\frac{v_{i,n}}{\varphi_{i,n}}=\eta\neq 0.
%$$
\end{enumerate}
\end{lem}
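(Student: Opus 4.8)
The plan is to reduce all four parts to statements about one deterministic family of functions of $X$. Put $u:=1/(np_{i,n})$; then $p_{i,n}X+1/n=p_{i,n}(X+u)$ and $p_{i,n}\beta+1/n=p_{i,n}(\beta+u)$, so with $F(x)=x^{\lambda}/\lambda$ we get $Y_{i,n}=p_{i,n}^{\lambda}\bigl(F(X+u)-F(\beta+u)\bigr)$ (the $\ln$ case is $\lambda=0$, where the additive $\ln p_{i,n}$ cancels). Since $(\beta np_{i,n})^{-1}=u/\beta$, also $\varphi_{i,n}=p_{i,n}^{2\lambda}\bigl((\beta+u)/\beta\bigr)^{2\lambda-1}$, hence $Y_{i,n}/\sqrt{\varphi_{i,n}}=\psi_{u}(X)$ where $\psi_{u}(x):=\bigl(F(x+u)-F(\beta+u)\bigr)\big/\bigl((\beta+u)/\beta\bigr)^{\lambda-1/2}$, a function whose law (as $X$ ranges) depends on $(i,n)$ only through $u$; moreover $np_{i,n}\to0$ is exactly $u\to\infty$. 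So the four parts become: (a) $\sup_{u>0}|\mathbb{E}[\psi_{u}(X)]|<\infty$; (b) $\mathbb{E}[\psi_{u}(X)]\to0$ as $u\to\infty$; (c) $\{\psi_{u}(X)^{2}:u>0\}$ is uniformly integrable; (d) $\var[\psi_{u}(X)]\to\eta$ as $u\to\infty$.

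For (a) I would bound $\mathbb{E}[\psi_{u}(X)^{2}]$ uniformly in $u$ by splitting on the value of $X$. On $\{X\ge\beta\}$ the substitution $x=(\beta+u)\sigma$ together with $F\bigl((\beta+u)t\bigr)=(\beta+u)^{\lambda}F(t)$ turns $\int_{\beta}^{\infty}\psi_{u}(x)^{2}\,\xi x^{-2}\,dx$ into $\beta^{2\lambda-1}\xi\int_{\sigma_{0}}^{\infty}\sigma^{-2}\bigl(F(\sigma+1-\sigma_{0})-F(1)\bigr)^{2}\,d\sigma$, with $\sigma_{0}=\beta/(\beta+u)\in(0,1)$; this integral is bounded uniformly in $\sigma_{0}$, being integrable at $\sigma=\infty$ precisely because $\lambda<1/2$ and at $\sigma\to\sigma_{0}$ because $F\in C^{1}$ with $F'(1)=1$. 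On $\{X<\beta\}$, the mean value theorem for $F$ (whose derivative $t^{\lambda-1}$ is monotone) gives $|\psi_{u}(X)|\le\kappa\max\bigl(1,|F(X)|\bigr)$ uniformly in $u$ — the $|F(X)|$ term is needed only in the regime $u<\beta$, $X$ small, $\lambda<0$ — so this piece is dominated by $1+\mathbb{E}[F(X)^{2}]<\infty$. Hence $\sup_{u>0}\mathbb{E}[\psi_{u}(X)^{2}]<\infty$, and (a) follows from Cauchy--Schwarz; (d) will reuse this splitting.

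For (b) and (d) I would use the pointwise limit $\psi_{u}(x)\to0$ as $u\to\infty$ for each fixed $x\ge0$ (numerator $\sim(x-\beta)u^{\lambda-1}$, denominator $\asymp u^{\lambda-1/2}$, so the ratio is $\asymp u^{-1/2}$), together with the bound $|\psi_{u}(X)|\le\kappa(\beta+u)^{-1/2}$ on $\{X<\beta\}$ (valid once $u\ge\beta$). Part (b) then follows by passing to the limit in $\mathbb{E}[\psi_{u}(X)]$ — directly on $\{X<\beta\}$, by dominated convergence in the $\sigma$-variables on $\{X\ge\beta\}$. For (d), $\var[\psi_{u}(X)]=\mathbb{E}[\psi_{u}(X)^{2}]-\mathbb{E}[\psi_{u}(X)]^{2}$ and the square vanishes by (b); for the remaining term the substitution above gives tail contribution $\beta^{2\lambda-1}\xi\int_{\sigma_{0}}^{\infty}\sigma^{-2}\bigl(F(\sigma+1-\sigma_{0})-F(1)\bigr)^{2}\,d\sigma$, which converges as $\sigma_{0}\to0$ to $\beta^{2\lambda-1}\xi\int_{0}^{\infty}\sigma^{-2}\bigl(F(\sigma+1)-F(1)\bigr)^{2}\,d\sigma$, and the change of variable $z=\beta\sigma$ identifies this with $\eta=\int_{0}^{\infty}\bigl(F(z+\beta)-F(\beta)\bigr)^{2}\xi z^{-2}\,dz$; the $\{X<\beta\}$ contribution is $O(u^{-1})\to0$. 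Since $F$ is not constant, $\bigl(F(z+\beta)-F(\beta)\bigr)^{2}>0$ for $z>0$, so $\eta>0$.

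The main obstacle is (c). Unwinding the definition, one needs $\mathbb{E}\bigl[\psi_{u}(X)^{2}\,I_{\psi_{u}(X)^{2}>N}\bigr]$ small for large $N$, uniformly over all $u>0$ (equivalently over all $i,n$). On $\{X<\beta\}$ the bound $\psi_{u}(X)^{2}\le\kappa^{2}\max(1,F(X)^{2})$ from the second paragraph is by a single integrable random variable not depending on $u$, so that part is routine. The delicate regime is $\{X\ge\beta\}$: there $\psi_{u}$ depends on $u$ through $\beta+u$, the naive pointwise bound carries a factor growing in $u$, and one must instead run the estimate at the natural scale $x\asymp\beta+u$, exploiting that the convergent integral $\int_{a}^{\infty}x^{2\lambda-2}\,dx$ against the power-law density produces a factor $\asymp\bigl(a/(\beta+u)\bigr)^{2\lambda-1}$ that stays controlled on the relevant set $a\asymp\beta+u$ — the strict inequality $\lambda<1/2$ (not merely $\le$) being precisely what keeps this finite and lets the estimate close uniformly in $u$. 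I expect this uniform tail estimate, carried out with the smoothing $+1/n$ (the shift $+u$) used essentially, to be the bulk of the work; once the reduction of the first paragraph is in place, the pointwise limits, the Cauchy--Schwarz step, and the identification of $\eta$ are bookkeeping.
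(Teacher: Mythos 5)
Your handling of (a), (b) and (d) is essentially the paper's own proof in different clothes: your reduction $Y_{i,n}/\sqrt{\varphi_{i,n}}=\psi_u(X)$ with $u=(np_{i,n})^{-1}$, the split at $X=\beta$, and the rescaling $x=(\beta+u)\sigma$ are the paper's identities \eqref{eq:aux1}--\eqref{eq:aux2} together with its substitution $Z=(X-\beta)/\bigl(1+(\beta np_{i,n})^{-1}\bigr)$, and your dominated-convergence steps and identification of $\eta$ go through as you describe (the paper obtains (a) by bounding the first moment directly, using the density bound $\xi\,\D z/\bigl(\sqrt{z}(z+\beta)^{3/2}\bigr)$, rather than via Cauchy--Schwarz; the difference is immaterial).

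The genuine gap is (c): on $\{X\ge\beta\}$ you only announce a ``uniform tail estimate'' as the expected bulk of the work, and that estimate cannot be closed in the form the lemma demands. Your own computation for (d) shows why. At the natural scale $X=\rho u$ with $\rho$ of order one (an event of probability $\xi/(\rho u)=O(np_{i,n})$) one has $Y_{i,n}^2/\varphi_{i,n}\approx u\,\beta^{2\lambda-1}\bigl(F(\rho+1)-F(1)\bigr)^2$, and integrating this against the induced density $\xi u^{-1}\rho^{-2}\,\D\rho$ reproduces the \emph{entire} limit $\eta$; that is, as $np_{i,n}\to 0$ the whole second moment is carried by an event of vanishing probability on which $Y_{i,n}^2/\varphi_{i,n}$ is of order $(np_{i,n})^{-1}$. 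Hence for any fixed $N$, once $(np_{i,n})^{-1}\gg N$ the truncated expectation $\mathbb{E}\bigl[(Y_{i,n}^2/\varphi_{i,n})\,I_{Y_{i,n}^2>N\varphi_{i,n}}\bigr]$ is close to $\eta$, not small: the factor you plan to exploit stays \emph{bounded} on the set $x\asymp\beta+u$, but it does not become small, and no single $N$ works uniformly over all $i,n$. Uniform integrability does hold for the $\{X<\beta\}$ part (your domination by $\kappa^2\max\bigl(1,F(X)^2\bigr)$ is fine) and over any sub-family with $np_{i,n}$ bounded below (there $Y_{i,n}^2/\varphi_{i,n}\le(1+U/\beta)\bigl(F(X)-F(\beta)\bigr)^2$ on $\{X\ge\beta\}$), but it fails in exactly the regime $np_{i,n}\to0$ that the lemma and Lemma~\ref{lem:lln} need, given the exact power-law tail of Assumption (B2). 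Be aware that the paper's own proof of (c) does not fill this hole either: it bounds $\mathbb{E}\bigl[(Y_{i,n}^2/\varphi_{i,n})I_{X\ge\beta}\bigr]$ by the $(i,n)$-independent integral $\eta$ and then asserts uniform integrability, which is precisely the inference the scaling above invalidates (a uniformly bounded second moment does not give uniform integrability). So your instinct that (c) is the delicate point is correct, but completing it would require weakening the statement (e.g.\ restricting to $np_{i,n}\ge\delta$, or truncating the tail of $X$), not a sharper version of the estimate you sketch.
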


\begin{rem}
\label{rem:taildom}
As sketched in Section~\ref{sec:biaspractical}, our proof requires calculation of 
$\mathbb{E}\bigl[\bigl(w^{\Upsilon}_{i,n}\bigr)^2\bigr]$; this is done by applying Lemma~\ref{lem:ycalc} above. 
The lemma calculates the first and second moments of $Y_{i,n}$; note that 
$Y_{i,n}$ differs from $w^\Upsilon_{i,n}$ only by some constant 
shift\footnote{Namely, the constant $F(p_{i,n}\beta\!+\!1/n)-a^\Upsilon_n -b_{i,n}$. 
As a further clue, in the upcoming Theorem~\ref{thm:normalize} we will 
prove that $a^\Upsilon_n$ can be taken as $0$, and $b_{i,n}$ as 
$\mathbb{E}\bigl[F(p^{\{ST\}}_{i,n}\!+\!1/n)\bigr]$ which is in the same scale as $F(p_{i,n}\beta\!+\!1/n)$.}. 
As the lemma shows, 
when $i$ and $n$ vary, the squared first moment $e_{i,n}^2$ and the variance 
$v_{i,n}$ scale with the constant $\varphi_{i,n}$. At the limit $np_{i,n}\rightarrow 0$, 
Lemma~\ref{lem:ycalc}(b)(d) suggests that 
$e_{i,n}/\sqrt{\varphi_{i,n}}$ and $v_{i,n}/\varphi_{i,n}$ converge, which is where the 
power law tail of $X$ mostly affects the behavior of $Y_{i,n}$. 
%Lemma~\ref{lem:ycalc} is the basis of our analysis. It shows that, 
%%Our proof of Theorem~\ref{thm:main} relies on the analysis of $Y_{i,n}$ as described in Lemma~\ref{lem:ycalc}. 
%%The lemma
%as $i$ and $n$ vary, $e_{i,n}^2$ and $v_{i,n}$ scale with $\varphi_{i,n}$. 
%Note that for random variable $X$ we only assume a power law tail; the full distribution 
%of $X$ might depend on $i$, $n$ and whether $\Upsilon=$ $\{ST\}$, 
%$S/T\backslash S$, or $T/S\backslash T$. Nevertheless, Lemma~\ref{lem:ycalc}(b)(d) suggests that at 
%limit $np_{i,n}\rightarrow 0$, the behavior of $e_{i,n}/\sqrt{\varphi_{i,n}}$ and $v_{i,n}/\varphi_{i,n}$ 
%do not depend on $i$, $n$ or $\Upsilon$; the power law tail of $X$ dominates. 
\end{rem}

\begin{rem}
\label{rem:extendG}
The function $F(x)$ in Lemma~\ref{lem:ycalc} can be generalized to function $G(x)$ as 
mentioned in Definition~\ref{defn:fdef}. Because by Cauchy's Mean Value Theorem, 
$$
\frac{G(p_{i,n}x\!+\!1/n)-G(p_{i,n}\beta\!+\!1/n)}{F(p_{i,n}x\!+\!1/n)-F(p_{i,n}\beta\!+\!1/n)}=G'(\zeta)\zeta^{1-\lambda} \quad
\text{for some $p_{i,n}\beta\!+\!1/n\leq\zeta\leq p_{i,n}x\!+\!1/n$}, 
$$
so the random variable $G(p_{i,n}X\!+\!1/n)-G(p_{i,n}\beta\!+\!1/n)$ is dominated by 
$MY_{i,n}$ and converges pointwisely to $Y_{i,n}$ as $n\rightarrow\infty$. 
Then, by Lebesgue's Dominated Convergence Theorem, 
we can generalize Lemma~\ref{lem:ycalc} to $G(x)$, and in turn generalize our bias bound. 
%Lemma~\ref{lem:ycalc} can be extended 
%to $G(x)$ with the help of Lebesgue's Dominated Convergence Theorem. Our bias bound can be extended 
\end{rem}

%It is noteworthy that though 
%
%the random variable 
%$Y_{i,n}:=F(p_{i,n}X_{i,n}\!+\!1/n)-F(p_{i,n}\beta\!+\!1/n)$. The results can be extended to the more general 
%function $G(x)$ as mentioned in Definition~\ref{defn:fdef}, because by Cauchy's Mean Value Theorem 
%we have 
%$$
%\frac{G(p_{i,n}x\!+\!1/n)-G(p_{i,n}\beta\!+\!1/n)}{F(p_{i,n}x\!+\!1/n)-F(p_{i,n}\beta\!+\!1/n)}=G'(\zeta)\zeta^{1-\lambda} \quad
%\text{for some $p_{i,n}\beta\!+\!1/n\leq\zeta\leq p_{i,n}x\!+\!1/n$}, 
%$$
%so $G(p_{i,n}X\!+\!1/n)-G(p_{i,n}\beta\!+\!1/n)$ is dominated by 
%$MY_{i,n}$ and converges pointwisely to $Y_{i,n}$. 
%Then, our results on $Y_{i,n}$ can be extended to $G(p_{i,n}X_{i,n}\!+\!1/n)-G(p_{i,n}\beta\!+\!1/n)$ with 
%the help of Lebesgue's Dominated Convergence Theorem. 
%%\end{rem}
%
%The expected value and variance of $Y_{i,n}$ are calculated below. 
%
%The lemma suggests that, when $i$ and $n$ vary, the variance $v_{i,n}$ scales according to 
%$\varphi_{i,n}$. The asymptotic behavior of $\varphi_{i,n}$ is explored in the following. 

\begin{lem}
\label{lem:phicalc}
Regarding the asymptotic behavior of $\varphi_{i,n}$, we have 
\begin{enumerate}
\item[\rm (a)] $\lim\limits_{np_{i,n}\rightarrow 0}\dfrac{n^{2\lambda}\cdot\varphi_{i,n}}{np_{i,n}}=\beta^{1-2\lambda}$. 
\item[\rm (b)] $n^{-1+2\lambda}\ln n\cdot\varphi_{i,n}\leq\beta^{1-2\lambda}/i$. 
\item[\rm (c)] For any $\delta>0$, there exists $M_\delta$ such that $n^{-1+2\lambda}\ln n\sum_{i=1}^{\frac{n}{\delta\ln n}}\varphi_{i,n} \leq M_\delta$ for all $n$. 
\item[\rm (d)] For any $\delta>0$, we have $\lim\limits_{n\rightarrow\infty}n^{-1+2\lambda}\ln n\sum_{\frac{n}{\delta\ln n}\leq i}^n\varphi_{i,n}=\infty$. 
%\begin{enumerate}
%\item[\rm (c1)] there exists $M_\delta$ such that $n^{-1+2\lambda}\ln n\displaystyle\sum_{1\leq i<\frac{n}{\delta\ln n}}\varphi_{i,n} \leq M_\delta$ for all $n$. 
%\item[\rm (c2)] $\lim\limits_{n\rightarrow\infty}n^{-1+2\lambda}\ln n\displaystyle\sum_{\frac{n}{\delta\ln n}\leq i\leq n}\varphi_{i,n}=\infty$. 
%\end{enumerate}
\end{enumerate}
\end{lem}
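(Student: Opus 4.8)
The plan is to reduce the whole lemma to a single algebraic identity and then do Zipf's‑law bookkeeping. Expanding $\bigl(1+(\beta np_{i,n})^{-1}\bigr)^{-1+2\lambda}=(\beta np_{i,n})^{1-2\lambda}(\beta np_{i,n}+1)^{2\lambda-1}$ and collecting the powers of $n$ and $p_{i,n}$, one obtains
$$
n^{2\lambda}\varphi_{i,n}=\beta^{1-2\lambda}\,(np_{i,n})\,(\beta np_{i,n}+1)^{2\lambda-1}\qquad\text{i.e.}\qquad n^{-1+2\lambda}\ln n\cdot\varphi_{i,n}=\beta^{1-2\lambda}\,(p_{i,n}\ln n)\,(\beta np_{i,n}+1)^{2\lambda-1}.
$$
All four parts are read off from the right‑hand expression, using that $2\lambda-1<0$ (this is exactly where $\lambda<1/2$ enters), that by Zipf's law (Assumption~(A)) $p_{i,n}\,i\ln n\le 1$ and $p_{i,n}\ln n$ is of order $1/i$, and that $np_{i,n}\le\delta\Leftrightarrow i\ge n/(\delta\ln n)$ by \eqref{eq:indexnp}.

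For (a), divide the first identity by $np_{i,n}$: the result $\beta^{1-2\lambda}(\beta np_{i,n}+1)^{2\lambda-1}$ depends on $(i,n)$ only through $np_{i,n}$ and tends to $\beta^{1-2\lambda}$ as $np_{i,n}\to 0$. For (b), since $(\beta np_{i,n}+1)^{2\lambda-1}\le 1$ we get $n^{-1+2\lambda}\ln n\cdot\varphi_{i,n}\le\beta^{1-2\lambda}(p_{i,n}\ln n)\le\beta^{1-2\lambda}/i$. For (d), the summation range $i\ge n/(\delta\ln n)$ is precisely the regime $np_{i,n}\le\delta$, where $(\beta np_{i,n}+1)^{2\lambda-1}\ge(\beta\delta+1)^{2\lambda-1}>0$; hence $n^{-1+2\lambda}\ln n\cdot\varphi_{i,n}$ is bounded below by a positive multiple of $p_{i,n}\ln n$, and $\sum_{i\ge n/(\delta\ln n)}^{n}p_{i,n}\ln n$ behaves like $\sum 1/i$ over that range, which is $\sim\ln\ln n\to\infty$.

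The one part with real content is (c): the estimate from (b) only gives $\sum_{i\le n/(\delta\ln n)}1/i=O(\ln n)$, which does not stay bounded, so one must keep the decaying factor rather than discard it. On the head range $i\le n/(\delta\ln n)$ one has $np_{i,n}\ge\delta$, so $(\beta np_{i,n}+1)^{2\lambda-1}\le(\beta np_{i,n})^{2\lambda-1}$; substituting $p_{i,n}\ln n\asymp 1/i$ and $np_{i,n}\asymp n/(i\ln n)$ collapses the $i$‑th summand to order $i^{-2\lambda}(\ln n)^{1-2\lambda}n^{-(1-2\lambda)}$. Summing over $i\le N:=n/(\delta\ln n)$ and using $\sum_{i=1}^{N}i^{-2\lambda}\le C_\lambda N^{1-2\lambda}$ (integral comparison, valid because $1-2\lambda>0$), the factors $n^{\pm(1-2\lambda)}$ and $(\ln n)^{\pm(1-2\lambda)}$ cancel and one is left with a bound of the form $C_\lambda\delta^{-(1-2\lambda)}=:M_\delta$, independent of $n$. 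Thus (c) is the step where both the hypothesis $\lambda<1/2$ and the precise shape of the truncation $n/(\delta\ln n)$ are indispensable; parts (a), (b), (d) are routine manipulations of the displayed identity together with Zipf's law.
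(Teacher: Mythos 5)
Your proposal is correct and follows essentially the same route as the paper: rewrite $\varphi_{i,n}$ so that $n^{-1+2\lambda}\ln n\cdot\varphi_{i,n}=\beta^{1-2\lambda}(p_{i,n}\ln n)(\beta np_{i,n}+1)^{2\lambda-1}$, then read off (a) and (b), bound the head sum in (c) by dropping the $+1$ and comparing $\sum_{i\le n/(\delta\ln n)} i^{-2\lambda}$ with $N^{1-2\lambda}/(1-2\lambda)$, and bound the tail sum in (d) below using $np_{i,n}\le\delta$ from \eqref{eq:indexnp} and the harmonic sum $\sim\ln(\delta\ln n)$. The constants you obtain ($C_\lambda\delta^{-(1-2\lambda)}$ and $\beta^{1-2\lambda}(\beta\delta+1)^{2\lambda-1}$) match the paper's, and your use of Assumption (A) as $p_{i,n}\approx 1/(i\ln n)$ is at the same level of rigor as the paper's own proof.
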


%The next is a technical lemma used in proof of Corollary~\ref{cor:normone} and Lemma~\ref{lem:linear}. 
%
%\begin{lem}
%\label{lem:uniformint}
%The set of random variables $\bigl\{(Y_{i,n}-e_{i,n})^2/v_{i,n}\bigr\}$ is uniformly integrable; i.e., for any 
%$\varepsilon>0$, there exists $N$ such that $\mathbb{E}\bigl[(Y_{i,n}-e_{i,n})^2I_{(Y_{i,n}-e_{i,n})^2>N v_{i,n}}\bigr]<\varepsilon v_{i,n}$ for all $i,n$. 
%\end{lem}

Lemma~\ref{lem:ycalc} is derived from Assumption (B) and the condition 
$\mathbb{E}\bigl[F(X)^2\bigr]<\infty$. 
Lemma~\ref{lem:phicalc} is derived from Assumption (A). The proofs are 
%Proof of Lemma~\ref{lem:ycalc} and Lemma~\ref{lem:phicalc} is 
found in Appendix~\ref{app:fullproof}. 

\subsection{Why is $\lambda<0.5$ important?}
\label{sec:effectF}

As we note in Remark~\ref{rem:lambdacondition}, the condition $\lambda<0.5$ is necessary for 
the existence of $\mathbb{E}\bigl[F(X)^2\bigr]$. 
This existence is important because, briefly speaking, the Law of Large Numbers only holds when 
expected values exist. More precisely, we use the following lemma to prove convergence in 
probability in Theorem~\ref{thm:main}, and in particular Equation \eqref{eq:roughlln} as discussed in 
Section~\ref{sec:biaspractical}. If $\mathbb{E}\bigl[F(X)^2\bigr]=\infty$, the 
required uniform integrability is not satisfied, which means the fluctuations of random variables may 
have too different scales to completely cancel out, so their 
weighted averages as we consider will not converge. 

\begin{lem}
\label{lem:lln}
Assume $U_{i,n}$'s ($1\leq i\leq n$, $n$ fixed) are independent random variables and 
$\bigl\{U_{i,n}/\varphi_{i,n}\bigr\}$ is uniformly integrable. Assume 
$\lim\limits_{np_{i,n}\rightarrow 0}\mathbb{E}[U_{i,n}]/\varphi_{i,n}=\ell$. Then, 
$$
\lim_{n\rightarrow\infty}\frac{\sum_{i=1}^n U_{i,n}}{\sum_{i=1}^n \varphi_{i,n}}=\ell \quad\text{in probability}.
$$
\end{lem}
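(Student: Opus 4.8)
The plan is to establish a weak law of large numbers for the normalized weighted sum $\sum_i U_{i,n}/\sum_i \varphi_{i,n}$ under the stated hypotheses, which are weaker than the classical i.i.d.\ setting in two ways: the $U_{i,n}$ are neither identically distributed nor are the weights $\varphi_{i,n}$ equal. The key structural fact I would exploit is Lemma~\ref{lem:phicalc}: part (d) gives $\sum_{i=1}^n\varphi_{i,n}\to\infty$ (after the $n^{-1+2\lambda}\ln n$ scaling, but that scaling cancels in the ratio), so the denominator grows without bound, while parts (b) and (c) control how the mass of $\varphi_{i,n}$ is distributed — in particular the ``head'' indices $i\leq n/(\delta\ln n)$ contribute only a bounded amount after rescaling, so for the purposes of convergence it is the ``tail'' indices, where $np_{i,n}\to 0$, that dominate. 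This is exactly the regime in which the hypothesis $\lim_{np_{i,n}\to 0}\mathbb{E}[U_{i,n}]/\varphi_{i,n}=\ell$ takes effect.

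First I would reduce to a statement about the centered variables $V_{i,n}:=U_{i,n}-\mathbb{E}[U_{i,n}]$. Write
$$
\frac{\sum_i U_{i,n}}{\sum_i \varphi_{i,n}}=\frac{\sum_i \mathbb{E}[U_{i,n}]}{\sum_i \varphi_{i,n}}+\frac{\sum_i V_{i,n}}{\sum_i \varphi_{i,n}}.
$$
For the deterministic first term, I would split the index range at $i= n/(\delta\ln n)$: on the head, Lemma~\ref{lem:phicalc}(c) bounds $\sum\varphi_{i,n}$ while $\mathbb{E}[U_{i,n}]/\varphi_{i,n}$ is bounded (uniform integrability of $\{U_{i,n}/\varphi_{i,n}\}$ gives $L^1$-boundedness, hence $|\mathbb{E}[U_{i,n}]|\leq K\varphi_{i,n}$ uniformly), so the head's contribution to the numerator is $O(M_\delta)$ after rescaling, hence negligible relative to $\sum_i\varphi_{i,n}\to\infty$; on the tail, $np_{i,n}\leq\delta$ so $\mathbb{E}[U_{i,n}]/\varphi_{i,n}$ is within $o(1)$ (uniformly as $\delta\to0$) of $\ell$, giving $\sum_{\text{tail}}\mathbb{E}[U_{i,n}]\approx\ell\sum_{\text{tail}}\varphi_{i,n}$. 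Letting $n\to\infty$ and then $\delta\to0$ yields convergence of the deterministic term to $\ell$.

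For the random term I would show $\sum_i V_{i,n}/\sum_i\varphi_{i,n}\to0$ in probability. The natural tool is a truncation argument since only first moments (plus uniform integrability) are assumed, not finite variances. Fix $\varepsilon>0$, pick $N$ from the uniform integrability so that $\mathbb{E}[U_{i,n}^2 I_{U_{i,n}^2>N\varphi_{i,n}}]<\varepsilon\varphi_{i,n}$ — wait, uniform integrability as stated is for $\{U_{i,n}/\varphi_{i,n}\}$ in the $L^1$ sense, so more carefully I would truncate $U_{i,n}$ at level $N\varphi_{i,n}$, writing $U_{i,n}=U'_{i,n}+U''_{i,n}$ with $U'_{i,n}=U_{i,n}I_{|U_{i,n}|\leq N\varphi_{i,n}}$. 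The tail part $U''_{i,n}$ has $\sum_i\mathbb{E}|U''_{i,n}|<\varepsilon\sum_i\varphi_{i,n}$ by uniform integrability, controlling it in $L^1$; the truncated part has second moment $\mathbb{E}[(U'_{i,n})^2]\leq N\varphi_{i,n}\,\mathbb{E}|U_{i,n}|\leq NK\varphi_{i,n}^2$, so by independence $\var(\sum_i U'_{i,n})\leq NK\sum_i\varphi_{i,n}^2$, and Chebyshev plus the fact that $\sum_i\varphi_{i,n}^2/(\sum_i\varphi_{i,n})^2\to0$ — which follows because $\max_i\varphi_{i,n}/\sum_i\varphi_{i,n}\to0$ (from Lemma~\ref{lem:phicalc}(b): $\varphi_{i,n}\lesssim n^{1-2\lambda}/(i\ln n)$ is bounded while the sum diverges) — kills the centered truncated sum in probability. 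Combining, $\sum_i V_{i,n}/\sum_i\varphi_{i,n}\to0$.

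The main obstacle I anticipate is the bookkeeping in the head/tail split: making precise that the non-identically-distributed, unequally-weighted structure still collapses to the single constant $\ell$ requires simultaneously (i) that the head indices, where $\mathbb{E}[U_{i,n}]/\varphi_{i,n}$ need not be near $\ell$, carry asymptotically vanishing weight, and (ii) that the $L^2$-truncation estimate is uniform enough that the double limit ($n\to\infty$ then $\delta\to0$) is legitimate. Both hinge on the delicate asymptotics packaged in Lemma~\ref{lem:phicalc}(b)--(d), so the real work is invoking those bounds in the right order rather than any new probabilistic idea; the probability content is just truncation $+$ Chebyshev $+$ a Lindeberg-type negligibility of the maximal summand.
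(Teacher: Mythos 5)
Your proposal is correct and follows essentially the same route as the paper's proof: the same split into the deterministic term (handled by the head/tail decomposition at $i=n/(\delta\ln n)$ via Lemma~\ref{lem:phicalc}(c)(d), i.e.\ the generalized Stolz--Ces\`{a}ro step) and the centered fluctuation term (handled by truncation at level $N\varphi_{i,n}$ using uniform integrability, an $L^1$/Markov bound on the tail part, and Chebyshev on the truncated part with $\sum_i\varphi_{i,n}^2/(\sum_i\varphi_{i,n})^2\to 0$ from Lemma~\ref{lem:phicalc}(b)(d)). The only cosmetic difference is that you deduce the ratio estimate from $\max_i\varphi_{i,n}/\sum_i\varphi_{i,n}\to 0$ while the paper sums $i^{-2}$ directly; both rest on the same lemma parts.
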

\begin{proof}
This lemma is a combination of the Law of Large Numbers and the Stolz-Ces\`{a}ro Theorem. We prove it 
in two steps. 

First step, we prove 
$$
\lim_{n\rightarrow\infty}\frac{\sum_{i=1}^n U_{i,n}-\mathbb{E}[U_{i,n}]}{\sum_{i=1}^n \varphi_{i,n}}=0 
\quad\text{in probability}.
$$
This is a generalized version of the Law of Large Numbers, saying that the weighted average of $U_{i,n}$ 
converges in probability to the weighted average of $\mathbb{E}[U_{i,n}]$. Since 
$\bigl\{U_{i,n}/\varphi_{i,n}\bigr\}$ is uniformly integrable, for any $\varepsilon>0$ there exists $N$ such that 
$\mathbb{E}\bigl[\lvert U_{i,n}\rvert I_{\lvert U_{i,n} \rvert >N\varphi_{i,n}}\bigr]<\varepsilon^2 \varphi_{i,n}$ for 
all $i,n$. Our strategy is to divide the average of $U_{i,n}$ into two parts, namely 
%one truncated and the other negligible:  
%consider the truncation 
$$
\frac{\sum_{i=1}^n U_{i,n}}{\sum_{i=1}^n \varphi_{i,n}}=
\frac{\sum_{i=1}^n U_{i,n}I_{\lvert U_{i,n} \rvert \leq N\varphi_{i,n}}}{\sum_{i=1}^n \varphi_{i,n}} + 
\frac{\sum_{i=1}^n U_{i,n}I_{\lvert U_{i,n} \rvert >N\varphi_{i,n}}}{\sum_{i=1}^n \varphi_{i,n}}, 
$$
and show that each part is close to its expectation. For the $\lvert U_{i,n} \rvert >N\varphi_{i,n}$ part, we have 
$$
\left\lvert\mathbb{E}\left[\frac{\sum_{i=1}^n U_{i,n}I_{\lvert U_{i,n} \rvert >N\varphi_{i,n}}}{\sum_{i=1}^n \varphi_{i,n}}
\right]\right\rvert<\varepsilon^2\quad\text{for all $n$} 
$$
by definition, so it has negligible expectation and can be bounded by Markov's Inequality: 

%by definition. Thus, the expectation is small and the fluctuation of this part 
%this part and its expectation concentrates to $0$ by Markov's Inequality: 
%  this part concentrates to expectation by Markov's Inequality: 
$$
\mathbb{P}\left(\left\lvert\frac{\sum_{i=1}^n U_{i,n}I_{\lvert U_{i,n} \rvert >N\varphi_{i,n}}}{\sum_{i=1}^n \varphi_{i,n}} - 
\mathbb{E}\left[\frac{\sum_{i=1}^n U_{i,n}I_{\lvert U_{i,n} \rvert >N\varphi_{i,n}}}{\sum_{i=1}^n \varphi_{i,n}}
\right] \right\rvert>\varepsilon\right)<2\varepsilon\quad\text{for all $n$}. 
$$
On the other hand, for the $\lvert U_{i,n} \rvert\leq N\varphi_{i,n}$ part we have 
$\var\bigl[U_{i,n}I_{\lvert U_{i,n} \rvert \leq N\varphi_{i,n}}\bigr]\leq N\varphi_{i,n}\mathbb{E}\bigl[\lvert U_{i,n}\rvert\bigr]$; 
and $\bigl\{U_{i,n}/\varphi_{i,n}\bigr\}$ being uniformly integrable implies 
$\mathbb{E}\bigl[\lvert U_{i,n}\rvert\bigr]\leq M\varphi_{i,n}$ for some $M$, so 
$$
\var\left[\frac{\sum_{i=1}^n U_{i,n}I_{\lvert U_{i,n} \rvert \leq N\varphi_{i,n}}}{\sum_{i=1}^n \varphi_{i,n}}\right]
\leq\frac{\sum_{i=1}^n \varphi_{i,n}^2}{\bigl(\sum_{i=1}^n \varphi_{i,n}\bigr)^2}NM. 
$$
By Lemma~\ref{lem:phicalc}(b), we have
$$
\bigl(n^{-1+2\lambda}\ln n\bigr)^2\sum_{i=1}^n \varphi_{i,n}^2\leq\beta^{2-4\lambda}\sum_{i=1}^{\infty}i^{-2}<\infty. 
$$
In contrast, by Lemma~\ref{lem:phicalc}(d) we have
$$
\lim_{n\rightarrow\infty}\bigl(n^{-1+2\lambda}\ln n\bigr)^2
\Bigl(\sum_{i=1}^n \varphi_{i,n}\Bigr)^2=\infty. 
$$
Therefore, 
$$
\lim_{n\rightarrow\infty}\frac{\sum_{i=1}^n \varphi_{i,n}^2}{\bigl(\sum_{i=1}^n \varphi_{i,n}\bigr)^2}=0\quad\text{and}\quad
\lim_{n\rightarrow\infty}\var\left[\frac{\sum_{i=1}^n U_{i,n}I_{\lvert U_{i,n} \rvert \leq N\varphi_{i,n}}}{\sum_{i=1}^n \varphi_{i,n}}\right]=0. 
$$
Thus, the $\lvert U_{i,n} \rvert\leq N\varphi_{i,n}$ part concentrates to its expectation by 
Chebyshev's Inequality. 
The first step is completed. 

Second step, we prove 
$$
\lim_{n\rightarrow\infty}\frac{\sum_{i=1}^n \mathbb{E}[U_{i,n}]}{\sum_{i=1}^n \varphi_{i,n}}=
\lim_{np_{i,n}\rightarrow 0}\frac{\mathbb{E}[U_{i,n}]}{\varphi_{i,n}}. 
$$
This is a generalized version of the Stolz-Ces\`{a}ro Theorem, saying that the limit ratio of two series equals 
the limit ratio of corresponding terms. By definition and Equation \eqref{eq:indexnp}, for any $\varepsilon>0$ there exists $\delta$ such that 
$$
\frac{n}{\delta\ln n}\leq i\leq n\Rightarrow \Bigl\lvert \frac{\mathbb{E}[U_{i,n}]}{\varphi_{i,n}} - \ell \Bigr\rvert < \varepsilon 
\quad\text{for all $n$}. 
$$
In addition, we can bound 
$$
\Bigl\lvert \frac{\mathbb{E}[U_{i,n}]}{\varphi_{i,n}} - \ell \Bigr\rvert\leq M\quad\text{for $1\leq i\leq\dfrac{n}{\delta\ln n}$}
$$
because $\bigl\{U_{i,n}/\varphi_{i,n}\bigr\}$ is uniformly integrable. The ratio 
$\bigl(\sum_{i=1}^n \mathbb{E}[U_{i,n}]\bigr)/\bigl(\sum_{i=1}^n \varphi_{i,n}\bigr)$ can be viewed as a weighted average 
of two parts, one from indices $\frac{n}{\delta\ln n}\leq i\leq n$ and the other from $1\leq i<\frac{n}{\delta\ln n}$. 
By Lemma~\ref{lem:phicalc}(d), the weight for the first part tends to infinity: 
$$
\lim_{n\rightarrow\infty}n^{-1+2\lambda}\ln n\sum_{\frac{n}{\delta\ln n}\leq i}^n \varphi_{i,n}=\infty; 
$$
whereas by Lemma~\ref{lem:phicalc}(c), the weight for the second part is finite: 
$$
n^{-1+2\lambda}\ln n\sum_{i=1}^{\frac{n}{\delta\ln n}} \varphi_{i,n}<\infty. 
$$
Therefore, the first part dominates, so 
$\lim\limits_{n\rightarrow\infty}\bigl\lvert \bigl(\sum_{i=1}^n\mathbb{E}[U_{i,n}]\bigr)/\bigl(\sum_{i=1}^n \varphi_{i,n}\bigr)-\ell \bigr\rvert < \varepsilon$. 
\end{proof}

Combining Lemma~\ref{lem:ycalc}(a)(c)(d) and Lemma~\ref{lem:lln}, 
we immediately obtain the following. This is almost Equation \eqref{eq:roughlln} we wanted in 
Section~\ref{sec:biaspractical}. 
\begin{cor}
\label{cor:normone}
Let $\Upsilon:=$ $\{ST\}$, $S/T\backslash S$ or $T/S\backslash T$. Then we have 
$$
\lim_{n\rightarrow\infty}\frac{\sum_{i=1}^n \Bigl(F(p^\Upsilon_{i,n}\!+\!1/n) - \mathbb{E}\bigl[F(p^\Upsilon_{i,n}\!+\!1/n)\bigr]\Bigr)^2}{\eta\sum_{i=1}^n \varphi_{i,n}}=1\quad\text{in probability}. 
$$
\end{cor}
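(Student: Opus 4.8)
The plan is to recognize Corollary~\ref{cor:normone} as a direct instance of Lemma~\ref{lem:lln}, applied to the random variables $U_{i,n}:=\bigl(Y_{i,n}-e_{i,n}\bigr)^2$, with $Y_{i,n}$, $e_{i,n}$ and $\varphi_{i,n}$ as in Lemma~\ref{lem:ycalc} (the argument being identical for each of the three choices $\Upsilon:=\{ST\}$, $S/T\backslash S$, $T/S\backslash T$). The first step is to rewrite the numerator. For fixed $i$ and $n$ the shift $F(p_{i,n}\beta+1/n)$ is a deterministic constant, since it depends on the corpus only through $p_{i,n}$ and not on the random target $\Upsilon$; hence $\mathbb{E}\bigl[F(p^\Upsilon_{i,n}+1/n)\bigr]=e_{i,n}+F(p_{i,n}\beta+1/n)$, so that $F(p^\Upsilon_{i,n}+1/n)-\mathbb{E}\bigl[F(p^\Upsilon_{i,n}+1/n)\bigr]=Y_{i,n}-e_{i,n}$. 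Thus the numerator equals $\sum_{i=1}^n U_{i,n}$, and, since $Y_{i,n}-e_{i,n}$ is centered, $\mathbb{E}[U_{i,n}]=\var[Y_{i,n}]=v_{i,n}$.

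Next I would verify the hypotheses of Lemma~\ref{lem:lln} one at a time. Independence of $\{U_{i,n}\}_{1\le i\le n}$ for fixed $n$ follows from Assumption~(B1), since each $U_{i,n}$ is a measurable function of $p^\Upsilon_{i,n}$ alone. For the limiting ratio, Lemma~\ref{lem:ycalc}(d) gives $\lim_{np_{i,n}\to 0}\mathbb{E}[U_{i,n}]/\varphi_{i,n}=\lim_{np_{i,n}\to 0}v_{i,n}/\varphi_{i,n}=\eta$, so the constant $\ell$ appearing in Lemma~\ref{lem:lln} is $\eta\neq 0$. The only hypothesis that is not immediate is uniform integrability of $\{U_{i,n}/\varphi_{i,n}\}$: from $U_{i,n}=(Y_{i,n}-e_{i,n})^2\le 2Y_{i,n}^2+2e_{i,n}^2$ and Lemma~\ref{lem:ycalc}(a), which bounds $e_{i,n}^2/\varphi_{i,n}\le\chi^2$ uniformly in $i,n$, one gets $0\le U_{i,n}/\varphi_{i,n}\le 2\,Y_{i,n}^2/\varphi_{i,n}+2\chi^2$; since $\{Y_{i,n}^2/\varphi_{i,n}\}$ is uniformly integrable by Lemma~\ref{lem:ycalc}(c), and a family of nonnegative random variables dominated by a uniformly integrable family plus a bounded constant is again uniformly integrable, the claim follows.

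With the three hypotheses checked, Lemma~\ref{lem:lln} yields $\lim_{n\to\infty}\frac{\sum_{i=1}^n U_{i,n}}{\sum_{i=1}^n\varphi_{i,n}}=\eta$ in probability, and dividing through by the nonzero constant $\eta$ gives the statement of the corollary. The only step that is more than bookkeeping is the uniform-integrability transfer from $\{Y_{i,n}^2/\varphi_{i,n}\}$ to $\{(Y_{i,n}-e_{i,n})^2/\varphi_{i,n}\}$, which is precisely why parts~(a) and~(c) of Lemma~\ref{lem:ycalc} are both needed; once that is in place the proof is only a few lines, consistent with the ``we immediately obtain'' in the text preceding the corollary.
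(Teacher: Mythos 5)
Your proposal is correct and follows exactly the route the paper intends: the paper's own proof is just the remark that Corollary~\ref{cor:normone} is obtained by ``combining Lemma~\ref{lem:ycalc}(a)(c)(d) and Lemma~\ref{lem:lln}'', which is precisely your application of Lemma~\ref{lem:lln} to $U_{i,n}=(Y_{i,n}-e_{i,n})^2$ with Lemma~\ref{lem:ycalc}(d) giving $\ell=\eta$ and parts (a),(c) giving the uniform-integrability transfer. Your write-up merely makes explicit the bookkeeping the paper leaves implicit, and all the steps (centering via the deterministic shift $F(p_{i,n}\beta+1/n)$, independence from (B1), and the domination argument for uniform integrability) are sound.
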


Now, we can asymptotically derive the normalization of $a^\Upsilon_n$, $b_{i,n}$ 
and $c_n$, as defined in Section~\ref{sec:biaspractical}. A by-product is that the norms of 
natural phrase vectors converge to $1$. 

\begin{thm}
\label{thm:normalize}
If we put $a^{\{st\}}_n:=0$ for all $\{st\}$, and set $b_{i,n}:=\mathbb{E}\bigl[F(p^{\{ST\}}_{i,n}\!+\!1/n)\bigr]$, 
$c_n:=\bigl(\eta\sum_{i=1}^n \varphi_{i,n}\bigr)^{-1/2}$, then 
$$
\lim_{n\rightarrow\infty}\frac{c_n}{|\Lambda_n|}\sum_{\{st\}\in\Lambda_n}w^{\{st\}}_{i,n}=0, \quad
\lim_{n\rightarrow\infty}\frac{c_n}{n}\sum_{i=1}^n w^{\{ST\}}_{i,n}=0 \quad\text{and}\quad 
\lim_{n\rightarrow\infty}\lVert\mathbf{w}^{\{ST\}}_{n}\rVert=1
$$
in probability. 
%For each phrase target $\{st\}$, we put $\tilde{a}^{\{st\}}_n:=0$, 
%$\tilde{b}_{i,n}:=\mathbb{E}\bigl[F(p^{\{ST\}}_{i,n}\!+\!1/n)\bigr]$, and $\tilde{c}_n:=\bigl(\sum_{i=1}^n v_{i,n}\bigr)^{-1}$. Then, 
%we have 
%$$
%\lim_{n\rightarrow\infty}\tilde{c}_n\bigl(a^{\{st\}}_n-\tilde{a}^{\{st\}}_n\bigr)=0, \quad 
%\lim_{n\rightarrow\infty}\tilde{c}_n\bigl(b_{i,n}-\tilde{b}_{i,n}\bigr)=0 \quad\text{and}\quad 
%\lim_{n\rightarrow\infty}\tilde{c}_n/c_n=1
%$$
%in probability. In addition, we have 
%$$
%\lim_{n\rightarrow\infty}\lVert\mathbf{w}^{\{ST\}}_{n}\rVert=1\quad\text{in probability}. 
%$$
\end{thm}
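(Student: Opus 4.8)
The plan is to substitute the prescribed normalizers into Definition~\ref{defn:wvec} and then verify the three limits in turn, with the analytic heavy lifting already done by Corollary~\ref{cor:normone} and Lemma~\ref{lem:ycalc}. With $a^{\{st\}}_n=0$ and $b_{i,n}=\mathbb{E}\bigl[F(p^{\{ST\}}_{i,n}\!+\!1/n)\bigr]$, the entry $w^{\{ST\}}_{i,n}$ is exactly the centered variable $F(p^{\{ST\}}_{i,n}\!+\!1/n)-\mathbb{E}\bigl[F(p^{\{ST\}}_{i,n}\!+\!1/n)\bigr]$, which differs from the $Y_{i,n}$ of Lemma~\ref{lem:ycalc} only by a deterministic shift, so it has mean $0$ and variance $v_{i,n}$. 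I would also note up front that $c_n\to 0$: since $\lambda<0.5$ makes $n^{-1+2\lambda}\ln n\to 0$, Lemma~\ref{lem:phicalc}(d) forces $\sum_{i=1}^n\varphi_{i,n}\to\infty$, hence $c_n=\bigl(\eta\sum_{i=1}^n\varphi_{i,n}\bigr)^{-1/2}\to 0$ (with $\eta>0$ by Lemma~\ref{lem:ycalc}(d)).

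For the norm, $\lVert\mathbf{w}^{\{ST\}}_{n}\rVert^2=c_n^2\sum_{i=1}^n\bigl(w^{\{ST\}}_{i,n}\bigr)^2$, and substituting $c_n^2=\bigl(\eta\sum_{i=1}^n\varphi_{i,n}\bigr)^{-1}$ turns the right-hand side into exactly the ratio appearing in Corollary~\ref{cor:normone} with $\Upsilon=\{ST\}$. That corollary delivers convergence to $1$ in probability, and applying the continuous square root gives $\lVert\mathbf{w}^{\{ST\}}_{n}\rVert\to 1$ in probability.

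For $\frac{c_n}{n}\sum_{i=1}^n w^{\{ST\}}_{i,n}$, the mean is $0$ by construction, so it suffices to bound the variance and invoke Chebyshev's inequality. Assumption (B1) makes the $w^{\{ST\}}_{i,n}$ independent across $i$, so the variance equals $\frac{c_n^2}{n^2}\sum_{i=1}^n v_{i,n}$; uniform integrability of $\bigl\{Y_{i,n}^2/\varphi_{i,n}\bigr\}$ (Lemma~\ref{lem:ycalc}(c)) yields a uniform bound $\mathbb{E}\bigl[Y_{i,n}^2\bigr]\leq M\varphi_{i,n}$, whence $v_{i,n}\leq M\varphi_{i,n}$ and the variance is at most $\frac{c_n^2}{n^2}M\sum_{i=1}^n\varphi_{i,n}=M/(\eta n^2)\to 0$.

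Finally, for the centroid, the same substitution gives $\frac{1}{|\Lambda_n|}\sum_{\{st\}\in\Lambda_n}w^{\{st\}}_{i,n}=\frac{1}{|\Lambda_n|}\sum_{\{st\}\in\Lambda_n}F(p^{\{st\}}_{i,n}\!+\!1/n)-\mathbb{E}\bigl[F(p^{\{ST\}}_{i,n}\!+\!1/n)\bigr]$, the difference between an empirical average over $\Lambda_n$ and the corresponding expectation: this is identically $0$ when $\mathbb{E}[\cdot]$ is read as expectation over a phrase drawn uniformly from $\Lambda_n$, and tends to $0$ by the ordinary Law of Large Numbers when $\Lambda_n$ is instead regarded as a growing sample from the phrase distribution underlying $\mathbb{E}[\cdot]$; either way, multiplying by the bounded factor $c_n\to 0$ preserves the zero limit. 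I expect this last step to be the one needing the most care, not because it is technically deep but because it is where the argument must be explicit about the modeling of $\Lambda_n$ — the other two parts run purely on the machinery of Lemmas~\ref{lem:ycalc}--\ref{lem:lln} and Corollary~\ref{cor:normone}, and are essentially bookkeeping once the substitution is made.
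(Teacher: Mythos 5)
Your proposal is correct and follows essentially the same route as the paper: the norm via Corollary~\ref{cor:normone}, the per-vector mean via Chebyshev's inequality with the variance bound $\var\bigl[F(p^{\{ST\}}_{i,n}\!+\!1/n)\bigr]\leq M\varphi_{i,n}$ from Lemma~\ref{lem:ycalc}(c), and the centroid via the ordinary Law of Large Numbers over $\Lambda_n$. Your explicit remark that $c_n\rightarrow 0$ and your care about how $\mathbb{E}[\cdot]$ relates to the empirical average over $\Lambda_n$ are harmless additions to the same argument.
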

\begin{proof}
By the assumptions on $a^{\{st\}}_n$ and $b_{i,n}$, we have 
$$
w^{\{st\}}_{i,n}=F(p^{\{st\}}_{i,n}\!+\!1/n) - \mathbb{E}\bigl[F(p^{\{ST\}}_{i,n}\!+\!1/n)\bigr]. 
$$
Then, 
$$
\lVert\mathbf{w}^{\{ST\}}_{n}\rVert^2
%w^{\{st\}}_{i,n}:=c_n\bigl( F(p^{\{st\}}_{i,n}\!+\!1/n) - a^{\{st\}}_n - b_{i,n} \bigr)
=\frac{\sum_{i=1}^n \Bigl(F(p^\Upsilon_{i,n}\!+\!1/n) - \mathbb{E}\bigl[F(p^\Upsilon_{i,n}\!+\!1/n)\bigr]\Bigr)^2}{\eta\sum_{i=1}^n \varphi_{i,n}}, 
$$
so Corollary~\ref{cor:normone} implies $\lim\limits_{n\rightarrow\infty}\lVert\mathbf{w}^{\{ST\}}_{n}\rVert=1$. 

Next, Lemma~\ref{lem:ycalc}(c) implies that there exists $M$ such that 
$$\var\bigl[ F(p^{\{ST\}}_{i,n}\!+\!1/n)\bigr]\leq M\varphi_{i,n}\quad\text{for all $i,n$},$$
hence  
$$
\mathbb{E}\left[ \left( \frac{c_n}{n}\sum_{i=1}^n w^{\{ST\}}_{i,n} \right)^2 \right]=\frac{1}{n^2}\frac{\sum_{i=1}^n \var\bigl[ F(p^{\{ST\}}_{i,n}\!+\!1/n)\bigr]}{\eta\sum_{i=1}^n \varphi_{i,n}}\leq\frac{1}{n^2}\frac{M}{\eta}\rightarrow 0\quad\text{(when $n\rightarrow\infty$)}. 
$$
Therefore, by Chebyshev's Inequality we have 
$
\lim\limits_{n\rightarrow\infty}\dfrac{c_n}{n}\displaystyle\sum_{i=1}^n w^{\{ST\}}_{i,n}=0% \quad\text{in probability}. 
$ in probability. 

Finally, the ordinary version of Law of Large Numbers implies that 
$$
\lim_{n\rightarrow\infty}\frac{1}{|\Lambda_n|}\sum_{\{st\}\in\Lambda_n}\frac{F(p^{\{st\}}_{i,n}\!+\!1/n) - \mathbb{E}\bigl[ F(p^{\{ST\}}_{i,n}\!+\!1/n)\bigr]}{\sqrt{\var\bigl[ F(p^{\{ST\}}_{i,n}\!+\!1/n)\bigr]}}=0 \quad\text{in probability}; 
$$
so we immediately have 
$$
\lim_{n\rightarrow\infty}\frac{c_n}{|\Lambda_n|}\sum_{\{st\}\in\Lambda_n}w^{\{st\}}_{i,n} = 0\quad\text{for all $i$}.
%\frac{F(p^{\{st\}}_{i,n}\!+\!1/n) - \mathbb{E}\bigl[ F(p^{\{ST\}}_{i,n}\!+\!1/n)\bigr]}{\sqrt{\var\bigl[ F(p^{\{ST\}}_{i,n}\!+\!1/n)\bigr]}}=0 \quad\text{in probability}; 
$$
%and since 
%$$
%\sqrt{\var\bigl[ F(p^{\{ST\}}_{i,n}\!+\!1/n)\bigr]}\leq\sqrt{M\varphi_{i,n}}\leq\sqrt{M\textstyle\sum_{i=1}^n \varphi_{i,n}}, 
%$$
%we immediately have all entries of 
%$\lim\limits_{n\rightarrow\infty}\dfrac{1}{|\Lambda_n|}\displaystyle\sum_{\{st\}\in\Lambda_n}\mathbf{w}^{\{st\}}_n$ 
%converge to $0$. 
The theorem is proven. 
\end{proof}

Therefore, if we set $a^{\{st\}}_n$, $b_{i,n}$ and $c_n$ as in Theorem~\ref{thm:normalize}, all conditions in 
Definition~\ref{defn:cdef}, 
Definition~\ref{defn:bdef} and Definition~\ref{defn:adef} are asymptotically satisfied. 
In addition, we have obtained the result stated in Claim~\ref{claim:norm}. 

In view of Corollary~\ref{cor:normone}, if $\lambda<0.5$ is not satisfied, the norms of natural phrase vectors will not 
converge. This prediction is experimentally verified in Section~\ref{sec:expfunctionF}. 

\subsection{Proof of Theorem~\ref{thm:main} and an Intuitive Explanation}
\label{sec:sketch}

In this section, we start to use Equation \eqref{eq:collo} and derive our bias bound. Recall that 
Equation \eqref{eq:collo} decomposes $p^{t}_{i,n}$ into a linear combination of 
$p^{s/t\backslash s}_{i,n}$ and $p^{\{st\}}_{i,n}$; our first notice is that $F(p^{t}_{i,n}\!+\!1/n)$ 
can be decomposed similarly into a linear combination of $F(p^{s/t\backslash s}_{i,n}\!+\!1/n)$ 
and $F(p^{\{st\}}_{i,n}\!+\!1/n)$, as if the function $F$ has linearity. This is because $F$ is smooth, 
and when $np_{i,n}$ is sufficiently small, the probability value $p_{i,n}$ is small compared to $1/n$, 
so $F(x\!+\!1/n)$ can be linearly approximated as $F'(1/n)x+F(1/n)$, as long as $x$ is at the same 
scale as $p_{i,n}$. This is formalized as the following lemma. 

%Recall the equation \eqref{eq:collo} that $p^{t}_{i,n}$ is decomposed into a linear sum of 
%$p^{s/t\backslash s}_{i,n}$ and $p^{\{st\}}_{i,n}$. The next lemma suggests that $F(p^{t}_{i,n}\!+\!1/n)$ can 
%be treated like a linear sum of $F(p^{s/t\backslash s}_{i,n}\!+\!1/n)$ and $F(p^{\{st\}}_{i,n}\!+\!1/n)$ similarly, 
%as if $F$ has some linearity. 

\begin{lem}
\label{lem:linear}
The set of random variables 
$$\Bigl\{ \bigl( F(p^{T}_{i,n}\!+\!1/n)
 - \pi_{S/T\backslash S}F(p^{S/T\backslash S}_{i,n}\!+\!1/n) 
 - (1-\pi_{S/T\backslash S})F(p^{\{ST\}}_{i,n}\!+\!1/n) \bigr)^2/\varphi_{i,n}\Bigr\}$$ 
is uniformly integrable, and 
$$
\lim_{np_{i,n}\rightarrow 0}\frac{\mathbb{E}\Bigl[ 
\bigl( F(p^{T}_{i,n}\!+\!1/n)
 - \pi_{S/T\backslash S}F(p^{S/T\backslash S}_{i,n}\!+\!1/n) 
 - (1-\pi_{S/T\backslash S})F(p^{\{ST\}}_{i,n}\!+\!1/n) \bigr)^2
\Bigr]}{\varphi_{i,n}}=0. 
$$
\end{lem}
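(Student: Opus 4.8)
The plan is to read off from Equation~\eqref{eq:collo} that $p^{T}_{i,n}$ is the convex combination $\pi_{S/T\backslash S}\,p^{S/T\backslash S}_{i,n}+(1-\pi_{S/T\backslash S})\,p^{\{ST\}}_{i,n}$, so that, writing $a':=p^{S/T\backslash S}_{i,n}+1/n$, $b':=p^{\{ST\}}_{i,n}+1/n$ and $\pi:=\pi_{S/T\backslash S}$, the random variable inside the square is exactly the Jensen gap $F\bigl(\pi a'+(1-\pi)b'\bigr)-\pi F(a')-(1-\pi)F(b')$. First I would Taylor-expand $F$ around $1/n$ and observe that the zeroth- and first-order parts cancel identically: the $F(1/n)$ terms cancel because $1-\pi-(1-\pi)=0$, and the linear terms $F'(1/n)\,p^{\Upsilon}_{i,n}$ cancel precisely by \eqref{eq:collo}, since $p^{T}_{i,n}-\pi p^{S/T\backslash S}_{i,n}-(1-\pi)p^{\{ST\}}_{i,n}=0$. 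Hence the quantity is a combination of the pure second-order remainders $R_{\Upsilon}:=F(1/n+p^{\Upsilon}_{i,n})-F(1/n)-F'(1/n)p^{\Upsilon}_{i,n}$. Using $F'(x)=x^{-1+\lambda}$, hence $|F''(x)|=(1-\lambda)x^{\lambda-2}$, the mean-value form of the remainder gives the bound $\tfrac12\pi(1-\pi)\bigl(p^{S/T\backslash S}_{i,n}-p^{\{ST\}}_{i,n}\bigr)^{2}|F''(\zeta_{i,n})|$ with $\zeta_{i,n}$ between $\min(a',b')\geq1/n$ and $\max(a',b')$; I would pair it with the complementary crude bound $|R_{\Upsilon}|\leq|F(p^{\Upsilon}_{i,n}+1/n)-F(1/n)|$, useful when $p^{\Upsilon}_{i,n}$ is not small compared with $1/n$.

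Next I would turn these into the two assertions. Substitute $p^{\Upsilon}_{i,n}=p_{i,n}X$ with $X$ obeying the Generalized Zipf's Law~(B2), and recall $\varphi_{i,n}\sim\beta^{1-2\lambda}n^{1-2\lambda}p_{i,n}$ from Lemma~\ref{lem:phicalc}(a); after factoring out $(1/n)^{\lambda}$, both $D_{i,n}^{2}$ and $\varphi_{i,n}$ carry the same factor $n^{-2\lambda}$, so $D_{i,n}^{2}/\varphi_{i,n}$ is a scale-invariant function of $np_{i,n}$, of the mixing weight $\pi$, and of the ratios $X_{1}:=p^{S/T\backslash S}_{i,n}/p_{i,n}$ and $X_{2}:=p^{\{ST\}}_{i,n}/p_{i,n}$. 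I would integrate over $X_{1},X_{2}$ by splitting at the threshold $np_{i,n}X\asymp1$. On the ``small'' part the quadratic bound with $|F''(\zeta_{i,n})|\asymp n^{2-\lambda}$ produces an explicit positive power of $np_{i,n}$, which kills the term in the limit $np_{i,n}\to0$ provided the relevant moment of $\bigl(X_{1}-X_{2}\bigr)$ is under control; on the ``large'' part I would use the crude bound together with the power-law tail exponent~$1$ and the standing hypothesis $\mathbb{E}[F(X)^{2}]<\infty$ (equivalently $\lambda<1/2$) to make the tail integral converge, again gaining a vanishing factor from the fact that $p^{S/T\backslash S}_{i,n}$ and $p^{\{ST\}}_{i,n}$ nearly coincide there. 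Uniform integrability I would obtain by the truncation device from the first step of the proof of Lemma~\ref{lem:lln}: write $D_{i,n}^{2}=D_{i,n}^{2}I_{D_{i,n}^{2}\le N\varphi_{i,n}}+D_{i,n}^{2}I_{D_{i,n}^{2}>N\varphi_{i,n}}$, bound the first summand by $N\varphi_{i,n}$, and bound the expectation of the second by the same $X$-integral estimates, which are uniform in $i,n$ and tend to $0$ as $N\to\infty$; equivalently, a uniform $L^{1+\varepsilon}$ bound on $D_{i,n}^{2}/\varphi_{i,n}$, for which $\lambda<1/2$ leaves a margin, already implies uniform integrability.

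The hard part is the ``large'' regime, where $p^{S/T\backslash S}_{i,n}$ or $p^{\{ST\}}_{i,n}$ is comparable to or larger than $1/n$: there $\zeta_{i,n}$ can be pushed down to $1/n$, so the second-order bound degenerates, and the bare triangle inequality only yields $\mathbb{E}[D_{i,n}^{2}]=O(\varphi_{i,n})$ rather than $o(\varphi_{i,n})$. Extracting the extra vanishing factor is exactly where Assumption~(C) must be used, and not merely as stated: the independence of $p^{S/T\backslash S}_{i,n}$ and $p^{T/S\backslash T}_{i,n}$ is irrelevant for this lemma, but the positive correlation between $F(p^{S/T\backslash S}_{i,n}+1/n)$ and $F(p^{\{ST\}}_{i,n}+1/n)$ — reflecting that both targets contain the word $T$ — has to be quantified into a genuine tail bound on $\bigl(p^{S/T\backslash S}_{i,n}-p^{\{ST\}}_{i,n}\bigr)^{2}/p_{i,n}^{2}$, so that the Jensen gap, being proportional to that difference squared, is negligible relative to $\varphi_{i,n}$. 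I expect that bookkeeping (or, equivalently, a verification against the Hierarchical Pitman--Yor model of Section~\ref{sec:pitman}) to be the delicate point; once it is in place, the remaining steps are routine manipulations of the power-law tail, Lemma~\ref{lem:ycalc}, Lemma~\ref{lem:phicalc}, and dominated convergence — in particular the passage from $F$ to the more general $G$ of Remark~\ref{rem:extendG} goes through verbatim.
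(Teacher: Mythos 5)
Your opening move is the same as the paper's: by Equation~\eqref{eq:collo} the quantity inside the square is exactly a Jensen gap for the convex combination $p^T_{i,n}=\pi P_1+(1-\pi)P_2$ with $P_1:=p^{S/T\backslash S}_{i,n}$, $P_2:=p^{\{ST\}}_{i,n}$. But both quantitative steps you then rely on break down. First, in your ``small'' regime the mean-value bound with $\sup|F''|$ taken at $1/n$ is too lossy: writing $X_j:=P_j/p_{i,n}$ and using $\varphi_{i,n}\asymp np_{i,n}\cdot n^{-2\lambda}$, your bound gives $D_{i,n}^2/\varphi_{i,n}\lesssim (np_{i,n})^{3}(X_1-X_2)^4$, while for an index-$1$ tail $\mathbb{E}\bigl[X^4 I_{X\lesssim (np_{i,n})^{-1}}\bigr]\asymp (np_{i,n})^{-3}$; the two factors cancel exactly, so this route yields only $O(\varphi_{i,n})$, not the required $o(\varphi_{i,n})$ — the mass near the cutoff defeats the worst-case $F''$ estimate. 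Second, and more seriously, your diagnosis of the ``large'' regime is a misdiagnosis: you propose to rescue it by quantifying the positive-correlation part of Assumption~(C) into a tail bound on $(P_1-P_2)^2/p_{i,n}^2$, but no such strengthened hypothesis is available in the theorem (and, with index-$1$ tails, unconditional second moments of $X_1-X_2$ do not even exist), so as written the proof would be assuming what it needs.

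The paper needs neither ingredient. Since $F$ is increasing and $p^T_{i,n}$ lies between $P_1$ and $P_2$, the gap is bounded by $\lvert\tilde F(P_1)-\tilde F(P_2)\rvert$ (with $\tilde F(x):=F(x\!+\!1/n)-F(p_{i,n}\beta\!+\!1/n)$), hence by $4\tilde F(P_1)^2+4\tilde F(P_2)^2$ with suitable indicators; Lemma~\ref{lem:ycalc}(c) then makes the contribution of the event ``either $\tilde F(P_j)^2>N\varphi_{i,n}$'' at most $8\varepsilon\varphi_{i,n}$ uniformly in $i,n$, which at once gives the claimed uniform integrability and disposes of your ``large'' regime without any control of the joint law of $(P_1,P_2)$ — Assumption~(C) is simply not used in this lemma (it enters only for the cross terms in Theorem~\ref{thm:main}). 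On the complementary event the paper rescales: with $\hat F(x):=F(x+1)-F(1)$ and $U_j:=\hat F(nP_j)$ one has $U_j^2\leq Nnp_{i,n}\rightarrow 0$, and the Jensen-gap function $J(u_1,u_2;\pi):=\hat F\bigl(\pi H(u_1)+(1-\pi)H(u_2)\bigr)-\pi u_1-(1-\pi)u_2$ has vanishing gradient at the origin, so $J^2/(U_1^2+U_2^2)\rightarrow 0$ uniformly in $\pi$ while $(U_1^2+U_2^2)/np_{i,n}\leq 2N$; letting $np_{i,n}\rightarrow 0$ and then $\varepsilon\rightarrow 0$ gives the limit. Note the truncation level is $N\varphi_{i,n}$ (fixed $N$), not your cutoff $np_{i,n}X\asymp 1$: it is precisely this choice that forces $U_1,U_2\rightarrow 0$ and lets the second-order vanishing of $J$ supply the $o(1)$ factor that your Taylor bound could not.
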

\begin{proof}
%The intuition is that when $np_{i,n}\rightarrow 0$, the probability $p_{i,n}$ is a small value compared to $1/n$, and 
%since $p^\Upsilon_{i,n}$ ($\Upsilon:=\{ST\}$ or $\Upsilon:=S/T\backslash S$) is in the same scale as 
%$p_{i,n}$, we can use the linear approximation 
%$F(p^\Upsilon_{i,n}\!+\!1/n)\approx F'(1/n)p^\Upsilon_{i,n}+F(1/n)$. Formal proof is as below. 

For brevity, we set 
\begin{multline*}
\quad\quad 
P_1 :=p^{S/T\backslash S}_{i,n},\quad P_2 :=p^{\{ST\}}_{i,n},\quad \pi := \pi_{S/T\backslash S},\\
 \text{and}\quad \tilde{F}(x) := F(x\!+\!1/n) - F(p_{i,n}\beta\!+\!1/n). 
\end{multline*}
%\begin{gather*}
%P_1 :=p^{S/T\backslash S}_{i,n}, \quad P_2 :=p^{\{ST\}}_{i,n}, \quad \pi := \pi_{S/T\backslash S}, \\
%\begin{split}
%F\!A(x) & := F(x\!+\!1/n) - F(p_{i,n}\beta\!+\!1/n), \\
%F\!B(x) & := F(nx\!+\!1) - F(1).
%\end{split}
%\end{gather*}
By Equation \eqref{eq:collo} we have $p^T_{i,n}=\pi P_1 + (1-\pi) P_2$, so 
$\tilde{F}(p^T_{i,n}) = \tilde{F}(\pi P_1 + (1-\pi) P_2)$ lies in between $\tilde{F}(P_1)$ and $\tilde{F}(P_2)$. 
Therefore, 
\[\begin{split}
\bigl( \tilde{F}(p^T_{i,n})-\pi \tilde{F}(P_1)-(1-\pi) \tilde{F}(P_2) \bigr)^2 
& \leq \bigl( \tilde{F}(P_1) - \tilde{F}(P_2) \bigr)^2 \\
& \leq 4\tilde{F}(P_1)^2I_{\tilde{F}(P_1)^2\geq \tilde{F}(P_2)^2} + 4\tilde{F}(P_2)^2I_{\tilde{F}(P_2)^2\geq \tilde{F}(P_1)^2}. 
\end{split}\]
By Lemma~\ref{lem:ycalc}(c), 
$\bigl\{ \tilde{F}(P_1)^2 / \varphi_{i,n} \bigr\}$ and $\bigl\{ \tilde{F}(P_2)^2 / \varphi_{i,n} \bigr\}$ are uniformly integrable. 
So for any $\varepsilon>0$, we have % there exists $N$ such that 
$\mathbb{E}\bigl[ \tilde{F}(P_1)^2I_{\tilde{F}(P_1)^2>N\varphi_{i,n}} \bigr]<\varepsilon \varphi_{i,n}$ and 
$\mathbb{E}\bigl[ \tilde{F}(P_2)^2I_{\tilde{F}(P_2)^2>N\varphi_{i,n}} \bigr]<\varepsilon \varphi_{i,n}$ for some $N$. 
Consider the condition 
$$
\mathcal{C}:=\text{``}\textit{Either } \tilde{F}(P_1)^2>N\varphi_{i,n} \textit{ or } \tilde{F}(P_2)^2>N\varphi_{i,n} \text{''}, 
$$
which is weaker than ``$\bigl( \tilde{F}(p^T_{i,n})-\pi \tilde{F}(P_1)-(1-\pi) \tilde{F}(P_2) \bigr)^2>4N\varphi_{i,n}$'', and we 
have 
\begin{multline}
\label{eq:truncated}
\mathbb{E}\Bigl[ \bigl( \tilde{F}(p^T_{i,n})-\pi \tilde{F}(P_1)-(1-\pi) \tilde{F}(P_2) \bigr)^2I_{\mathcal{C}} \Bigr] \\ 
\begin{split}
& \leq \mathbb{E}\bigl[ 4\tilde{F}(P_1)^2I_{\tilde{F}(P_1)^2>N\varphi_{i,n}} + 4\tilde{F}(P_2)^2I_{\tilde{F}(P_2)^2>N\varphi_{i,n}} \bigr] \\
& < 8\varepsilon \varphi_{i,n}. 
\end{split}
\end{multline}
So $\Bigl\{ \bigl( \tilde{F}(p^T_{i,n})-\pi \tilde{F}(P_1)-(1-\pi) \tilde{F}(P_2) \bigr)^2 / \varphi_{i,n} \Bigr\}$ is uniformly 
integrable. 

The previous argument also suggests that the case $\mathcal{C}$ being satisfied is negligible, 
because \eqref{eq:truncated} is arbitrarily small. 
Thus, we only have to consider the complement of $\mathcal{C}$, namely 
$$
\neg\mathcal{C}:=\text{``}\textit{Both } \tilde{F}(P_1)^2\leq N\varphi_{i,n} \textit{ and } \tilde{F}(P_2)^2\leq N\varphi_{i,n} \text{''}.
$$
Under this condition, intuitively $\tilde{F}(P_1)$ and $\tilde{F}(P_2)$ are restricted to a small range so a 
linear approximation of $F$ becomes valid. More precisely, we show that 
\begin{equation}
\label{eq:lemrest}
\lim_{np_{i,n}\rightarrow 0}\frac{\mathbb{E}\Bigl[ \bigl( \tilde{F}(p^T_{i,n})-\pi \tilde{F}(P_1)-(1-\pi) \tilde{F}(P_2) \bigr)^2I_{\neg\mathcal{C}} \Bigr]}{\varphi_{i,n}}=0, 
\end{equation}
which will complete the proof. 
For brevity, we set  
$$
\hat{F}(x):=F(x+1)-F(1),\quad U_1:=\hat{F}(nP_1),\quad U_2:=\hat{F}(nP_2). 
$$
Let $H$ be the inverse function of $\hat{F}$:
$$
H(\hat{F}(x))=x, 
$$
and put 
$$
J(u_1, u_2; \pi):=\hat{F}(\pi H(u_1)+(1-\pi) H(u_2))-\pi u_1-(1-\pi) u_2. 
$$
Note that the functions $\hat{F}$, $H$ and $J$ do not depend on $n$, $i$, $S$ or $T$. Now, 
we consider the limit $np_{i,n}\rightarrow 0$. 
By Lemma~\ref{lem:phicalc}(a), we can replace the $\varphi_{i,n}$ in \eqref{eq:lemrest} 
with $np_{i,n}\cdot n^{-2\lambda}$; and 
since 
$$
n^{\lambda} \tilde{F}(x)=F(nx\!+\!1) - F(np_{i,n}\beta\!+\!1)\rightarrow \hat{F}(nx) \quad\text{(when $np_{i,n}\rightarrow 0$)}, 
$$ 
we can replace $n^{\lambda} \tilde{F}(x)$ with $\hat{F}(nx)$. Thus, \eqref{eq:lemrest} is equivalent to  
\begin{equation}
\label{eq:lemrestre}
\lim_{np_{i,n}\rightarrow 0}\frac{\mathbb{E}\Bigl[ J(U_1, U_2; \pi)^2I_{\mathcal{D}} \Bigr]}{np_{i,n}}=0, 
\end{equation}
where $\mathcal{D}$ is the condition 
$$
\mathcal{D}:=\text{``}\textit{Both } U_1^2\leq Nnp_{i,n} \textit{ and } U_2^2\leq Nnp_{i,n} \text{''}.
$$
Now, since $\frac{\partial}{\partial u_1}J(0,0; \pi)=\frac{\partial}{\partial u_2}J(0,0; \pi)=0$, we have 
$$
\lim_{u_1^2+u_2^2\rightarrow 0}\frac{J(u_1, u_2; \pi)^2}{u_1^2+u_2^2}=0\quad\text{uniformly on $0\leq\pi\leq 1$}. 
$$
Therefore, when $np_{i,n}\rightarrow 0$ we have 
$$
\mathbb{E}\left[\frac{J(U_1, U_2; \pi)^2I_{\mathcal{D}}}{np_{i,n}}\right]
= \mathbb{E}\left[\frac{J(U_1, U_2; \pi)^2I_{\mathcal{D}}}{U_1^2 + U_2^2}\cdot\frac{U_1^2 + U_2^2}{np_{i,n}}\right]
\leq 2N\mathbb{E}\left[\frac{J(U_1, U_2; \pi)^2I_{\mathcal{D}}}{U_1^2 + U_2^2}\right]\rightarrow 0. 
$$
Equation \eqref{eq:lemrestre} is proven and we complete. 
\end{proof}

Now, we are ready to prove Theorem~\ref{thm:main}. An intuitive discussion is given 
after the proof. 

\begin{proof}[Proof of Theorem~\ref{thm:main}]
As in Theorem~\ref{thm:normalize}, we set $a^{\{st\}}_n:=0$, 
$b_{i,n}:=\mathbb{E}\bigl[F(p^{\{ST\}}_{i,n}\!+\!1/n)\bigr]$ and $c_n:=\bigl(\eta\sum_{i=1}^n \varphi_{i,n}\bigr)^{-1/2}$. 
Assume $a_n^{t}:=0$ for all $t$, then one can calculate that 
$
\lim\limits_{n\rightarrow\infty}\frac{c_n}{n}\sum_{i=1}^n w^{T}_{i,n}=0 %\quad\text{in probability}, 
$
in probability, by using Lemma~\ref{lem:linear} and similar to 
the proof of Theorem~\ref{thm:normalize}. Thus, we set 
$a^{t}_n:=0$. Then, %by definition 
$$
\bigl(\mathcal{B}^{\{ST\}}_{n}\bigr)^2=\frac{\sum_{i=1}^{n}\Bigl( F(p^{\{ST\}}_{i,n}\!+\!1/n)-\frac{1}{2}\bigl(F(p^S_{i,n}\!+\!1/n)+F(p^T_{i,n}\!+\!1/n)\bigr) \Bigr)^2}{\eta\sum_{i=1}^n \varphi_{i,n}}. 
$$
Next, by Lemma~\ref{lem:linear}, Lemma~\ref{lem:lln} and Triangle Inequality, we can replace 
$F(p^T_{i,n}\!+\!1/n)$ with 
$$
\pi_{S/T\backslash S}F(p^{S/T\backslash S}_{i,n}\!+\!1/n) 
 + (1-\pi_{S/T\backslash S})F(p^{\{ST\}}_{i,n}\!+\!1/n), 
$$
and replace $F(p^S_{i,n}\!+\!1/n)$ with 
$$
\pi_{T/S\backslash T}F(p^{T/S\backslash T}_{i,n}\!+\!1/n) 
 + (1-\pi_{T/S\backslash T})F(p^{\{ST\}}_{i,n}\!+\!1/n). 
$$
For brevity, we put $\pi_1:=\pi_{S/T\backslash S}$, $\pi_2:=\pi_{T/S\backslash T}$, 
$\tilde{F}(x) := F(x\!+\!1/n) - F(p_{i,n}\beta\!+\!1/n)$ and 
$$
\tilde{w}^{\{ST\}}_{i,n}:=\tilde{F}(p^{\{ST\}}_{i,n}),\quad
\tilde{w}^{S/T\backslash S}_{i,n}:=\tilde{F}(p^{S/T\backslash S}_{i,n}),\quad
\tilde{w}^{T/S\backslash T}_{i,n}:=\tilde{F}(p^{T/S\backslash T}_{i,n}). 
$$
We use ``$\risingdotseq$'' to denote asymptotic equality at the limit $n\rightarrow\infty$. 
Then,  
$$
\bigl(\mathcal{B}^{\{ST\}}_{n}\bigr)^2\risingdotseq\frac{\sum_{i=1}^{n}\bigl( (\pi_1+\pi_2)\tilde{w}^{\{ST\}}_{i,n}-\pi_1 \tilde{w}^{S/T\backslash S}_{i,n}-\pi_2 \tilde{w}^{T/S\backslash T}_{i,n} \bigr)^2}{4\eta\sum_{i=1}^n \varphi_{i,n}}. 
$$
Again, by Lemma~\ref{lem:ycalc}(a)(b), Lemma~\ref{lem:lln} and Triangle Inequality, we can replace 
$\tilde{w}^\Upsilon_{i,n}$ with 
$\hat{w}^\Upsilon_{i,n}:=\tilde{w}^\Upsilon_{i,n}-\mathbb{E}[\tilde{w}^\Upsilon_{i,n}]$ 
(where $\Upsilon$ is either $\{ST\}$, $S/T\backslash S$ or $T/S\backslash T$). Hence, 
\[\begin{split}
\bigl(\mathcal{B}^{\{ST\}}_{n}\bigr)^2\risingdotseq & (\pi_1+\pi_2)^2\frac{\sum_{i=1}^n\bigl(\hat{w}^{\{ST\}}_{i,n}\bigr)^2}{4\eta\sum_{i=1}^n \varphi_{i,n}}+
\pi_1^2\frac{\sum_{i=1}^n\bigl(\hat{w}^{S/T\backslash S}_{i,n}\bigr)^2}{4\eta\sum_{i=1}^n \varphi_{i,n}}+
\pi_2^2\frac{\sum_{i=1}^n\bigl(\hat{w}^{T/S\backslash T}_{i,n}\bigr)^2}{4\eta\sum_{i=1}^n \varphi_{i,n}}\\[10pt]
& - 2\pi_1(\pi_1+\pi_2)\frac{\sum_{i=1}^n\hat{w}^{S/T\backslash S}_{i,n}\hat{w}^{\{ST\}}_{i,n}}{4\eta\sum_{i=1}^n \varphi_{i,n}} 
- 2\pi_2(\pi_1+\pi_2)\frac{\sum_{i=1}^n\hat{w}^{T/S\backslash T}_{i,n}\hat{w}^{\{ST\}}_{i,n}}{4\eta\sum_{i=1}^n \varphi_{i,n}} 
\\[10pt]
& + 2\pi_1\pi_2\frac{\sum_{i=1}^n\hat{w}^{S/T\backslash S}_{i,n}\hat{w}^{T/S\backslash T}_{i,n}}{4\eta\sum_{i=1}^n \varphi_{i,n}}.
\end{split}\]
By Corollary~\ref{cor:normone}, we have 
%\begin{equation}
%\label{eq:normconstant}
$$
\frac{\sum_{i=1}^n\bigl(\hat{w}^{\{ST\}}_{i,n}\bigr)^2}{4\eta\sum_{i=1}^n \varphi_{i,n}}\risingdotseq\frac{1}{4}, \quad 
\frac{\sum_{i=1}^n\bigl(\hat{w}^{S/T\backslash S}_{i,n}\bigr)^2}{4\eta\sum_{i=1}^n \varphi_{i,n}}\risingdotseq\frac{1}{4} \quad\text{and}\quad 
\frac{\sum_{i=1}^n\bigl(\hat{w}^{T/S\backslash T}_{i,n}\bigr)^2}{4\eta\sum_{i=1}^n \varphi_{i,n}}\risingdotseq\frac{1}{4}. %\quad\text{in probability}. 
%\end{equation}
$$
By Assumption (C), we have $\mathbb{E}\bigl[\hat{w}^{S/T\backslash S}_{i,n}\hat{w}^{T/S\backslash T}_{i,n}\bigr]=0$, so applying Lemma~\ref{lem:lln} we get 
$$
\frac{\sum_{i=1}^n\hat{w}^{S/T\backslash S}_{i,n}\hat{w}^{T/S\backslash T}_{i,n}}{4\eta\sum_{i=1}^n \varphi_{i,n}}\risingdotseq 
\lim_{np_{i,n}\rightarrow 0}\frac{\mathbb{E}\bigl[\hat{w}^{S/T\backslash S}_{i,n}\hat{w}^{T/S\backslash T}_{i,n}\bigr]}{4\eta\varphi_{i,n}}
=0.%\quad\text{in probability}. 
$$
Also by Assumption (C), we have 
$\mathbb{E}\bigl[\hat{w}^{S/T\backslash S}_{i,n}\hat{w}^{\{ST\}}_{i,n}\bigr]\geq 0$ and 
$\mathbb{E}\bigl[\hat{w}^{T/S\backslash T}_{i,n}\hat{w}^{\{ST\}}_{i,n}\bigr]\geq 0$, 
so %similarly 
$$
\lim_{n\rightarrow\infty}\frac{\sum_{i=1}^n\hat{w}^{S/T\backslash S}_{i,n}\hat{w}^{\{ST\}}_{i,n}}{4\eta\sum_{i=1}^n \varphi_{i,n}}\geq 0 \quad\text{and}\quad 
\lim_{n\rightarrow\infty}\frac{\sum_{i=1}^n\hat{w}^{T/S\backslash T}_{i,n}\hat{w}^{\{ST\}}_{i,n}}{4\eta\sum_{i=1}^n \varphi_{i,n}}\geq 0. %\quad\text{in probability}. 
$$
Therefore, 
$\lim\limits_{n\rightarrow\infty}\bigl(\mathcal{B}^{\{ST\}}_{n}\bigr)^2\leq
\frac{1}{4}\bigl( (\pi_1+\pi_2)^2 + \pi_1^2 + \pi_2^2 \bigr)
=\frac{1}{2}(\pi_{1}^2+\pi_{2}^2+\pi_{1}\pi_{2})$. %in probability. 
\end{proof}

Using notations in the proof of Theorem~\ref{thm:main}, from a high level it is as if we have the 
following decomposition: 
$$
w^{t}_{i,n}=\pi_1\hat{w}^{s/t\backslash s}_{i,n} + (1-\pi_1)\hat{w}^{\{st\}}_{i,n}, 
$$
which is in correspondence to the decomposition of $p^{t}_{i,n}$ in Equation \eqref{eq:collo}. 
Similarly, 
$$
w^{s}_{i,n}=\pi_2\hat{w}^{t/s\backslash t}_{i,n} + (1-\pi_2)\hat{w}^{\{st\}}_{i,n}, 
$$
and by definition $w^{\{st\}}_{i,n}=\hat{w}^{\{st\}}_{i,n}$. Thus, 
\[\begin{split}
\bigl(w^{\{st\}}_{i,n}-\frac{1}{2}(w^{s}_{i,n} + w^{t}_{i,n})\bigr)^2
& =\frac{1}{4}\bigl( (\pi_1 + \pi_2)\hat{w}^{\{st\}}_{i,n} - \pi_1\hat{w}^{s/t\backslash s}_{i,n} - \pi_2\hat{w}^{t/s\backslash t}_{i,n} \bigr)^2 \\
& =\frac{1}{4}\Bigl( (\pi_1 + \pi_2)^2\bigl(\hat{w}^{\{st\}}_{i,n}\bigr)^2 + \pi_1^2\bigl(\hat{w}^{s/t\backslash s}_{i,n}\bigr)^2 + \pi_2^2\bigl(\hat{w}^{t/s\backslash t}_{i,n}\bigr)^2 \\
& \quad - 2\pi_1(\pi_1 + \pi_2)\hat{w}^{s/t\backslash s}_{i,n}\hat{w}^{\{st\}}_{i,n} - 2\pi_2(\pi_1 + \pi_2)\hat{w}^{t/s\backslash t}_{i,n}\hat{w}^{\{st\}}_{i,n}\\
& \quad + 2\pi_1\pi_2\hat{w}^{s/t\backslash s}_{i,n}\hat{w}^{t/s\backslash t}_{i,n}\Bigr). 
\end{split}\]
By taking summation $c_n\sum_{i=1}^n$, term $\hat{w}^{s/t\backslash s}_{i,n}\hat{w}^{t/s\backslash t}_{i,n}$'s 
cancel out to $0$ because $\hat{w}^{S/T\backslash S}_{i,n}$ and $\hat{w}^{T/S\backslash T}_{i,n}$ 
are independent; meanwhile, $\hat{w}^{s/t\backslash s}_{i,n}\hat{w}^{\{st\}}_{i,n}$'s and 
$\hat{w}^{t/s\backslash t}_{i,n}\hat{w}^{\{st\}}_{i,n}$'s sum to positive because 
$\hat{w}^{S/T\backslash S}_{i,n}$ and $\hat{w}^{T/S\backslash T}_{i,n}$ are positively correlated 
to $\hat{w}^{\{ST\}}_{i,n}$. Therefore, the sum of the above is bounded by 
$\frac{1}{4}\bigl( (\pi_1+\pi_2)^2 + \pi_1^2 + \pi_2^2 \bigr)
=\frac{1}{2}(\pi_{1}^2+\pi_{2}^2+\pi_{1}\pi_{2})$.

In view of this explanation, the technical points of Theorem~\ref{thm:main} are as follows. 
First, the decomposition of $w^{t}_{i,n}$ into $\hat{w}^{s/t\backslash s}_{i,n}$ and $\hat{w}^{\{st\}}_{i,n}$ 
is not exact; there is difference between $\hat{w}^{s/t\backslash s}_{i,n}$ and 
$\tilde{w}^{s/t\backslash s}_{i,n}$ due to the expected value, 
%the above explanation for Proof of Theorem~\ref{thm:main}, the technical points are as follows. 
%First, the heuristic ``$\simeq$'' is not strict; there is difference between $\widetilde{W}^\Upsilon_{i,n}$ and 
%$W^\Upsilon_{i,n}$ due to the expected value, 
and there is difference between 
$F(p^T_{i,n}\!+\!1/n)$ and the linear combination of  
$F(p^{S/T\backslash S}_{i,n}\!+\!1/n)$ and $F(p^{\{ST\}}_{i,n}\!+\!1/n)$. However, by Lemma~\ref{lem:ycalc}(b) 
the expected value converges to $0$, and by Lemma~\ref{lem:linear} the linear approximation holds 
asymptotically. So this first issue is settled. Second, the most importantly, term 
$\bigl(\hat{w}^{\{st\}}_{i,n}\bigr)^2$'s, $\bigl(\hat{w}^{s/t\backslash s}_{i,n}\bigr)^2$'s and 
$\bigl(\hat{w}^{t/s\backslash t}_{i,n}\bigr)^2$'s have to sum to constants independent of $s$ and $t$, 
%in Equation \eqref{eq:normconstant} 
%have to converge to constants independent of $S$ and $T$, 
otherwise they cannot be separated from 
$\pi_1$ and $\pi_2$ in the calculation of $\mathcal{B}^{\{st\}}_{n}$. This requires Equation 
\eqref{eq:roughlln} as we discussed in Section~\ref{sec:biaspractical}, and it is a generalized version 
of the Law of Large Numbers. For this law to hold, one needs conditions to guarantee that the 
fluctuations of random variables are at comparable scales to cancel out. This leads to 
the condition $\lambda<0.5$, which is a non-trivial constraint on function $F$. Formally, 
Equation \eqref{eq:roughlln} is proven as Corollary~\ref{cor:normone}. 

%Equation \eqref{eq:normconstant} comes from Corollary~\ref{cor:normone}, which 
%comes from Lemma~\ref{lem:lln} and is deeply related to Assumptions (A)(B) and the condition $\lambda<0.5$. 
Insights brought by our theory lead to several applications. First, as we found that 
the power law tail of natural language data requires $\lambda<0.5$ for constructing additively 
compositional vectors, our theory 
provides important guidance for empirical research on Distributional Semantics 
(Section~\ref{sec:functionF}). Second, as we found that $w^{t}_{i,n}$ and $w^{s}_{i,n}$ have 
decompositions in which $\hat{w}^{\{st\}}_{i,n}$ is a common factor and survives averaging, 
%the $\hat{w}^{\{st\}}_{i,n}$
%component in the decomposition of $w^{t}_{i,n}$ and $w^{s}_{i,n}$ survives averaging, 
but $\hat{w}^{s/t\backslash s}_{i,n}$ and $\hat{w}^{t/s\backslash t}_{i,n}$ cancel 
out each other, 
%$\mathbf{w}^{T}_n$ and $\mathbf{w}^{S}_n$ survives the averaging operation 
%(i.e.~the $\bigl(\widetilde{W}^{\{ST\}}_{i,n}\bigr)_{1\leq i\leq n}$ term), %\vspace{2pt}
%whereas independent terms $\bigl(\widetilde{W}^{S/T\backslash S}_{i,n}\bigr)_{1\leq i\leq n}$ 
%and \vspace{2pt} $\bigl(\widetilde{W}^{T/S\backslash T}_{i,n}\bigr)_{1\leq i\leq n}$ 
%cancel out each other, 
we come to the idea of 
%on the other hand, tend to cancel out each other. This suggests an opportunity to 
harnessing additive composition by engineering what is common in the summands. Then, 
for example, we can make additive composition aware of word order (Section~\ref{sec:wordorder}). 
Third, as one can read from Lemma~\ref{lem:phicalc}(c)(d) and the proof of Lemma~\ref{lem:lln}, 
it is important to realize that the behavior of vector representations is dominated by entries at dimensions 
corresponding to low-frequency words, 
namely $w^\Upsilon_{i,n}$'s where $\frac{n}{\delta\ln n}\leq i\leq n$. %, dominate the behavior of word 
%vectors (which is read from Lemma~\ref{lem:phicalc}(c)(d) and the second part of Proof of Lemma~\ref{lem:lln}). 
This understanding has impact on dimension reduction (Section~\ref{sec:dimensionreduction}). 

\subsection{Hierarchical Pitman-Yor Process}
\label{sec:pitman}

In Assumptions (A)(B) of Theorem~\ref{thm:main} we have required several properties to be satisfied by 
the probability values $p_{i,n}$ and $p^\Upsilon_{i,n}$. Meanwhile, $p_{i,n}$'s and $p^\Upsilon_{i,n}$'s 
($1\leq i\leq n$, $n$ fixed) define distributions from which words can be generated. 
This setting is reminiscent of a Bayesian model 
where priors of word distributions are specified. 

Conversely, by the well-known de Finetti's Theorem, an exchangeable random sequence of words (i.e., 
given any sequence sample, all permutations of that sample occur with the same probability) can be seen 
as if the words are drawn i.i.d.~from a conditioned word distribution, where the distribution itself is drawn from 
a prior. A widely studied example is the Pitman-Yor Process \citep{pitman-yor97,pitman:book}; 
in this section, we use the process to define a generative model, from which Assumptions (A)(B) 
can be derived. 

\begin{defn}
A Pitman-Yor Process $PY(\alpha,\theta)$ $(0<\alpha<1, \theta>-\alpha)$ defines a prior for word 
distributions, which is the prior corresponding to the exchangeable random sequence generated by 
the following Chinese Restaurant Process: 
\begin{enumerate}
\item First, generate a new word. 
\item At each step, let $C(\varpi)$ be the count of word $\varpi$, and $C:=\sum_{\varpi}C(\varpi)$ 
the total count; let $N$ be the number of distinct words. Then:
\begin{enumerate}
\item[(2.1)] Generate a new word with probability $\dfrac{\theta + \alpha N}{\theta + C}$. 
\item[(2.2)] Or, generate a new copy of an existing word $\varpi$, with probability 
$\dfrac{C(\varpi) - \alpha}{\theta + C}$.
%choose an existing word $\varpi$, and generate a new instance of that word with probability 
%$\dfrac{C(\varpi) - \alpha}{\theta + C}$.
\end{enumerate}
\end{enumerate}
\end{defn}

\begin{defn}
In the above process $PY(\alpha,\theta)$, we define $p(\varpi):=\lim\dfrac{C(\varpi)}{C}$, where limit is taken at 
Step $\rightarrow\infty$. 
%For each word $\varpi$, define the probability value $p(\varpi):=\lim\dfrac{C(\varpi)}{C}$, where limit is taken at 
%Step $\rightarrow\infty$ in the above process $PY(\alpha,\theta)$. 
Fix a word index $i$ such that 
$p(\varpi_i)\geq p(\varpi_{i+1})$. Put $p_i:=p(\varpi_i)$. 
\end{defn}

\begin{thm}
\label{thm:ratioCN}
For a sequence generated by $PY(\alpha, \theta)$, 
we have $\lim C/N^{1/\alpha}=Z$ for some $Z$. 
\end{thm}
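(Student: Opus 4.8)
The plan is to track $N$ through the Chinese Restaurant Process. Let $N_C$ denote the number of distinct words once $C$ tokens have been generated, so that $N_1=1$ and $C$ plays the role of the running step index. Since $C/N_C^{1/\alpha}=(C^\alpha/N_C)^{1/\alpha}$, it suffices to prove that $N_C/C^\alpha$ converges almost surely to a random variable $S$ with $0<S<\infty$; then $Z:=S^{-1/\alpha}$ is the desired limit. So the whole problem is reduced to a law-of-large-numbers-type statement for the table counts of a two-parameter CRP, together with the non-degeneracy of the limit.

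The core step is to extract a nonnegative martingale. By rule (2.1) the $(C{+}1)$-st token opens a new table with conditional probability $(\theta+\alpha N_C)/(\theta+C)$, which after a short computation gives
$$
\mathbb{E}\bigl[\,N_{C+1}+\theta/\alpha \,\big|\, \mathcal{F}_C\bigr]=\frac{\theta+C+\alpha}{\theta+C}\bigl(N_C+\theta/\alpha\bigr).
$$
Hence, with $b_C:=\prod_{j=1}^{C-1}\frac{\theta+j}{\theta+j+\alpha}=\frac{\Gamma(\theta+C)\,\Gamma(\theta+\alpha+1)}{\Gamma(\theta+\alpha+C)\,\Gamma(\theta+1)}$, the process $M_C:=b_C\bigl(N_C+\theta/\alpha\bigr)$ is a martingale; it is nonnegative because $\theta>-\alpha$ forces $N_C+\theta/\alpha\geq 1+\theta/\alpha>0$ and every factor of $b_C$ is positive. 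By the martingale convergence theorem $M_C\to M_\infty$ almost surely, and Stirling's formula for the Gamma function yields $b_C\sim\kappa\,C^{-\alpha}$ with $\kappa=\Gamma(\theta+\alpha+1)/\Gamma(\theta+1)>0$, so $N_C/C^\alpha\to S:=M_\infty/\kappa$ almost surely.

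The remaining point, and the main obstacle, is to show $S>0$ almost surely, since the convergence argument by itself permits an atom at $0$. I would first show $(M_C)$ is bounded in $L^2$: expanding $\mathbb{E}[(N_{C+1}+\theta/\alpha)^2\mid\mathcal{F}_C]$ and using that $N_{C+1}-N_C$ is a $\{0,1\}$-valued variable (so equal to its own square) produces a recursion from which $\sup_C\mathbb{E}[M_C^2]<\infty$; then $M_C\to M_\infty$ in $L^2$ and $\mathbb{E}[S]=\mathbb{E}[M_1]/\kappa>0$. To exclude $\mathbb{P}(S=0)>0$ I would compute the rising-factorial moments $\mathbb{E}\bigl[\prod_{j=0}^{k-1}(N_C+\theta/\alpha+j)\bigr]$ by an analogous recursion, pass to the limit to obtain the moments of $S$, and recognize the resulting law as a (scaled, $\theta$-tilted) Mittag--Leffler distribution, which is absolutely continuous on $(0,\infty)$; alternatively one may simply cite the corresponding statement in \citet{pitman:book}. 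Either route gives $0<S<\infty$ almost surely and hence $\lim C/N^{1/\alpha}=Z$ with $Z=S^{-1/\alpha}$.
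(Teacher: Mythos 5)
Your proposal is correct in outline, but it is worth noting that the paper does not actually prove this statement at all: its ``proof'' is a one-line citation of Theorem 3.8 in \citet{pitman:book}. What you have written is essentially a reconstruction of the standard textbook argument behind that theorem. Your computations check out: the conditional expectation identity $\mathbb{E}[N_{C+1}+\theta/\alpha\mid\mathcal{F}_C]=\frac{\theta+C+\alpha}{\theta+C}(N_C+\theta/\alpha)$ is right, the compensator $b_C=\frac{\Gamma(\theta+C)\Gamma(\theta+\alpha+1)}{\Gamma(\theta+\alpha+C)\Gamma(\theta+1)}$ makes $M_C$ a nonnegative martingale (positivity using $\theta>-\alpha$ is handled correctly), and Stirling gives $b_C\sim\kappa C^{-\alpha}$, so a.s.\ convergence of $N_C/C^{\alpha}$ follows. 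You also correctly identify the genuine obstacle, namely ruling out an atom of the limit $S$ at $0$, which is needed for $C/N^{1/\alpha}$ to have a finite limit; $L^2$-boundedness alone only gives $\mathbb{E}[S]>0$. Your proposed fix --- computing the rising-factorial moments via the analogous product martingales, passing to the limit, and identifying the law as the $\theta$-tilted Mittag--Leffler distribution (whose moments are determinate and whose density is supported on $(0,\infty)$, exactly the density the paper records for $Z^{-\alpha}$ in its Equation for the distribution of $Z$) --- is sound, though as stated it is a sketch rather than a completed argument; the fallback of citing \citet{pitman:book} for that step is no weaker than what the paper itself does for the whole theorem. In short: the paper buys brevity by outsourcing everything to Pitman, while your route makes the mechanism explicit and connects directly to the Mittag--Leffler density the paper uses later, at the cost of having to carry out (or cite) the moment/absolute-continuity step to get strict positivity of the limit.
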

\begin{proof}
This is Theorem 3.8 in \citet{pitman:book}. 
\end{proof}

\begin{thm}
\label{thm:pitmanZipf}
We have $\lim\limits_{i\rightarrow\infty} p_i\cdot i^{1/\alpha}\Gamma(1-\alpha)^{1/\alpha}=Z$, where 
$Z$ is the same as in Theorem~\ref{thm:ratioCN}. 
\end{thm}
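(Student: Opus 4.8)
The plan is to convert the statement about the ranked frequencies $p_1>p_2>\cdots$ into a statement about their counting function $M(x):=\#\{j:p_j>x\}$ — a decreasing step function that is essentially the inverse of $i\mapsto p_i$ — and to control the behaviour of $M$ near $0$ through the number $N$ of distinct words in a finite prefix of the Chinese Restaurant Process, whose growth rate is exactly what Theorem~\ref{thm:ratioCN} supplies. Concretely, I would first prove $\lim_{x\downarrow0}x^{\alpha}M(x)=L$ a.s.\ for a positive constant $L$ expressible through $Z$, and then invert: from $M(x)\sim Lx^{-\alpha}$ together with monotonicity of $M$ one reads off $p_i\sim L^{1/\alpha}i^{-1/\alpha}$, i.e.\ $p_i\,i^{1/\alpha}\to L^{1/\alpha}$, and a bookkeeping of constants then produces the claimed identity with the factor $\Gamma(1-\alpha)^{1/\alpha}$ on the left.

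The steps, in order: (i) the CRP token sequence is exchangeable, so by de Finetti's theorem, conditionally on the limiting frequencies $(p_j)$ the first $C$ tokens are i.i.d.\ draws from the discrete law with masses $(p_j)$; hence $N$ has the same conditional distribution as the number $K_C$ of distinct values among $C$ i.i.d.\ samples from $(p_j)$. (ii) Theorem~\ref{thm:ratioCN} gives $N/C^{\alpha}\to\sigma$ a.s.\ for a positive $\sigma$ determined by $Z$, so by (i) also $K_C/C^{\alpha}\to\sigma$ a.s. (iii) Writing $\mathbb{E}[K_C\mid(p_j)]=\sum_j\bigl(1-(1-p_j)^{C}\bigr)=\int_0^1 M(x)\,C(1-x)^{C-1}\,\mathrm{d}x$ and using that $K_C$ concentrates about this conditional mean (its conditional variance is of smaller order than the square of its conditional mean — the standard occupancy-scheme bound), one obtains $\int_0^1 M(x)\,C(1-x)^{C-1}\,\mathrm{d}x\sim\sigma C^{\alpha}$. (iv) This is the Abelian side of Karlin's occupancy asymptotics; since $M$ is monotone the Tauberian converse applies and forces $M(x)\sim\bigl(\sigma/\Gamma(1-\alpha)\bigr)x^{-\alpha}$ as $x\downarrow0$, the constant $\Gamma(1-\alpha)=\int_0^{\infty}u^{-\alpha}e^{-u}\,\mathrm{d}u$ being the transform factor. (v) Monotone inversion of $M(x)\sim Lx^{-\alpha}$ with $L=\sigma/\Gamma(1-\alpha)$ yields $p_i\sim L^{1/\alpha}i^{-1/\alpha}$, that is $p_i\,i^{1/\alpha}\Gamma(1-\alpha)^{1/\alpha}\to\sigma^{1/\alpha}$, which equals $Z$ by the identification in step (ii).

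I expect step (iv) — the Tauberian inversion — to be the main obstacle: passing from the asymptotics of the smoothed quantity $\int_0^1 M(x)\,C(1-x)^{C-1}\,\mathrm{d}x$ back to a pointwise regular-variation statement for $M$ with the clean constant $\sigma/\Gamma(1-\alpha)$. Such Abelian-to-Tauberian passages always require a side condition; here monotonicity of $M$ supplies it, but one must apply the implication deterministically for almost every realization of $(p_j)$ rather than merely in expectation, handle the fact that the transform parameter $C$ ranges over integers only, and check that the limit constant is strictly positive and finite — which is where $\alpha\in(0,1)$ and the positivity of the limit in Theorem~\ref{thm:ratioCN} enter. The concentration bound in (iii) and the inversion in (v) are routine, and (i) is immediate from exchangeability. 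An alternative to (iii)--(v) is to invoke the representation of $\mathrm{PD}(\alpha,\theta)$ by the ranked jumps of a stable$(\alpha)$ subordinator, for which the $i^{-1/\alpha}$ decay with exactly this constant is classical; I would nonetheless prefer the counting-function route, since it stays self-contained given Theorem~\ref{thm:ratioCN}.
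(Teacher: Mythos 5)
The paper offers no argument for this statement at all: its ``proof'' is a one-line citation of Lemma 3.11 in \citet{pitman:book}. Your proposal is therefore a genuinely different, self-contained route, and it is essentially the classical Karlin occupancy-scheme argument: pass to the counting function $M(x)=\#\{j:p_j>x\}$, use de Finetti/exchangeability to identify $N$ with the number of distinct values $K_C$ among $C$ i.i.d.\ draws from $(p_j)$, relate the conditional mean $\sum_j\bigl(1-(1-p_j)^C\bigr)=\int_0^1 M(x)\,C(1-x)^{C-1}\,\mathrm{d}x$ to the growth rate supplied by Theorem~\ref{thm:ratioCN}, invert by a Karamata-type Tauberian theorem with monotonicity of $M$ as the side condition, and finally invert $M$ to get $p_i\sim L^{1/\alpha}i^{-1/\alpha}$. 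Steps (i)--(v) are sound in outline, you correctly isolate the Tauberian passage (and its almost-sure, integer-parameter, positivity caveats) as the only nonroutine point, and what this buys over the paper is a proof resting on Theorem~\ref{thm:ratioCN} alone rather than on a second external citation.

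The one genuine gap is the final identification of the constant, which you assert rather than compute, and as written it does not come out to $Z$. Theorem~\ref{thm:ratioCN} says $C/N^{1/\alpha}\to Z$, so your $\sigma:=\lim N/C^{\alpha}$ equals $Z^{-\alpha}$; your own chain $M(x)\sim\bigl(\sigma/\Gamma(1-\alpha)\bigr)x^{-\alpha}$ and $p_i\,i^{1/\alpha}\to\bigl(\sigma/\Gamma(1-\alpha)\bigr)^{1/\alpha}$ then gives $\lim_i p_i\,i^{1/\alpha}\Gamma(1-\alpha)^{1/\alpha}=\sigma^{1/\alpha}=Z^{-1}$, not $Z$. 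A sanity check in the $\theta=0$ case via the stable-subordinator representation: if $\tau$ denotes the subordinator's value at time $1$, then $K_n/n^{\alpha}\to\tau^{-\alpha}$, hence $Z=\lim C/N^{1/\alpha}=\tau$, while the ranked frequencies satisfy $p_i\,i^{1/\alpha}\Gamma(1-\alpha)^{1/\alpha}\to\tau^{-1}$. So your argument actually proves the statement with $Z$ replaced by $Z^{-1}$ (equivalently, it proves the printed statement if Theorem~\ref{thm:ratioCN} is read as $N^{1/\alpha}/C\to Z$); indeed the two theorems as printed are mutually consistent only up to this inversion, and your derivation exposes exactly that. A correct write-up must carry the relation $\sigma=Z^{-\alpha}$ through explicitly and state the limit accordingly, rather than declaring ``which equals $Z$ by the identification in step (ii).''
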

\begin{proof}
This is Lemma 3.11 in \citet{pitman:book}. 
\end{proof}

Theorem~\ref{thm:pitmanZipf} shows that, if words are generated by a Pitman-Yor Process $PY(\alpha, \theta)$, 
then $p_i$ has a power law tail of index $\alpha$. It is in the same form as Assumption (A), and when 
$\alpha\approx 1$, it approximates the Zipf's Law. 

For two sequences generated by $PY(\alpha, \theta)$, their corresponding 
$Z$ as in Theorem~\ref{thm:ratioCN} may differ 
(since the sequences are random), even if %and Theorem~\ref{thm:pitmanZipf}
they are generated 
with the same hyper-parameters $\alpha$ and $\theta$. Nevertheless, the limit always exists, and $Z$ 
follows 
%is known to follow 
a statistical distribution.
% that can be viewed as characterizing the prior of the word distribution $\{p_i\}$. 
The probability density of $Z^{-\alpha}$ is %explicitly 
derived in \citet{pitman:book}, Theorem 3.8: 
\begin{equation}
\label{eq:distZ}
-\D\mathbb{P}(x\leq Z^{-\alpha})=\frac{\Gamma(\theta+1)}{\Gamma(\theta/\alpha+1)}x^{\theta/\alpha}g_\alpha(x)\D x \quad(x>0), 
\end{equation}
where $g_\alpha(x)$ is the Mittag-Leffler density function: 
$$
g_\alpha(x):=\frac{1}{\pi\alpha}\sum_{k=0}^{\infty}\frac{(-1)^{k+1}}{k!}\Gamma(\alpha k + 1)\sin(\pi\alpha k)x^{k-1}. 
$$
In this article, we only need the fact that $\lim\limits_{x\rightarrow 0}xg_\alpha(x)$ is a nonzero constant. 

Next, we consider the co-occurrence probability $p^\Upsilon(\varpi)$, conditioned on $\varpi$ being in the context of 
a target $\Upsilon$. One first notes that $p^\Upsilon(\varpi)$ is likely to be related 
to $p(\varpi)$; i.e., frequent words are likely to occur in every context, regardless of target. 
To model this intuition, the idea of Hierarchical Pitman-Yor Process \citep{teh:2006:COLACL} is to 
adapt $PY(\alpha,\theta)$ such that in each step, if a new word is to be generated, it is no longer generated 
brand new, but drawn from another Pitman-Yor Process instead. This second Pitman-Yor Process serves as 
a ``reference'' which controls how frequently a word is likely to occur. More precisely, a Hierarchical 
Pitman-Yor Process $H\!PY(\alpha_1,\theta_1;\alpha_2,\theta_2)$ generates sequences as follows. 

\begin{defn}
\label{defn:hpy}
In $H\!PY(\alpha_1,\theta_1;\alpha_2,\theta_2)$, instead of generating words directly, one generates a 
``reference'' at each step, where the reference can 
refer to new words or existing words. We use $\varrho$ to denote a reference and $\varpi^\varrho$ the word 
referred to by the reference. 
\begin{enumerate}
\item First step, generate a new reference which refers to a new word. 
\item At each step, let $C(\varrho)$ be the count of reference $\varrho$, and 
$C(\varpi):=\sum_{\varpi^\varrho=\varpi}C(\varrho)$ the count of all references referring to word $\varpi$; 
let $C:=\sum_{\varpi}C(\varpi)$ be the total count, $N_r(\varpi)$ the number of distinct references referring 
to $\varpi$, and $N_r:=\sum_{\varpi}N_r(\varpi)$ the total number of distinct references; finally, let 
$N_w$ be the number of distinct words. %Then: 
\begin{enumerate}
\item[(2.1)] Generate a new reference referring to a new word, with probability 
$$\frac{1}{\theta_1 + C}\cdot\frac{\theta_1+\alpha_1N_r}{\theta_2+N_r}\cdot(\theta_2+\alpha_2N_w).$$ 
\item[(2.2)] Generate a new reference referring to an existing word $\varpi$, with probability 
$$\frac{1}{\theta_1 + C}\cdot\frac{\theta_1+\alpha_1N_r}{\theta_2+N_r}\cdot(N_r(\varpi)-\alpha_2).$$ 
\item[(2.3)] Or, generate a new copy of an existing reference $\varrho$, with probability 
$$\frac{1}{\theta_1 + C}\cdot(C(\varrho)-\alpha_1).$$ 
\end{enumerate}
\end{enumerate}
\end{defn}

It is easy to see from definition that $H\!PY(\alpha_1,\theta_1;\alpha_2,\theta_2)$ generates an exchangeable 
word sequence; and if we focus on distinct references (i.e., ignoring (2.3), consider $N_r(\varpi)$ as 
``the count of word $\varpi$'' in the ordinary Pitman-Yor Process), then the process becomes 
%the numbers $N_r(\varpi)$ and 
%$N_r$), then it becomes the Pitman-Yor Process 
$PY(\alpha_2,\theta_2)$. We assume this is the 
same process which defines word probability $p(\varpi)$, so 
$$
p(\varpi)=\lim\frac{N_r(\varpi)}{N_r}; 
$$
and we define the conditional probability $p^\Upsilon(\varpi)$ as: 
$$
p^\Upsilon(\varpi):=\lim\frac{C(\varpi)}{C}. 
$$
Thus, $H\!PY(\alpha_1,\theta_1;\alpha_2,\theta_2)$ indeed connects $p^\Upsilon(\varpi)$ to $p(\varpi)$. 
This connection between word probability and conditioned word probability has been explored in 
\citet{teh:2006:COLACL}; in which, it is used in an $n$-gram language model to connect 
the bigram probability $p(w|u)$ to unigram probability $p(w)$, for deriving a smoothing method. 

Unfortunately, a precise analysis on the above $p^\Upsilon(\varpi)$ is beyond the reach of the authors; 
instead, we consider a slightly modified process which is much simpler for our purpose. 

\begin{defn}
A \emph{Modified Hierarchical Pitman-Yor Process} 
$M\!H\!PY(\alpha_1,\theta_1;\alpha_2,\theta_2)$ generates sequences as follows. Using the same notation as in 
Definition~\ref{defn:hpy}: 
\begin{enumerate}
\item First step, generate a new reference which refers to a new word. 
\item At each step: 
\begin{enumerate}
\item[(2.1)] Generate a new reference referring to a new word, with probability 
$$\frac{1}{D}\cdot(\theta_2+\alpha_2N_w).$$ 
\item[(2.2)] Generate a new reference referring to an existing word $\varpi$, with probability 
$$\frac{1}{D}\cdot(N_r(\varpi)-\alpha_2).$$ 
\item[(2.3)] Or, generate a new copy of an existing reference $\varrho$, with probability 
$$\frac{1}{D}\cdot\frac{N_r(\varpi^\varrho)-\alpha_2}{\theta_1 + \alpha_1N_r(\varpi^\varrho)}\cdot(C(\varrho)-\alpha_1).$$ 
\end{enumerate}
In above, $D$ is a normalization factor that makes the probability values sum to 1: 
$$
D:=\theta_2+ \alpha_2N_w + \sum_\varpi \frac{N_r(\varpi)-\alpha_2}{\theta_1 + \alpha_1N_r(\varpi)}\cdot(C(\varpi)+\theta_1). 
$$
\end{enumerate}
\end{defn}

$M\!H\!PY(\alpha_1,\theta_1;\alpha_2,\theta_2)$ modifies $H\!PY(\alpha_1,\theta_1;\alpha_2,\theta_2)$ 
by canceling $(\theta_1+\alpha_1N_r)/(\theta_2+N_r)$ in (2.1) and (2.2), and scaling (2.3) by 
a $(N_r(\varpi^\varrho)-\alpha_2)/(\theta_1 + \alpha_1N_r(\varpi^\varrho))$ factor instead. It is noteworthy that, 
since $\lim N_r=\infty$ and $\lim N_r(\varpi^\varrho)=\infty$, we have 
$$
\lim\frac{\theta_1+\alpha_1N_r}{\theta_2+N_r}=\alpha_1\quad\text{and}\quad
\lim\frac{N_r(\varpi^\varrho)-\alpha_2}{\theta_1 + \alpha_1N_r(\varpi^\varrho)}=\frac{1}{\alpha_1}. 
$$
%because both $N_r$ and $N_r(\varpi^\varrho)$ go to $\infty$. 
So the asymptotic behaviors of $M\!H\!PY(\alpha_1,\theta_1;\alpha_2,\theta_2)$ and $H\!PY(\alpha_1,\theta_1;\alpha_2,\theta_2)$ are similar. 

A favorable property of $M\!H\!PY(\alpha_1,\theta_1;\alpha_2,\theta_2)$ is that, like 
$H\!PY(\alpha_1,\theta_1;\alpha_2,\theta_2)$, it becomes $PY(\alpha_2,\theta_2)$ when one focuses on 
distinct references, so we have 
\begin{equation}
\label{eq:pNN}
p(\varpi)=\lim\frac{N_r(\varpi)}{N_r}
\end{equation}
as before; besides, if restricted to a specific word $\varpi$ (i.e., ignoring (2.1), only consider the 
references referring to $\varpi$, and regard references as ``words'', $C(\varrho)$ as ``the count of word 
$\varrho$'', and $C(\varpi):=\sum_{\varpi^\varrho=\varpi}C(\varrho)$ as ``the total count'', $N_r(\varpi)$ as 
``the number of distinct words'' in the ordinary Pitman-Yor Process), then the process becomes 
$PY(\alpha_1,\theta_1)$. Thus, by Theorem~\ref{thm:ratioCN} %we have 
\begin{equation}
\label{eq:CNZ}
\lim\frac{C(\varpi)}{N_r(\varpi)^{1/\alpha_1}}=Z_\varpi\quad\text{for some $Z_\varpi$}. 
\end{equation}
Therefore, combining \eqref{eq:pNN} and \eqref{eq:CNZ} we have 
$$
p^\Upsilon(\varpi):=\lim\frac{C(\varpi)}{C}=p(\varpi)^{1/\alpha_1}Z_\varpi\lim\frac{N_r^{1/\alpha_1}}{C}. 
$$
So $p^\Upsilon(\varpi)/p(\varpi)^{1/\alpha_1}$ is a constant multiple of $Z_\varpi$, which follows a distribution 
specified in Equation \eqref{eq:distZ}; and it is easy to see that $Z_\varpi$'s for different $\varpi$ are 
mutually independent. Thus, we have obtained Assumption (B1). 

As for Assumption (B2), we assume $\theta_1=1$ and derive the distribution of $Z_\varpi$ from 
\eqref{eq:distZ}:
$$
-\D\mathbb{P}(z\leq Z_w)=\frac{\alpha_1}{\Gamma(1/\alpha_1+1)}\frac{z^{-\alpha_1}g_{\alpha_1}(z^{-\alpha_1})}{z^2}\D z \quad(z>0). 
$$
Since $\lim\limits_{x\rightarrow 0}xg_\alpha(x)$ is a nonzero constant, the above probability density 
is of order $o(z^{-2})$ when $z\rightarrow\infty$, so the random variable $Z_w$ has a power law tail of index $1$. 
Thus, Assumption (B2) is approximately satisfied when $\alpha_1\approx 1$ and $\theta_1=1$. 

\section{Applications}
\label{sec:applications}

In this section, we demonstrate three applications of our theory. 

\subsection{The Choice of Function $F$}
\label{sec:functionF}

The condition $\lambda<0.5$ specifies a nontrivial constraint on the function 
$F$. In Section~\ref{sec:effectF} we have shown that this is a necessary condition for the norms of natural phrase 
vectors to converge. The convergence of norms is an outstanding property that might affect not only 
additive composition but also the 
composition ability of vector representations in general. Specifically, we note that 
$F(x)=\ln{x}$ when $\lambda=0$, and $F(x)=\sqrt{x}$ when $\lambda=0.5$. It is straightforward to 
predict that these functions might perform better in composition tasks than functions that have larger 
$\lambda$, 
%do not have steep slope at $x\approx 0$, 
such as $F(x):=x$ or $F(x):=x\ln{x}$. In Section~\ref{sec:expfunctionF}, we show 
experiments that verify the necessity of $\lambda<0.5$ for our bias bound to hold, and in Section~\ref{sec:exteval} 
we show that $F$ indeed drastically affects additive compositionality as judged by human annotators; 
while $F(x):=\ln{x}$ and $F(x):=\sqrt{x}$ perform similarly well, $F(x):=x$ and $F(x):=x\ln{x}$ are much worse. 

Different settings of function $F$ have been considered in previous research, and 
speculations have been made about the reason of semantic additivity of some of the vector representations. 
In \citet{pennington-socher-manning14}, the authors noted that logarithm is a homomorphism from multiplication 
to addition, and used this property to justify $F(x):=\ln{x}$ for training semantically additive 
word vectors, but based on the unverified hypothesis that multiplications of co-occurrence probabilities 
have specialties in semantics. On the other hand, \citet{HellingerPCA:EACL} proposed to use $F(x):=\sqrt{x}$, 
which is motivated by the Hellinger distance between two probability distributions, and reported it being 
better than $F(x):=x$. \citet{stratos-collins-hsu:2015:ACL-IJCNLP} proposed a similar but more general 
and better-motivated model, which attributed $F(x):=\sqrt{x}$ to an optimal choice that stabilizes 
the variances of Poisson random variables. Based on the assumption that co-occurrence counts are generated 
by a Poisson Process, the authors pointed out that $F(x):=\sqrt{x}$ may have the effect of stabilizing the 
\emph{variance} 
in estimating word vectors. In contrast, our theory shows clearly that $F$ affects the \emph{bias} of additive 
composition, besides variance. All in all, none of the previous research can explain why $F(x):=\ln{x}$ 
and $F(x):=\sqrt{x}$ are \emph{both} good choices but $F(x):=x$ is not. 

Intuitively, the condition $\lambda<0.5$ requires $F(x)$ to decrease steeply as $x$ tends to $0$. 
The steep slope has effect of ``amplifying'' the fluctuations of lower co-occurrence probabilities, 
and ``suppressing'' higher ones as a result. 
Formally, this can be read from Lemma~\ref{lem:ycalc}, which 
%As for more quantitative effects 
%Moreover, in Lemma~\ref{lem:ycalc} we have shown that 
shows that $\var[F(p^\Upsilon_{i,n}\!+\!1/n)]$ %is in the same 
scales with $\varphi_{i,n}=p_{i,n}\bigl(p_{i,n}+(\beta n)^{-1}\bigr)^{-1+2\lambda}$. When $\lambda<0.5$, 
the $\bigl(p_{i,n}+(\beta n)^{-1}\bigr)^{-1+2\lambda}$ factor decreases as $p_{i,n}$ increases, and the 
decrease becomes faster when $\lambda$ is smaller. Thus, in the vector representations we consider, 
higher co-occurrence probabilities are ``suppressed'' more when $\lambda$ is smaller. 

%However, in practice 
%higher co-occurrence probabilities can be more precisely estimated, and they often correspond to 
%important words such as syntax markers. Therefore, though our bias bound only requires 
%$\lambda<0.5$, empirically 
%one might obtain better semantic representations by adopting a $\lambda$ that is not too small. 
%hurt the semantic representations when $\lambda$ is too small. 

\subsection{Handling Word Order in Additive Composition}
\label{sec:wordorder}

By considering the vector representation $\mathbf{w}^{\{st\}}_n$ we have ignored word order and 
conflated the phrases ``$s$ $t$'' and ``$t$ $s$''. Though the meanings of the two might be 
related somehow, to treat a compositional framework as approximating $\mathbf{w}^{\{st\}}_n$ instead of 
$\mathbf{w}^{st}_n$ would certainly be troublesome, especially when one tries to extend our theory to longer 
phrases or even sentences. As the following example~\citep{Landauer97howwell} demonstrates, 
meanings of sentences 
may differ greatly as word order changes. 
\begin{enumerate}
\item[a.] \textit{It was not the sales manager who hit the bottle that day, but the office worker with 
the serious drinking problem.}
\item[b.] \textit{That day the office manager, who was drinking, hit the problem sales worker with a bottle, 
but it was not serious.}
\end{enumerate}
Thus, it is necessary to handle the changes of meaning brought by different word order. Traditionally, 
additive composition is considered unsuitable for this purpose, because one always has 
$\mathbf{w}^s_{n}+\mathbf{w}^t_{n}=\mathbf{w}^t_{n}+\mathbf{w}^s_{n}$. 
However, the commutativity can be broken by defining different contexts for ``left-hand-side'' words 
and ``right-hand-side'' words, denoted by $t\bullet$ and $\bullet t$, respectively. Then, 
the co-occurrence probabilities 
$p^{t\bullet}_{i,n}$ and $p^{\bullet t}_{i,n}$ will be different, so 
$\frac{1}{2}(\mathbf{w}^{s\bullet}_n+\mathbf{w}^{\bullet t}_n)$ and 
$\frac{1}{2}(\mathbf{w}^{t\bullet}_n+\mathbf{w}^{\bullet s}_n)$ are different vectors. In this section, we 
propose the \emph{Near-far Context}, which specifies contexts for $s\bullet$ and $\bullet t$ such that 
the additive composition $\frac{1}{2}(\mathbf{w}^{s\bullet}_n+\mathbf{w}^{\bullet t}_n)$ approximates 
the natural vector $\mathbf{w}^{st}_n$ for \emph{ordered} phrase ``$s$ $t$''. 

\begin{figure}
\centering
\begin{minipage}{.48\textwidth}
  \centering
  \includegraphics[scale=0.38,bb=0 0 450 120,clip]{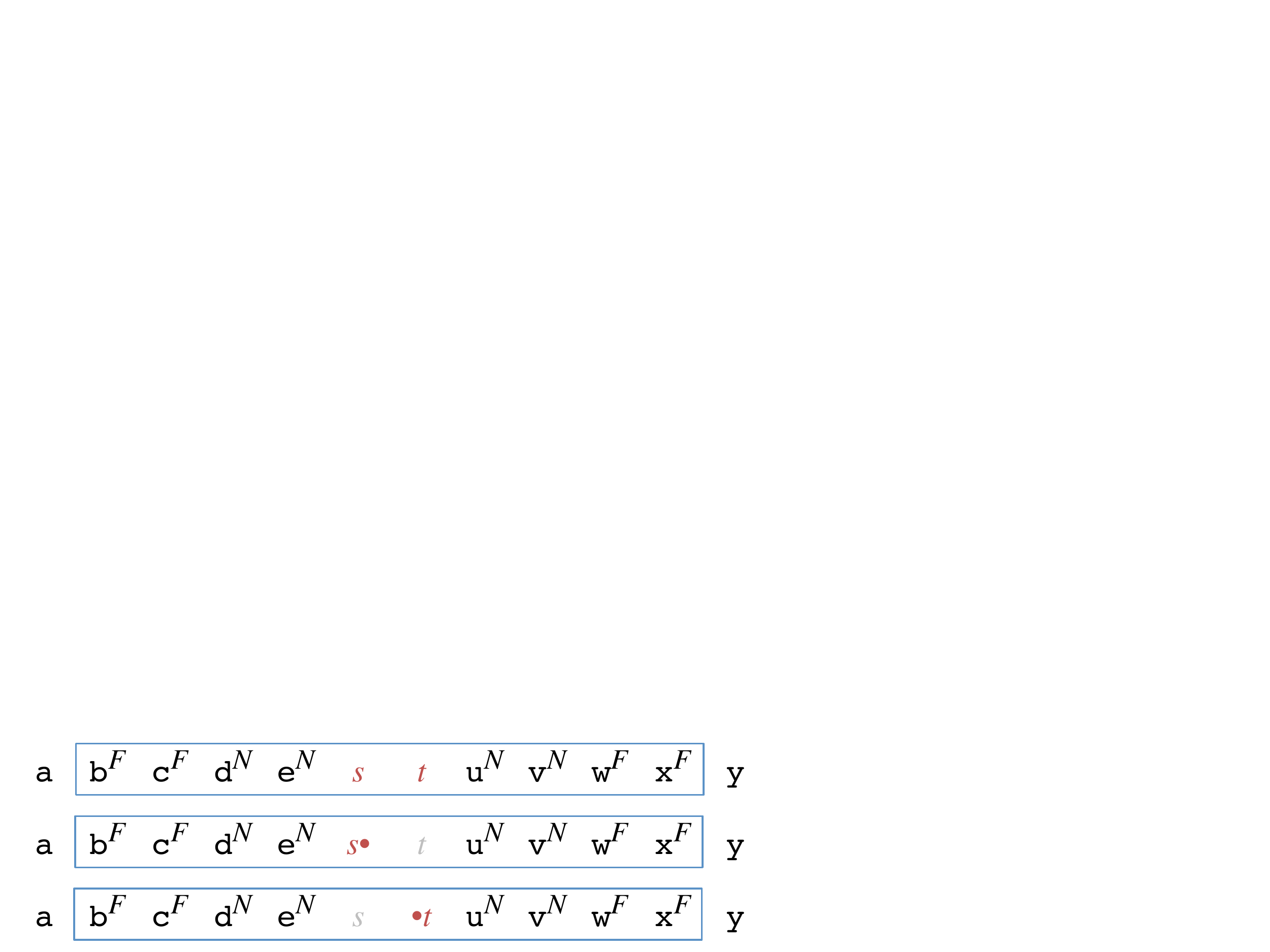}
  \captionof{figure}{Surrounding two-word phrase ``$s$ $t$'', the Near-far Contexts assigned to 
  $s\bullet$, $\bullet t$ and $st$ are the same.}
  \label{fig:nearfar}
\end{minipage}%
\begin{minipage}{.04\textwidth}
  ~
\end{minipage}%
\begin{minipage}{.48\textwidth}
  \centering
  \includegraphics[scale=0.38,bb=0 0 450 120,clip]{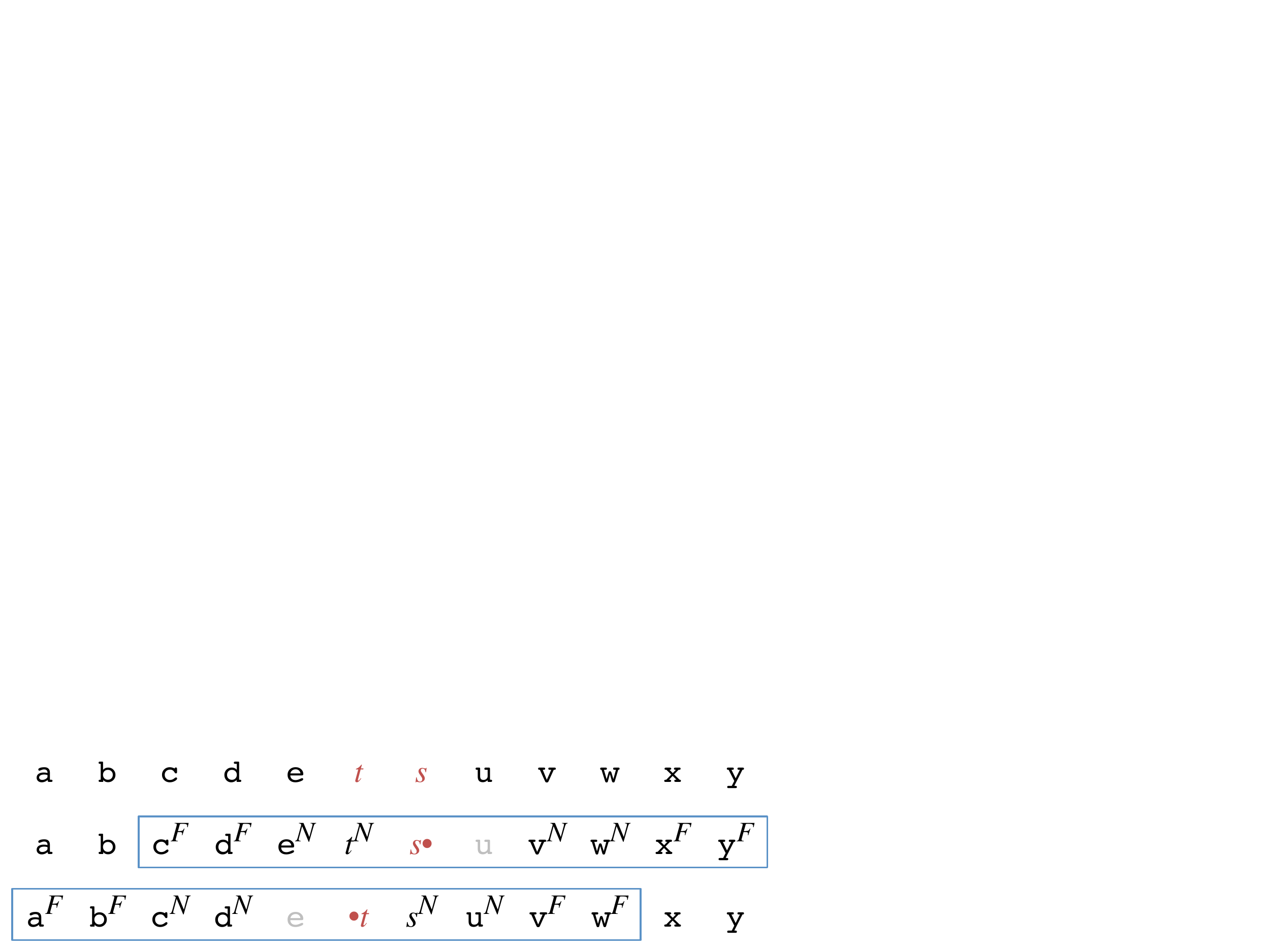}
  \captionof{figure}{Surrounding phrase ``$t$ $s$'', word order reversed, the Near-far Contexts assigned to 
  $s\bullet$ and $\bullet t$ differ in their \textbf{\textit{N}}-\textbf{\textit{F}} labels.}
  \label{fig:nearfar_reverse}
\end{minipage}
\end{figure}

%considering 
%two different sets of word vectors, $\mathbf{w}_{t^L}$ and $\mathbf{w}_{t^R}$, for ``left-hand-side'' words 
%and ``right-hand-side'' words respectively, and approximating the 
%phrase vector $\mathbf{w}_{t_1\_t_2}$ by $\frac{1}{2}(\mathbf{w}_{t_1^L}+\mathbf{w}_{t_2^R})$. 
%In this section, we demonstrate a method for constructing such word vectors. 

%\begin{figure}[t]
%\centering
%\includegraphics[scale=0.6,bb=0 0 600 150,clip]{fig_nearfarbigram.pdf}
%%\vspace{-20pt}
%\caption{The Near-far Contexts for the target word $\texttt{w}_5^L$ marked by $L$, the ordered 
%bigram $\texttt{w}_5\_\texttt{w}_6$, and the target word 
%$\texttt{w}_6^R$ marked by $R$.}
%%\vspace{-4pt}
%\label{fig:nearfarbigram}
%\end{figure}

\begin{defn}
\label{defn:nflabel}
In \emph{Near-far Context}, context words are assigned labels, either \textbf{\textit{N}} or \textbf{\textit{F}}. 
For constructing vector representations, we use a lexicon of  \textbf{\textit{N}}-\textbf{\textit{F}} labeled 
words, and regard words with different labels as different entries in the lexicon. For any target, we 
label the nearer two words to each side by \textbf{\textit{N}}, and the farther two words to each side 
by \textbf{\textit{F}}. Except that, for the ``left-hand-side'' word $s\bullet$ we skip one word adjacent to 
the right; and similarly, for the ``right-hand-side'' word $\bullet t$ we skip one word adjacent to the left 
(Figure~\ref{fig:nearfar}). 
\end{defn}

The idea behind Near-far Context is that, in the context of phrase ``$s$ $t$'', each word is assigned an 
\textbf{\textit{N}}-\textbf{\textit{F}} label the same as in the context of $s\bullet$ and 
$\bullet t$ (Figure~\ref{fig:nearfar}). 
On the other hand, for targets $s$ and $t$ occurring in the order-reversed phrase ``$t$ $s$'', context words 
are labeled differently for $s\bullet$ and $\bullet t$ (Figure~\ref{fig:nearfar_reverse}). As we discussed 
in Section~\ref{sec:biaspractical}, the key fact about additive composition is 
that if a word token $t$ comes from phrase ``$s$ $t$'' or ``$t$ $s$'', the context for this token of $t$ is 
almost the same as 
the context of ``$s$ $t$'' or ``$t$ $s$''. By introducing different labels for context words of $t\bullet$ and 
$\bullet t$, we 
are able to distinguish ``$s$ $t$'' from ``$t$ $s$''. 
More precisely, similar to our discussion in Section~\ref{sec:sketch}, the common component of 
$w^{s\bullet}_{i,n}$ and $w^{\bullet t}_{i,n}$ will survive in the average 
$\frac{1}{2}(w^{s\bullet}_{i,n}+w^{\bullet t}_{i,n})$, whereas independent ones 
will cancel out each other. Thus, the additive composition 
$\frac{1}{2}(\mathbf{w}^{s\bullet}_n+\mathbf{w}^{\bullet t}_n)$ will become closer to $\mathbf{w}^{st}_n$ rather than 
$\mathbf{w}^{ts}_n$, because $s\bullet$ and $\bullet t$ share context surrounding 
``$s$ $t$'' but not ``$t$ $s$''.% (Figure~\ref{fig:nearfar} and Figure~\ref{fig:nearfar_reverse}). 

\begin{defn}
\label{defn:nfdecomp}
Formally, as analogue to Definition~\ref{defn:collo}, we define target 
${s\bullet}\backslash t$ which counts every $s\bullet$ not at the left of word $t$. We denote 
$\pi_{{s\bullet}\backslash t}$ the probability of $s$ 
being not at the left of $t$, conditioned on its occurrence. 
Practically, one can estimate $(1-\pi_{{s\bullet}\backslash t})$ by $C(st)/C(s)$. 
Similarly, we define $s/{\bullet t}$ and $\pi_{s/{\bullet t}}$. Then, we have equations 
%(resp.~$(1-\pi_{s/{\bullet t}})$) by the ratio of counts 
%$C(st)/C(s)$ (resp.~$C(st)/C(t)$). 
%%Then, we have equations 
%%
%%(resp.~$\pi_{s/{\bullet t}}$) to denote
%%
%%(resp.~$s/{\bullet t}$) which 
%%counts every 
%%$s\bullet$ (resp.~$\bullet t$) not at the left (resp.~right) of word $t$ (resp.~$s$). And we use 
%%$\pi_{{s\bullet}\backslash t}$ (resp.~$\pi_{s/{\bullet t}}$) to denote the probability of $s$ 
%%(resp.~$t$) not at the left (resp.~right) of $t$ (resp.~$s$), conditioned on its occurrence. 
%%Practically, one can estimate $(1-\pi_{{s\bullet}\backslash t})$ (resp.~$(1-\pi_{s/{\bullet t}})$) by the ratio of counts 
%%$C(st)/C(s)$ (resp.~$C(st)/C(t)$). 
\begin{align*}
%\label{eq:dec1}
p^{s\bullet}_{i,n} & = \pi_{{s\bullet}\backslash t}p^{{s\bullet}\backslash t}_{i,n}+(1-\pi_{{s\bullet}\backslash t})p^{st}_{i,n} \quad\text{for all $i,n$}  \\
%\label{eq:dec2}
p^{\bullet t}_{i,n} & = \pi_{s/{\bullet t}}p^{s/{\bullet t}}_{i,n}+(1-\pi_{s/{\bullet t}})p^{st}_{i,n} \quad\text{for all $i,n$} 
\end{align*}
as parallel to \eqref{eq:collo}. 
\end{defn}

The following claim is parallel to Claim~\ref{claim:biasbound}.
\begin{clm}
\label{claim:nearfar}
Under conditions parallel to Claim~\ref{claim:biasbound}, we have 
$$
\lim_{n\rightarrow\infty}\mathcal{B}^{st}_{n}:=
\lVert \mathbf{w}^{st}_n-\frac{1}{2}(\mathbf{w}^{s\bullet}_n + \mathbf{w}^{\bullet t}_n) \rVert
\leq
\sqrt{\frac{1}{2}(\pi_{{s\bullet}\backslash t}^2+\pi_{s/{\bullet t}}^2+\pi_{{s\bullet}\backslash t}\pi_{s/{\bullet t}})}. 
$$
\end{clm}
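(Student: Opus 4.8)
The plan is to mirror the proof of Theorem~\ref{thm:main} step for step, after installing the Near-far analogues of Assumptions (A)--(C). The only genuinely new ingredient is that a token of $t$ occurring as $\bullet t$ inside the ordered phrase ``$s$ $t$'' sees exactly the same \textbf{\textit{N}}-\textbf{\textit{F}} labelled context as the phrase target $st$ does, and symmetrically for $s\bullet$; but this is already built into Definition~\ref{defn:nflabel} --- the one-word skip is precisely what aligns the labels (Figure~\ref{fig:nearfar}) --- and it is what makes the two decomposition identities of Definition~\ref{defn:nfdecomp} hold. With those identities in hand the correspondence between the two settings is direct: $st$ plays the role of $\{st\}$; $s\bullet\backslash t$ and $s/\bullet t$ play the roles of $s/t\backslash s$ and $t/s\backslash t$; the word targets $s\bullet$ and $\bullet t$ play the roles of $s$ and $t$; and $\pi_{s\bullet\backslash t}$, $\pi_{s/\bullet t}$ play the roles of $\pi_{s/t\backslash s}$, $\pi_{t/s\backslash t}$. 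Since the Near-far lexicon of labelled words has size at most $2n$, passing to it changes nothing asymptotically; concretely I would re-impose Assumption (A) on the labelled lexicon, Assumption (B) with $\Upsilon$ ranging over the targets $st$, $s\bullet\backslash t$ and $s/\bullet t$, and Assumption (C) in the form: $p^{s\bullet\backslash t}_{i,n}$ and $p^{s/\bullet t}_{i,n}$ are independent, while $F(p^{s\bullet\backslash t}_{i,n}\!+\!1/n)$ and $F(p^{s/\bullet t}_{i,n}\!+\!1/n)$ are each positively correlated with $F(p^{st}_{i,n}\!+\!1/n)$ (because $s\bullet\backslash t$, $s/\bullet t$ and $st$ all contain the word of one side).

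Under these hypotheses the whole toolchain behind Theorem~\ref{thm:main} transfers. Lemma~\ref{lem:ycalc} and Lemma~\ref{lem:phicalc} use only Assumptions (A)(B) together with $\mathbb{E}[F(X)^2]<\infty$, so they hold for each of the three targets; Lemma~\ref{lem:lln} and Corollary~\ref{cor:normone} then give $\lim_{n\rightarrow\infty}\lVert\mathbf{w}^{st}_n\rVert=\lim_{n\rightarrow\infty}\lVert\mathbf{w}^{s\bullet\backslash t}_n\rVert=\lim_{n\rightarrow\infty}\lVert\mathbf{w}^{s/\bullet t}_n\rVert=1$ in probability, and Theorem~\ref{thm:normalize} licenses taking $a^{st}_n=a^{s\bullet}_n=a^{\bullet t}_n=0$, $b_{i,n}=\mathbb{E}[F(p^{st}_{i,n}\!+\!1/n)]$ and $c_n=\bigl(\eta\sum_{i=1}^n\varphi_{i,n}\bigr)^{-1/2}$ while still meeting the normalization conventions. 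Finally I would re-run the proof of Lemma~\ref{lem:linear} verbatim, using $p^{s\bullet}_{i,n}=\pi_{s\bullet\backslash t}p^{s\bullet\backslash t}_{i,n}+(1-\pi_{s\bullet\backslash t})p^{st}_{i,n}$ in place of Equation \eqref{eq:collo} (and the identical identity for $p^{\bullet t}_{i,n}$), to conclude that $F(p^{s\bullet}_{i,n}\!+\!1/n)$ may be replaced by $\pi_{s\bullet\backslash t}F(p^{s\bullet\backslash t}_{i,n}\!+\!1/n)+(1-\pi_{s\bullet\backslash t})F(p^{st}_{i,n}\!+\!1/n)$, and $F(p^{\bullet t}_{i,n}\!+\!1/n)$ by $\pi_{s/\bullet t}F(p^{s/\bullet t}_{i,n}\!+\!1/n)+(1-\pi_{s/\bullet t})F(p^{st}_{i,n}\!+\!1/n)$, up to an error that is asymptotically negligible after summation over $i$ and division by $\eta\sum_i\varphi_{i,n}$.

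The bias then unwinds exactly as in the proof of Theorem~\ref{thm:main}. Write $\bigl(\mathcal{B}^{st}_n\bigr)^2=\bigl(\eta\sum_{i=1}^n\varphi_{i,n}\bigr)^{-1}\sum_{i=1}^n\bigl(F(p^{st}_{i,n}\!+\!1/n)-\frac{1}{2}(F(p^{s\bullet}_{i,n}\!+\!1/n)+F(p^{\bullet t}_{i,n}\!+\!1/n))\bigr)^2$, substitute the two linear combinations above, and center each summand by its expectation using Lemma~\ref{lem:ycalc}(b) and Lemma~\ref{lem:lln}; the $i$-th summand becomes $\frac{1}{4}\bigl((\pi_1+\pi_2)\hat{w}^{st}_{i,n}-\pi_1\hat{w}^{s\bullet\backslash t}_{i,n}-\pi_2\hat{w}^{s/\bullet t}_{i,n}\bigr)^2$ with $\pi_1:=\pi_{s\bullet\backslash t}$ and $\pi_2:=\pi_{s/\bullet t}$. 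Expanding the square and summing: each of the three diagonal sums tends to $\frac{1}{4}$ by Corollary~\ref{cor:normone}; the cross-term sum $\sum_i\hat{w}^{s\bullet\backslash t}_{i,n}\hat{w}^{s/\bullet t}_{i,n}$, divided by $\eta\sum_i\varphi_{i,n}$, tends to $0$ by independence (Assumption (C)) via Lemma~\ref{lem:lln}; and the two cross sums pairing $\hat{w}^{st}_{i,n}$ with $\hat{w}^{s\bullet\backslash t}_{i,n}$ or $\hat{w}^{s/\bullet t}_{i,n}$ have nonnegative limits by the positive-correlation part of Assumption (C). Hence $\lim_{n\rightarrow\infty}\bigl(\mathcal{B}^{st}_n\bigr)^2\leq\frac{1}{4}\bigl((\pi_1+\pi_2)^2+\pi_1^2+\pi_2^2\bigr)=\frac{1}{2}(\pi_1^2+\pi_2^2+\pi_1\pi_2)$ in probability, which is the claimed bound.

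The computation is pure bookkeeping and raises no analytic difficulty beyond what Theorem~\ref{thm:main} already settled --- in particular $\lambda<0.5$ enters through Lemma~\ref{lem:lln} and Corollary~\ref{cor:normone} in exactly the same way. What deserves the most care is the \emph{modelling} step: checking that the Near-far labelled corpus still obeys Zipf's Law and the Generalized Zipf's Law, and that the freshly introduced target types $s\bullet\backslash t$ and $s/\bullet t$ satisfy the independence and positive-correlation conditions of Assumption (C). These are the Near-far counterparts of the assumptions whose empirical adequacy is examined in Section~\ref{sec:indeptest} and Section~\ref{sec:genzipf}, and I would subject them to the same kind of experimental check; everything downstream of them is a faithful transcription of the proof of Theorem~\ref{thm:main}.
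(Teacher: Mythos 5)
Your proposal is correct and matches the paper's (largely implicit) argument: the paper asserts Claim~\ref{claim:nearfar} precisely as the transfer of Theorem~\ref{thm:main} to the Near-far setting, with the two identities of Definition~\ref{defn:nfdecomp} replacing Equation \eqref{eq:collo} and $st$, ${s\bullet}\backslash t$, $s/{\bullet t}$, $\pi_{{s\bullet}\backslash t}$, $\pi_{s/{\bullet t}}$ substituted for $\{st\}$, $s/t\backslash s$, $t/s\backslash t$, $\pi_{s/t\backslash s}$, $\pi_{t/s\backslash t}$, exactly as you spell out. Your added care about re-imposing Assumptions (A)--(C) on the labelled lexicon and checking them empirically (as in Sections~\ref{sec:indeptest} and \ref{sec:genzipf}) is consistent with how the paper treats these as modelling assumptions rather than proved facts.
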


In Section~\ref{sec:expnearfar}, we verify Claim~\ref{claim:nearfar} experimentally, and show that in contrast, 
the error $\lVert \mathbf{w}^{ts}_n-\frac{1}{2}(\mathbf{w}^{s\bullet}_n + \mathbf{w}^{\bullet t}_n) \rVert$
for approximating the order-reversed phrase ``$t$ $s$'' can exceed this bias bound. 
Further, we demonstrate that by using the additive composition of Near-far Context 
vectors, one can indeed assess meaning similarities between ordered phrases. 

\subsection{Dimension Reduction}
\label{sec:dimensionreduction}

By far we have only discussed vector representations that have a 
high dimension equal to the lexicon size $n$. In practice, people mainly use low-dimensional 
``embeddings'' of words to represent their meaning. Many of the embeddings, including SGNS 
and GloVe, can be formalized as linear dimension reduction, which is equivalent to the 
finding of a $d$-dimensional vector $\mathbf{v}^t$ (where $d\ll n$) for each target word $t$, 
and an $(n,d)$-matrix $A$ such that $\sum_t L(A\mathbf{v}^t, \mathbf{w}^t_n)$ is minimized for some 
loss function $L(\cdot, \cdot)$. In other words, $A\mathbf{v}^t$ is trained as a good approximation 
for $\mathbf{w}^t_n$. 

Naturally, we expect the loss function $L$ to account for a crucial factor in word embeddings. 
Although there are empirical investigations on other detailed designs of embedding methods (e.g.~how to count 
co-occurrences, see \citealt{levyTACL}), the 
loss functions have not been explicitly discussed previously. In this section, 
we discuss \textit{how the loss functions would affect additive compositionality of word embeddings}, 
%\textbf{Can the loss functions affect the additive compositionality of trained word vectors?} 
%In this section, we discuss this topic 
from a viewpoint of bounding the bias 
$\lVert \mathbf{v}^{\{st\}}-\frac{1}{2}(\mathbf{v}^{s}+\mathbf{v}^{t}) \rVert$.

\paragraph{SVD} When $L$ is the $L^2$-loss, its minimization has a closed-form solution 
given by the Singular Value Decomposition (SVD). More precisely, one considers a matrix 
whose $j$-th column is $\mathbf{w}^t_n$ where $t$ is the $j$-th target word. Then, SVD factorizes 
the matrix into 
%applies SVD to a matrix 
%whose $j$-th column is $\mathbf{w}^t_n$ for the $j$-th target word $t$. The matrix is factorized into the form 
%$\bigl(w^t_{i,n}\bigr)$ into the form 
$U\Sigma V^\top$, where $U$, $V$ are orthonormal and 
$\Sigma$ is diagonal. 
Let $\Sigma_d$ denote the truncated $\Sigma$ to the top $d$ singular values. Then, $A$ is solved as 
%Then, $\Sigma$ is truncated to the top $d$ singular values, denoted by $\Sigma_d$. One takes $A$ as 
$U\sqrt{\Sigma_d}$ and $\mathbf{v}^{t}$ the $j$-th column of $\sqrt{\Sigma_d}V^\top$. 
SVD has been used in \citet{HellingerPCA:EACL}, \citet{stratos-collins-hsu:2015:ACL-IJCNLP} 
and \citet{levyTACL}. In this setting, we have
%For $L^2$-loss, we have 
$$
\lVert A\mathbf{v}^{s}-\mathbf{w}^s_{n} \rVert\leq\varepsilon_1,\quad
\lVert A\mathbf{v}^{t}-\mathbf{w}^t_{n} \rVert\leq\varepsilon_2\quad\text{ and }\quad
\lVert A\mathbf{v}^{\{st\}}-\mathbf{w}^{\{st\}}_n \rVert\leq\varepsilon_3,
$$
where $\varepsilon_1$, $\varepsilon_2$ and $\varepsilon_3$ are minimized. Thus, by Triangle Inequality we have 
$$
\lVert A\cdot\bigl(\mathbf{v}^{\{st\}}-\frac{1}{2}(\mathbf{v}^{s}+\mathbf{v}^{t})\bigr) \rVert
 \leq \mathcal{B}^{\{st\}}_n + 
\frac{1}{2}(\varepsilon_1+\varepsilon_2) + \varepsilon_3. 
$$
Further, by Claim~\ref{claim:biasbound} we can bound $\mathcal{B}^{\{st\}}_n$ for sufficiently large $n$, 
so $\lVert \mathbf{v}^{\{st\}}-\frac{1}{2}(\mathbf{v}^{s}+\mathbf{v}^{t}) \rVert$ is bounded in turn because $A$ is 
a bounded operator. 
This bound suggests that word embeddings trained by SVD preserve additive compositionality. 

However, 
%and 
%Claim~\ref{claim:ordinary}, 
%\[\begin{split}
%\lVert A\cdot\bigl(\mathbf{v}_{t_1t_2}-\frac{1}{2}(\mathbf{v}_{t_1}+\mathbf{v}_{t_2})\bigr) \rVert
%& \leq\lVert \mathbf{w}_{t_1t_2}-\frac{1}{2}(\mathbf{w}_{t_1}+\mathbf{w}_{t_2}) \rVert + 
%\frac{1}{2}(\varepsilon_1+\varepsilon_2) + \varepsilon_3 \\
%& \leq\sqrt{\frac{1}{2}(\pi_{1 \setminus 2}^2+\pi_{2 \setminus 1}^2+\pi_{1 \setminus 2}\pi_{2 \setminus 1})} + 
%\frac{1}{2}(\varepsilon_1+\varepsilon_2) + \varepsilon_3
%\end{split}\]
%which suggests that $\lVert \mathbf{v}_{t_1t_2}-\frac{1}{2}(\mathbf{v}_{t_1}+\mathbf{v}_{t_2}) \rVert$ is bounded 
%from above. However, 
the same argument does not directly apply to other loss functions 
because a general loss may not satisfy a triangle inequality, and a bound for Euclidean distance may not 
always transform to a bound for the loss, or vice versa. Specifically, we describe two 
widely used alternative embeddings in the following and discuss the effects of their loss. 

\paragraph{GloVe} The GloVe model~\citep{pennington-socher-manning14} trains a dimension reduction for 
vector representations with $F(x):=\ln{x}$. Let $v^t_i$ be the $i$-th entry of $A\mathbf{v}^t$, and 
$C^t_i$ the co-occurrence count. 
%  co-occurrence 
%count of target $t$ with the $i$-th context word. 
Then, the loss function of GloVe is given by 
%then the loss function is given by 
$$
L(v^t_i, w^t_{i,n}):=f\bigl( C^t_i \bigr)(v^t_i-w^t_{i,n})^2, 
$$
where $f$ is a function set to constant when $C^t_i$ is larger than a 
threshold, and decreases to $0$ when $C^t_i\rightarrow 0$. 
In words, GloVe uses a weighted $L^2$-loss and the weight is a function 
of co-occurrence count.
%   count $C^t_i$. The function $f$ is set to constant when $C^t_i$ is larger than a 
%threshold, and decreases to $0$ when $C^t_i\rightarrow 0$. 
To minimize the loss, GloVe uses stochastic 
gradient descent methods such as AdaGrad~\citep{Duchi:2011}. 

\paragraph{SGNS} The Skip-Gram with Negative Sampling (SGNS) model \citep{word2vecNIPS} also trains a 
dimension reduction with $F(x):=\ln{x}$. The training is based on the 
Noise Contrastive Estimation (NCE) \citep{gutmann12}, so its loss function has two parameters, 
the number $k$ of noise samples per 
data point, and the noise distribution $p^{\text{noise}}_{i,n}$. %The loss function is given below. 
%therefore two inherited parameters can affect the loss function, namely the number $k$ of noise samples per 
%data point, and the distribution $P_{\text{noise}}(\cdot)$ of noise. The loss function of SGNS is given in the 
%following claim. 

\begin{clm}
\label{claim:NCEloss}
Let $v^t_i$ be the $i$-th entry of $A\mathbf{v}^t$. The loss function of SGNS is given by 
%Put $A\mathbf{v}_t=(v^t_i)$. The loss function of SGNS is given by
%For the $|C|$-dimensional vectors $A\mathbf{v}_t$ and $\mathbf{w}_{t}$, SGNS uses the following loss
%function $L_t$:
\begin{equation*}
%\label{eq:bregman}
L(v^t_i, w^t_{i,n}) := C(t) D_{\phi_i}\bigl(v^t_i + \ln(kp^{{\rm noise}}_{i,n}),\, w^t_{i,n} + \ln(kp^{{\rm noise}}_{i,n})\bigr),
\end{equation*}
where %$C(t)$ is count $t$, and 
$D_{\phi}(\cdot,\cdot)$ is the Bregman divergence associated to the convex function
$$
\phi(x):=\bigl(p^t_{i,n}+kp^{{\rm noise}}_{i,n}\bigr)\ln\bigl(\exp(x)+kp^{{\rm noise}}_{i,n}\bigr).
$$
When $k\rightarrow+\infty$, $D_{\phi}$ converges to the Bregman divergence $D_\varphi$ %which is 
associated to $\varphi(x):=\exp(x)$.
%Let $\mathcal{C}$ be a corpus comprised of target-context pairs, and let $\mathcal{C}^*$ be the set of
%distinct pairs in $\mathcal{C}$. Put $\theta(c,t)=\mathbf{u}_c\cdot\mathbf{v}_t+\alpha(c)+\beta(t)$
%and $\gamma(c)=kP_{\text{\rm noise}}(c)$. Then, optimizing $\exp(\theta(c,t))$ by NCE to best fit $\Pr(c|t)$ is equivalent to solving
%$$
%\argmin_{\mathbf{u}, \mathbf{v}}\sum_{(t,c)\in\mathcal{C}^*}\hat{P}(t)D_{\phi}(\theta(c,t), \ln\hat{P}(c|t)),
%$$
%where $\hat{P}(t)$ and $\hat{P}(c|t)$ are observed distributions in $\mathcal{C}$, and
%$D_{\phi}$ is the Bregman divergence associated with the convex function
%$$
%\phi(x)=(\hat{P}(c|t)+\gamma(c))\ln(\exp(x)+\gamma(c)).
%$$
%When $k\rightarrow+\infty$, $D_\phi\rightarrow D_\varphi$ where $\varphi(x)=\exp(x)$.
\end{clm}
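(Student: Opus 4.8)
The plan is to make the per-pair structure of the Skip-Gram objective explicit and then recognize the resulting loss as a Bregman divergence after one affine change of variables. First I would write the NCE objective \citep{gutmann12,word2vecNIPS} restricted to a single target word $t$ and context word $i$: for each occurrence of $t$ a true context word contributes $\ln\sigma(v^t_i)$ (with $\sigma(x):=(1+e^{-x})^{-1}$) and each of the $k$ noise words drawn from $p^{\text{noise}}_{i,n}$ contributes $\ln\sigma(-v^t_i)$, so the part of the objective depending on $v^t_i$ is
$$
C(t)\bigl(p^t_{i,n}\ln\sigma(v^t_i)+k\,p^{\text{noise}}_{i,n}\ln(1-\sigma(v^t_i))\bigr),
$$
and since in the matrix-factorization picture \citep{levyNIPS} the entries $v^t_i$ vary freely, it suffices to treat this one term. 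Minimizing over $v^t_i$ (first-order condition $\sigma(v^t_i)=p^t_{i,n}/(p^t_{i,n}+k\,p^{\text{noise}}_{i,n})$) gives the minimizer $\ln p^t_{i,n}-\ln(k\,p^{\text{noise}}_{i,n})$, which we identify with $w^t_{i,n}$; with $F(x)=\ln x$ this just fixes $a^t_n=0$ and $b_{i,n}=\ln(k\,p^{\text{noise}}_{i,n})$, the $1/n$ smoothing inside $F$ being absorbed here (or treated as an asymptotically negligible perturbation). Following the convention already used for GloVe, $L(v^t_i,w^t_{i,n})$ denotes the excess of the above term over its minimum.

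With $p:=p^t_{i,n}$ and $q:=k\,p^{\text{noise}}_{i,n}$, the bracketed expression is exactly $(p+q)$ times the cross-entropy between $\mathrm{Ber}\bigl(p/(p+q)\bigr)$ and $\mathrm{Ber}(\sigma(v^t_i))$, so subtracting its minimum replaces cross-entropy by Kullback--Leibler divergence:
$$
L(v^t_i,w^t_{i,n})=C(t)\,(p+q)\,D_{KL}\bigl(\mathrm{Ber}(p/(p+q))\,\|\,\mathrm{Ber}(\sigma(v^t_i))\bigr).
$$
The KL divergence between Bernoullis is the Bregman divergence generated by the negative binary entropy in the mean parametrization; passing to its Legendre dual, the log-partition $\psi^*(v)=\ln(1+e^v)$, turns it into $D_{\psi^*}(v^t_i,w^t_{i,n})$ (the argument order flips under Bregman duality, and $(\psi^*)'$ at $p/(p+q)$ recovers $w^t_{i,n}$). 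Since scaling a generator by the constant $p+q$ scales the divergence, and adding an affine function to a generator does not change it, the substitution $x=v+\ln q$ — affine with unit slope — yields
$$
(p+q)\,D_{\psi^*}(v^t_i,w^t_{i,n})=D_{\phi_i}\bigl(v^t_i+\ln q,\;w^t_{i,n}+\ln q\bigr),\qquad \phi_i(x)=(p+q)\ln\bigl(\exp(x)+q\bigr),
$$
because $(p+q)\psi^*(x-\ln q)=\phi_i(x)-(p+q)\ln q$. This is exactly the claimed $\phi_i$, and it remains only to double-check, by substituting $a=v^t_i+\ln q$, $b=w^t_{i,n}+\ln q$ into $D_{\phi}(a,b)=\phi(a)-\phi(b)-\phi'(b)(a-b)$ with $\phi'_i(x)=(p+q)\exp(x)/(\exp(x)+q)$, that one reproduces the original loss — a one-line computation using $e^{a}=q\,\sigma(v^t_i)/(1-\sigma(v^t_i))$.

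For the limit I would note that $q=k\,p^{\text{noise}}_{i,n}\to\infty$ and write $\phi_i(x)=(p+q)\ln q+(p+q)\ln(1+\exp(x)/q)$; the first summand is constant in $x$, hence irrelevant to the divergence, and $(p+q)\ln(1+\exp(x)/q)=\exp(x)+O(1/q)$ with the same $C^2_{\mathrm{loc}}$ convergence of the derivatives, so $D_{\phi_i}\to D_{\varphi}$ with $\varphi(x)=\exp(x)$, i.e. $D_\varphi(a,b)=e^a-e^b-e^b(a-b)$. The genuinely delicate part is not the algebra but pinning down the correct NCE bookkeeping in the paper's notation — which quantity is the ``empirical rate'' ($p^t_{i,n}$ versus $k\,p^{\text{noise}}_{i,n}$), the overall factor $C(t)$, and the fact that SGNS's own smoothing makes the per-pair minimizer $\ln p^t_{i,n}-\ln(k\,p^{\text{noise}}_{i,n})$ rather than $F(p^t_{i,n}\!+\!1/n)-b_{i,n}$, so matching it to $w^t_{i,n}$ requires either folding the $1/n$ into $b_{i,n}$ or arguing it is immaterial in the $n\to\infty$ regime where the bias bound is applied. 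Once that dictionary is fixed, the Bregman identification and the $k\to\infty$ limit are routine.
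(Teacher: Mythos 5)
Your proposal is correct and follows essentially the same route as the paper's Appendix B proof: restrict the NCE/SGNS objective to a fixed target--context pair with expected weights $C(t)p^t_{i,n}$ and $C(t)kp^{\rm noise}_{i,n}$, identify the per-entry maximizer with $w^t_{i,n}=\ln p^t_{i,n}-\ln(kp^{\rm noise}_{i,n})$, define the loss as the excess over that optimum, recognize it as the Bregman divergence with generator $\phi$, and pass to the $k\rightarrow+\infty$ limit. Where the paper simply says ``by some calculation,'' you organize that step through the Bernoulli cross-entropy/KL and Legendre duality with the affine shift $x\mapsto x+\ln(kp^{\rm noise}_{i,n})$, and you spell out the limit argument the paper calls ``easily derived''; both are fine and the handling of the $1/n$ smoothing matches the paper's own (implicit) treatment.
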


Proof of Claim~\ref{claim:NCEloss} is found in Appendix~\ref{app:sgns}. We draw a graph of the SGNS loss in 
Figure~\ref{fig:loss}, where $D_{\phi}\bigl(v^t_i + \ln(kp^{\text{noise}}_{i,n}),\, w^t_{i,n} + \ln(kp^{\text{noise}}_{i,n})\bigr)$ is plotted on $y$-axis against $v^t_i-w^t_{i,n}$ on $x$-axis. 
Note that the graph grows faster at $x\rightarrow+\infty$ than 
$x\rightarrow-\infty$, suggesting that an overestimation of $w^t_{i,n}$ will be punished
more than an underestimation. In addition, the loss function weighs more on high co-occurrence probabilities, 
%frequent context words, 
as indicated by the $p^t_{i,n}$ coefficient in the equation of the limit curve (Figure~\ref{fig:loss}). 
Thus, SGNS loss tends to enforce underestimation of 
$w^t_{i,n}$ for frequent context words (as overestimation is costly), and compensate
$w^t_{i,n}$ for rare ones (i.e., overestimation on rare context words is 
affordable and will be done if necessary). This is a special property of SGNS which might have some 
smoothing effect. 

\begin{figure}[t]
\centering
\includegraphics[scale=0.34,bb=0 10 1000 320,clip]{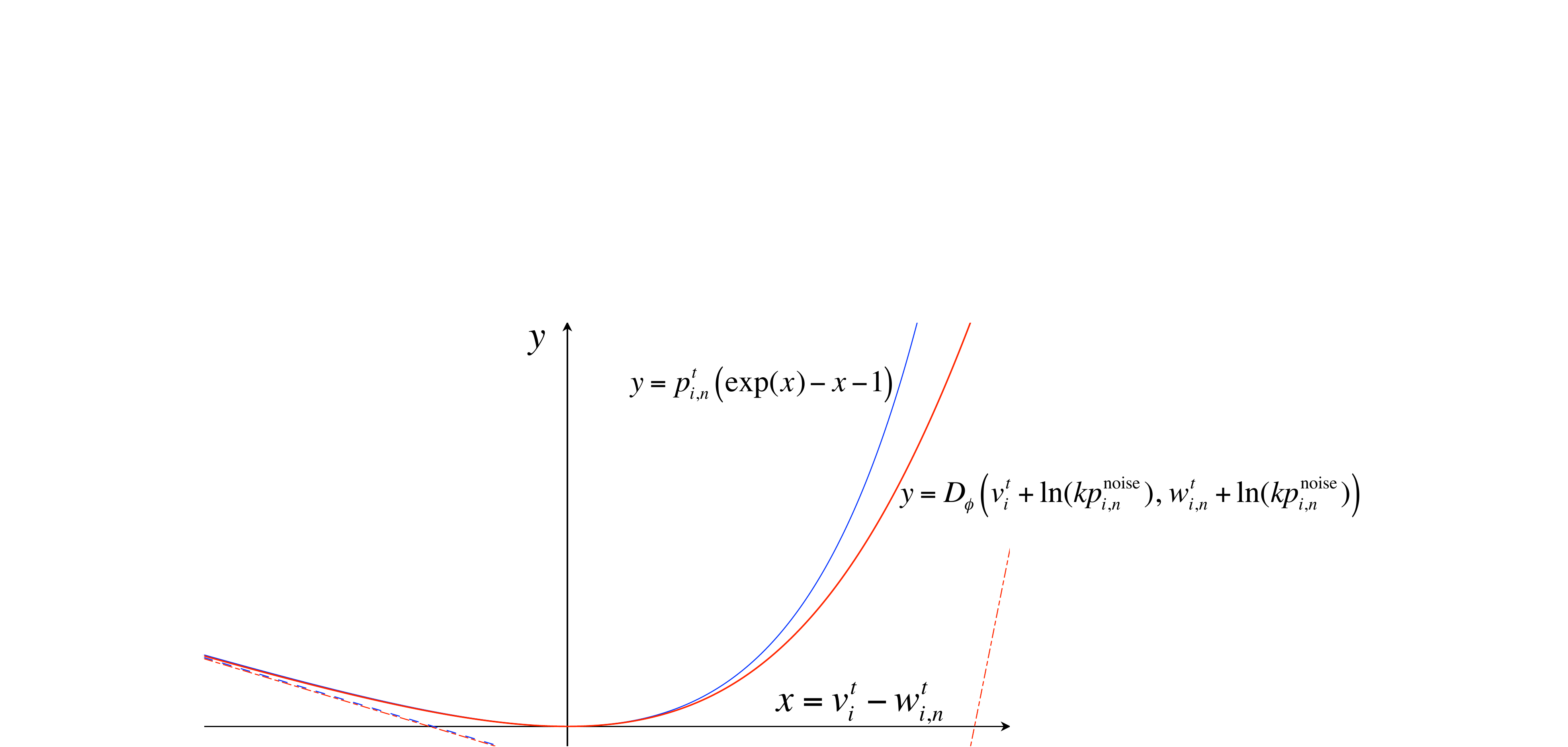}
\caption{A graph of the SGNS loss function with two asymptotes (red), and its limit curve at $k\rightarrow+\infty$ with one asymptote (blue).}
\label{fig:loss}
\end{figure}

%So, \textbf{can these different loss functions affect the additive compositionality of trained word vectors?} 
%We discuss this topic from a viewpoint of bounding the error 
%$\lVert \mathbf{v}_{t_1t_2}-\frac{1}{2}(\mathbf{v}_{t_1}+\mathbf{v}_{t_2}) \rVert$. When $L$ is the $L^2$-loss, 
%we can assume that 
%$$
%\lVert A\mathbf{v}_{t_1}-\mathbf{w}_{t_1} \rVert\leq\varepsilon_1,\quad
%\lVert A\mathbf{v}_{t_2}-\mathbf{w}_{t_2} \rVert\leq\varepsilon_2\quad\text{ and }\quad
%\lVert A\mathbf{v}_{t_1t_2}-\mathbf{w}_{t_1t_2} \rVert\leq\varepsilon_3,
%$$
%where $\varepsilon_1$, $\varepsilon_2$ and $\varepsilon_3$ are minimized. Therefore, by triangular inequality and 
%Claim~\ref{claim:ordinary}, 
%\[\begin{split}
%\lVert A\cdot\bigl(\mathbf{v}_{t_1t_2}-\frac{1}{2}(\mathbf{v}_{t_1}+\mathbf{v}_{t_2})\bigr) \rVert
%& \leq\lVert \mathbf{w}_{t_1t_2}-\frac{1}{2}(\mathbf{w}_{t_1}+\mathbf{w}_{t_2}) \rVert + 
%\frac{1}{2}(\varepsilon_1+\varepsilon_2) + \varepsilon_3 \\
%& \leq\sqrt{\frac{1}{2}(\pi_{k \setminus  l}^2+\pi_{l \setminus  k}^2+\pi_{k \setminus  l}\pi_{l \setminus  k})} + 
%\frac{1}{2}(\varepsilon_1+\varepsilon_2) + \varepsilon_3
%\end{split}\]
%which suggests that $\lVert \mathbf{v}_{t_1t_2}-\frac{1}{2}(\mathbf{v}_{t_1}+\mathbf{v}_{t_2}) \rVert$ is bounded 
%from above. On the other hand, the same argument cannot be directly applied to other loss functions, 
%because a general loss may not satisfy a triangular inequality, and a bound for Euclidean distance may not 
%always imply a bound for the loss function or vice versa. 

Compared to SVD, both 
%One common attribute shared by GloVe and SGNS, but not by SVD is that, 
the loss functions of GloVe and SGNS 
weigh less on rare context words. As a result, 
%Especially for GloVe and SGNS, both loss functions weigh less on rare context words. As a result,  
the trained $A\mathbf{v}^t$ may fail to precisely approximate the low co-occurrence part of $\mathbf{w}^t_{n}$. 
As we discussed in Section~\ref{sec:sketch}, entries corresponding to low-frequency words 
dominate the behavior of vector representations; thus, failing to precisely approximate this part 
might hinder the inheritance of additive compositionality from high-dimensional vector representations to 
low-dimensional embeddings. 
%, of which the behavior is emphasized by the choice of function $F$, as we have discussed in 
%\Cref{sec:functionF}. Since this emphasized behavior forms a crucial part in derivation of our bias bound, 
%we expect word vectors trained by GloVe or SGNS to be less respectful to the bound. 
Therefore, we conjecture that word vectors trained by GloVe or SGNS might exhibit less additive 
compositionality compared to SVD, and the composition might be less respectful to our bias bound. 

%the loss functions of these models can hurt additive compositionality. 
The previous discussion is only exploratory and cannot fully comply with practice because, after $\mathbf{v}^t$ 
is trained by dimension reduction, people usually re-scale the norms of all $\mathbf{v}^t$ to $1$, and then 
they use the normalized vectors in additive composition. It is not clear why this normalization step 
can usually result in better performance. 

Nevertheless, in our experiments (Section~\ref{sec:expdimred}), we find that word vectors trained by SVD 
preserve our bias bound 
well in additive composition, even after the normalization step is conducted. 
In contrast, vectors trained by GloVe or SGNS are less respectful to the bound. 
Further, in extrinsic evaluations (Section~\ref{sec:exteval}) we show that vectors trained 
by SVD can indeed be more additive compositional, as judged by human annotators. 

\section{Related Work}
\label{sec:relwork}

Additive composition is a classical approach to approximating meanings of phrases and/or 
sentences~\citep{foltz98,landauer97}. Compared to other composition operations, vector addition/average 
has either served as a strong baseline~\citep{mitchell-lapata08,takase2016}, or remained one of the 
most competitive 
methods until recently~\citep{baneaSemEval}. Additive composition has also been successfully integrated 
into several NLP systems. For example, \citet{tian14} use vector additions for assessing semantic similarities between 
paraphrase candidates in a logic-based textual entailment recognition system (e.g.~the similarity between 
``\textit{blamed for death}'' and ``\textit{cause loss of life}'' is calculated by the cosine similarity between
sums of word vectors $\mathbf{v}^{\text{blame}}+\mathbf{v}^{\text{death}}$ and 
$\mathbf{v}^{\text{cause}}+\mathbf{v}^{\text{loss}}+\mathbf{v}^{\text{life}}$); in \citet{dan}, average of 
vectors of words in a whole sentence/document is fed into a deep neural network for sentiment analysis and 
question answering, which achieves near state-of-the-art performance with minimum training time. 
There are other semantic relations handled by vector additions as well, 
%Other semantic relations have been represented by vector additions as well, 
such as word analogy (e.g.~the vector 
$\mathbf{v}^{\text{king}}-\mathbf{v}^{\text{man}}+\mathbf{v}^{\text{woman}}$ is close to 
$\mathbf{v}^{\text{queen}}$, suggesting ``\textit{man} is to \textit{king} as \textit{woman} is to 
\textit{queen}'', see \citealt{word2vecNAACL}), and synonymy 
(i.e.~a set of synonyms can be represented by the sum of vectors of the words in the set, see 
%sum of vectors of words in a ``synset'' can be used to represent that set, see 
\citealt{rothe-schutze:2015:ACL-IJCNLP}). 
We expect all these utilities to be related to our theory of additive composition somehow, for example 
a link between additive composition and word analogy is hypothesized in 
Section~\ref{sec:wordanalogy}. Ultimately, our theory would provide new insights into previous works, 
for instance, the insights about how to construct word vectors. 

Lack of syntactic or word-order dependent effects on meaning is considered one of the most important 
issue of 
%Word-order dependent or syntactic effects on meaning have been considered as the most important lack in 
additive composition~\citep{landauer02}. Driven by 
this point of view, a number of advanced compositional frameworks have been proposed to cope with 
word order and/or syntactic information~\citep{mitchell-lapata08,zanzotto10,baroni-zamparelli10,coecke10,grefenstette-sadrzadeh:2011:EMNLP,socher12,paperno-pham-baroni14,hashimoto-EtAl:2014:EMNLP2014}. 
The usual approach is to introduce new parameters that represent different word positions or syntactic roles. 
For example, given a two-word phrase, one can first 
transform the two word vectors by different matrices and then add the results, so the two matrices are 
parameters~\citep{mitchell-lapata08}; or, regarding different syntactic roles, one can assign matrices to 
adjectives and use them to modify vectors of nouns~\citep{baroni-zamparelli10}; further, one can insert neural 
network layers between parents and children in a syntactic tree~\citep{socher12}. An empirical 
comparison of composition models 
can be found in~\citet{blacoe-lapata12}, with an accessible introduction to the literature. 
One theoretical issue of these methods, however, is the lack of learning guarantee. 
%A trade-off in these methods, however, is that 
%learning of the introduced parameters lacks a strong theoretical justification, and in particular there is no learning 
%guarantee. 
In contrast, our proposal of the Near-far Context demonstrates that word order can be handled within an 
additive compositional framework, being parameter-free and with a proven bias bound. Recently, 
\citet{tian2016} further extended additive composition to realizing a formal semantics framework. 

From a wider perspective, constructing and composing vector representations for linguistic sequences 
have become one of the central techniques in NLP, and a lot of approaches have been explored. 
Some of them, such as the vectors constructed from probability ratios and composed by 
multiplications~\citep{mitchell10}, might still be related to additive composition because by taking 
logarithm, multiplications become additions and probability ratios become PMIs. Other composition 
methods range from circular convolution~\citep{mitchell10} to neural networks such as 
recursive autoencoder~\citep{socher11} and long short-term memory~\citep{context2vec}. 
Word vectors can be trained jointly with composition 
parameters~\citep{hashimoto-EtAl:2014:EMNLP2014,pham-EtAl:2015:ACL-IJCNLP}, 
and training signals range from surrounding context words~\citep{takase2016} to 
supervised labels~\citep{cwmodel}. 
We believe it is also important to investigate the theoretical aspects of these approaches, 
which remain largely unclear. 
As for word vectors, some theoretical works have been done on explaining the errors of 
dimension reductions of PMI vectors~\citep{TACL742,TACL809}.

Error bounds in approximation schemes have been extensively studied in statistical 
learning theory~\citep{Vapnik:1995,Gnecco:2008}, and especially for neural 
networks~\citep{Niyogi:98,Burger2001}. 
Since we have formalized compositional frameworks as approximation schemes, there is a 
good chance to apply the theories of approximation error bounds to this problem, especially for 
advanced compositional frameworks that have many parameters. Though the theories are usually 
established on general settings, we see a great potential in using properties that are specific to 
natural language data, as we demonstrate in this work. 

There have been consistent efforts toward understanding stochastic behaviors of natural language. 
Zipf's Law~\citep{zipf35} and its applications~\citep{kobayashi14}, non-parametric Bayesian 
%is a classical observation and still finds new applications in recent results~\citep{kobayashi14}. 
%Advanced Bayesian 
language models such as the Hierarchical Pitman-Yor 
Process~\citep{teh:2006:COLACL}, and the topic model~\citep{blei12} might further help refine our theory. 
For example, it can be fruitful to consider additive composition of topics. 

\section{Experimental Verification}
\label{sec:expveri}

In this section, we conduct experiments on the British National Corpus (BNC) \citep{bnc} to verify assumptions 
and predictions of our theory. The corpus contains about 100M word tokens, including written 
texts and utterances in British English. For constructing vector representations we use lemmatized words 
annotated in the corpus, and for counting co-occurrences we use context windows that do not cross sentence 
boundaries. 
The size of the context windows is $5$ to each side for a target word, and $4$ for a target phrase. We 
extract all unigrams, ordered and unordered bigrams occurring more than 
$200$ times as targets. This results in 16,210 unigrams, 45,793 ordered bigrams and 45,398 unordered bigrams. 
For the lexicon of context words we use the same set of unigrams. 

\subsection{Test of Independence}
\label{sec:indeptest}

\begin{figure}[t]
\centering
\includegraphics[scale=0.375,bb=0 0 980 200,clip]{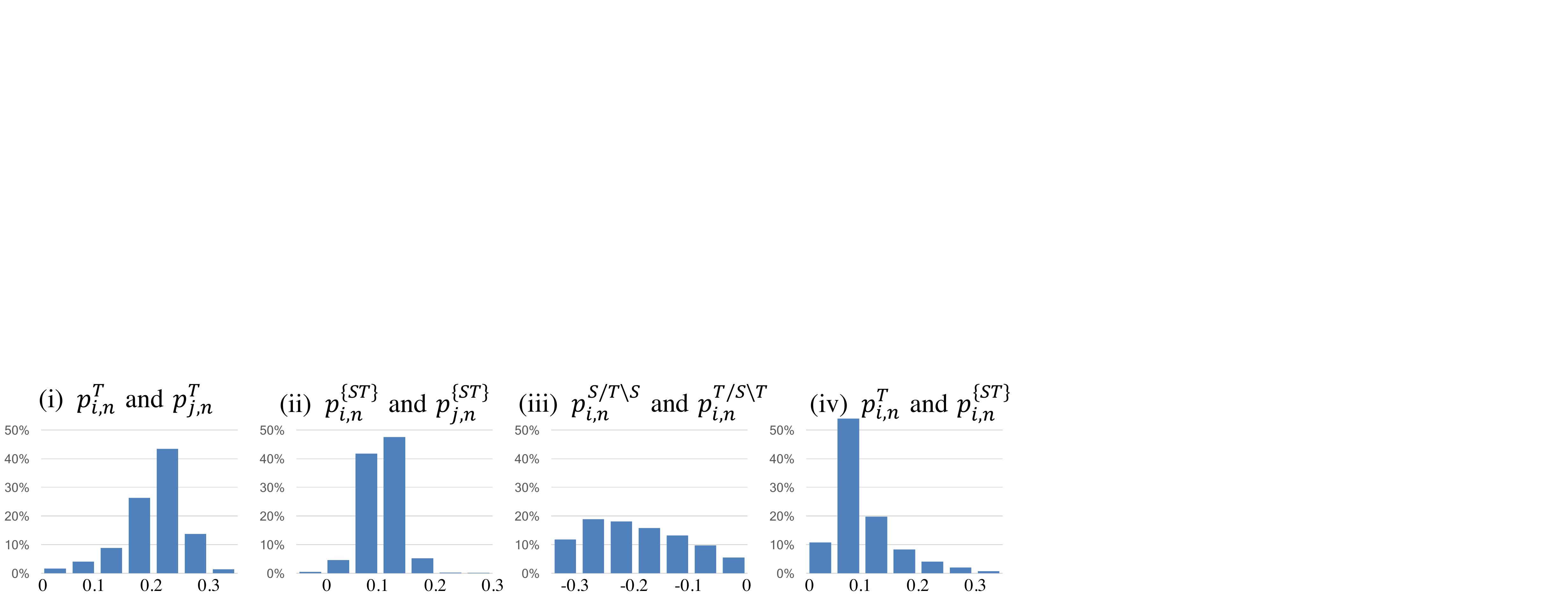}
\caption{Distributions of Spearman's $\rho$ between different pairs of random variables}
\label{fig:distrho}
\end{figure}

In order to test the independence assumptions in our theory, 
we use Spearman's $\rho$ to measure the correlations between random variables. 
Spearman's $\rho$ is the Pearson correlation between rank values, and is invariant under 
transformations of any monotonic function. One has $-1\leq\rho\leq -1$, and 
if two variables are independent, $\rho$ should be close to $0$. 

In our theory, Assumption (B1) of Theorem~\ref{thm:main} states that $p^\Upsilon_{i,n}$ and 
$p^\Upsilon_{j,n}$ are independent for each $1\leq i < j \leq n$. To test, we calculate the 
Spearman's $\rho$ between (i) $p^T_{i,n}$ and $p^T_{j,n}$, and (ii) $p^{\{ST\}}_{i,n}$ and $p^{\{ST\}}_{j,n}$, 
where $T$ and $\{ST\}$ vary on the 16,210 unigrams and 45,398 unordered bigram samples respectively. 
Further, Assumption (C) of Theorem~\ref{thm:main} states that for each $1\leq i \leq n$, the random variables 
$p^{S/T\backslash S}_{i,n}$ and $p^{T/S\backslash T}_{i,n}$ 
are independent, whereas 
$F(p^{S/T\backslash S}_{i,n}\!+\!1/n)$ and $F(p^{\{ST\}}_{i,n}\!+\!1/n)$ have positive correlation. 
Thus, we check the Spearman's $\rho$ between (iii) $p^{S/T\backslash S}_{i,n}$ and $p^{T/S\backslash T}_{i,n}$, 
and (iv) $p^{S/T\backslash S}_{i,n}$ and $p^{\{ST\}}_{i,n}$, where $\{S, T\}$ vary on the 
45,389 unordered bigrams. The results are summarized in Figure~\ref{fig:distrho}. 

For most $i$-$j$ pairs, Figure~\ref{fig:distrho}(i)(ii) suggest that the correlations between $p^\Upsilon_{i,n}$ and 
$p^\Upsilon_{j,n}$ are positive but quite weak (for 70\% of the $i$-$j$ pairs, the Spearman's $\rho$ between 
$p^T_{i,n}$ and $p^T_{j,n}$ is $0.2\pm 0.05$; and for 90\% pairs the Spearman's $\rho$ between 
$p^{\{ST\}}_{i,n}$ and $p^{\{ST\}}_{j,n}$ is $0.1\pm 0.05$). 
As a comparison, when $i$ and $j$ indicate a pair of semantically related context words such as 
\emph{black} and \emph{white}, the Spearman's $\rho$ between 
$p^T_{i,n}$ and $p^T_{j,n}$ is $0.40$ and between $p^{\{ST\}}_{i,n}$ and $p^{\{ST\}}_{j,n}$ is $0.31$. 
Such examples only contribute to a negligible portion of the whole $i$-$j$ pairs, because semantically related 
pairs are rare. 

On the other hand, Figure~\ref{fig:distrho}(iii) shows that $p^{S/T\backslash S}_{i,n}$ and 
$p^{T/S\backslash T}_{i,n}$ have \emph{negative} correlation for most $i$; the Spearman's $\rho$ 
for 66\% of the $p^{S/T\backslash S}_{i,n}$-and-$p^{T/S\backslash T}_{i,n}$ pairs is $-0.2\pm 0.1$. 
In addition, Figure~\ref{fig:distrho}(iv) confirms that $p^{S/T\backslash S}_{i,n}$ and $p^{\{ST\}}_{i,n}$ have 
positive correlation. 

Now, can the observed weak correlations support our assumptions on independence? 
To test this, one may calculate a $p$-value as the probability of a Spearman's $\rho$ being farther from 
$0$ than the observation, making use of the fact that $\rho\sqrt{\frac{N-2}{1-\rho^2}}$ approximately 
follows a Student's $t$-distribution (where $N$ is the sample size). 
If the $p$-value is small, the Spearman's $\rho$ should be considered too far from $0$ to support 
independence. However, this test turns out to be overly strict; for unordered bigrams (i.e.~$N=45398$), 
one needs $|\rho|<0.012$ to make $p>0.01$. In other words, since the sample size 
is huge, even weak correlations among samples can manifest as evidence for rejecting the independence as 
null hypothesis. 

Nevertheless, our theoretical analysis is still valid, because the Law of Large Numbers 
holds even for weakly correlated random variables, and the fact that $p^{S/T\backslash S}_{i,n}$ and 
$p^{T/S\backslash T}_{i,n}$ are \emph{negatively} correlated does not change the direction of our 
proven inequality. Therefore, our independence assumptions are oversimplifications for language modeling, 
but the theoretical conclusions and the bias bound are still likely true. 

\begin{figure}[t]
\centering
\includegraphics[scale=0.35,bb=0 0 1000 270,clip]{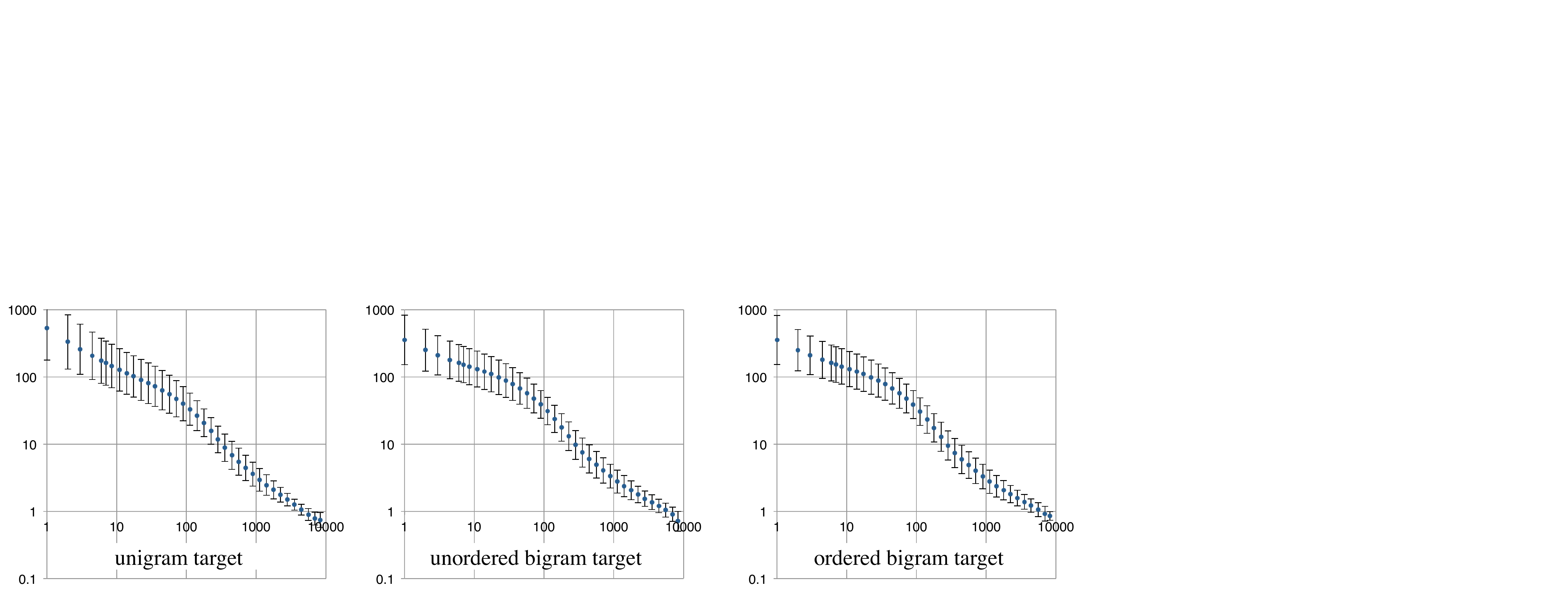}
\caption{For each $x$ coordinate, the log-log graphs show the average value of the $x$-th largest 
probability ratios $p^\Upsilon_{i,n}/p_{i,n}$ on $y$-axis. The ranking is taken among 
$1\leq i\leq n$ with $\Upsilon$ fixed, and the average is taken across different $\Upsilon$ that are 
unigrams, unordered bigrams, or ordered bigrams respectively. Standard deviation is shown as error bar.}
\label{fig:genzipf}
\end{figure}

\subsection{Generalized Zipf's Law}
\label{sec:genzipf}

Consider the probability ratio $p^\Upsilon_{i,n}/p_{i,n}$, where $\Upsilon$ can be a unigram, 
ordered bigram or unordered bigram. Assumption (B) of Theorem~\ref{thm:main} states that 
$(p^\Upsilon_{i,n}/p_{i,n})$'s ($1\leq i\leq n$, $n$ fixed) can be viewed as independent sample points drawn from 
distributions that have a same power law tail of index $1$. We verify this assumption in the following. 

A power law distribution has two parameters, the index $\alpha$ and the lower bound $m$ of the power 
law behavior. If a random variable $X$ obeys a power law, the probability of $x\leq X$ conditioned 
on $m\leq X$ is given by 
\begin{equation}
\label{eq:powerlawdef}
\mathbb{P}(x\leq X | m\leq X)=\frac{m^\alpha}{x^\alpha}. 
\end{equation}

For each fixed $\Upsilon$, we estimate $\alpha$ and $m$ from the sample 
$p^\Upsilon_{i,n}/p_{i,n}$ $(1\leq i\leq n)$, 
using the method of \citet{powerlaw}. Namely, $\alpha$ is estimated by maximizing the likelihood of the 
sample, and $m$ is sought to minimize the Kolmogorov-Smirnov 
statistic, which measures how well the theoretical distribution \eqref{eq:powerlawdef} fits the empirical 
distribution of the sample. After $m$ is estimated, we plot all $p^\Upsilon_{i,n}/p_{i,n}$ greater than 
$m$ in a log-log graph, against their ranking. If the sample points are drawn from a power law, 
the graph will be a straight line. Since Assumption (B) states that the power law tail is 
the same for all $\Upsilon$ and has index $1$, we should obtain the same straight line for all $\Upsilon$, 
and the slope of the line should be $-1$. 

In Figure~\ref{fig:genzipf}, we summarize the graphs described above for all $\Upsilon$. 
More precisely, we plot the ranked probability ratios for each fixed $\Upsilon$ into the same log-log 
graph, and then show the average and standard deviation of the $x$-th largest probability ratios across 
different $\Upsilon$. The figure shows that for each target type, most data points lie within a narrow stripe 
of roughly the same shape, suggesting that the distribution of probability ratios for each fixed $\Upsilon$ 
is approximately the same. In addition, the shape can be roughly approximated by a straight line with 
slope $-1$, which suggests that the distribution is power law of index $1$, verifying Assumption (B). 

\begin{figure}
\centering
\begin{minipage}{.38\textwidth}
  \centering
  \includegraphics[scale=0.36,bb=0 0 340 270,clip]{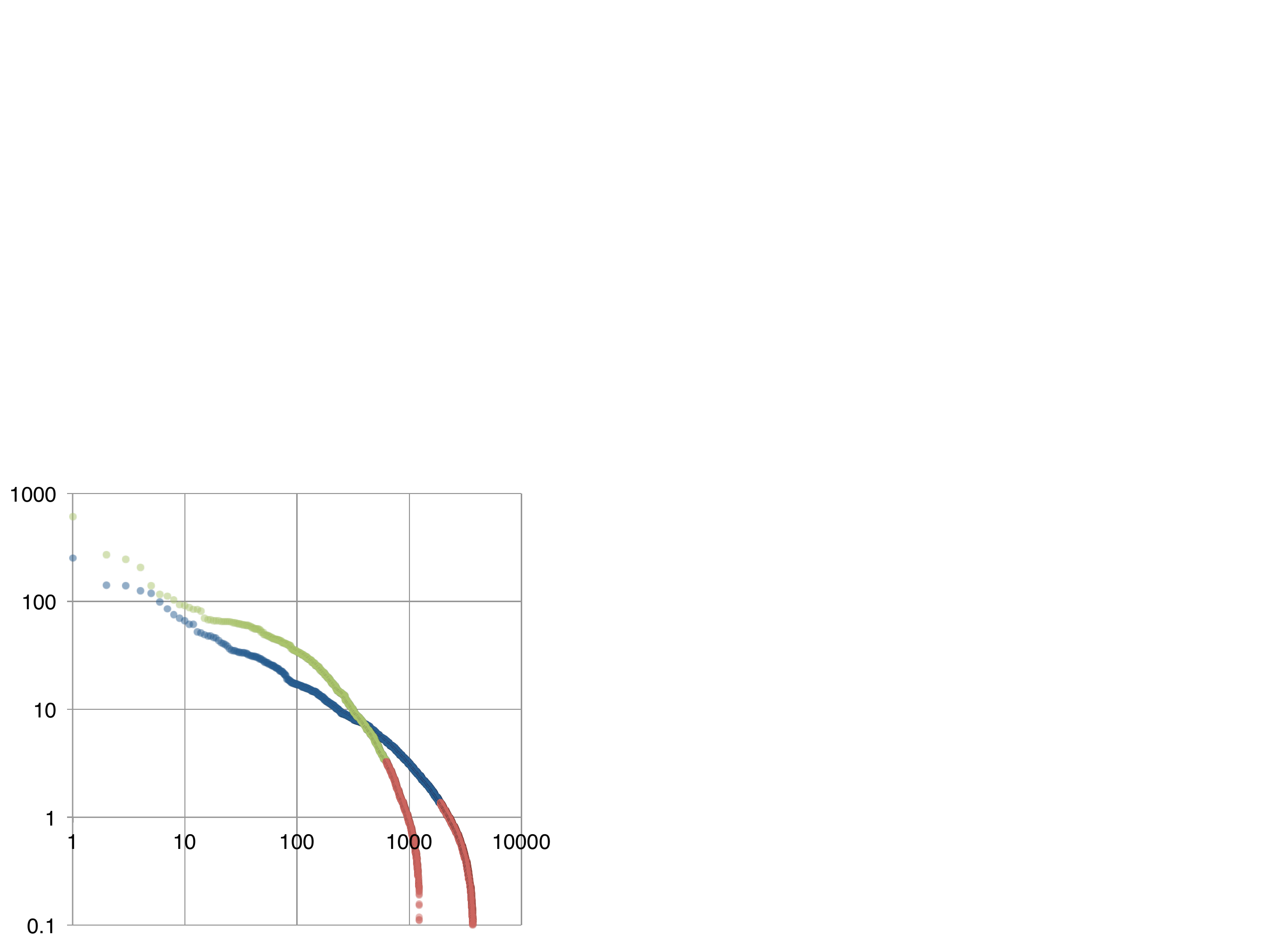}
  \captionof{figure}{Log-log plot of probability ratios against ranks, for two fixed targets.}
  \label{fig:genzipfsample}
\end{minipage}%
\begin{minipage}{.02\textwidth}
  ~
\end{minipage}%
\begin{minipage}{.60\textwidth}
  \centering

\setlength{\tabcolsep}{4pt}
\begin{tabular}{| c | c | c | c | c | c | c | c |}
\hline
 $i$ & word & \multicolumn{5}{ c |}{frequencies} & $p$-value \\
\hline
1 & \emph{the} & 16210 & 0 & 0 & 0 & 0 & $< 0.0001$ \\
2 & \emph{be} & 16210 & 0 & 0 & 0 & 0 & $< 0.0001$ \\
\hline
101 & \emph{between} & 16167 & 29 & 14 & 0 & 0 & $0.0010$ \\
121 & \emph{off} & 16173 & 29 & 6 & 2 & 0 & $0.0003$ \\
142 & \emph{under} & 16169 & 28 & 9 & 3 & 1 & $0.0059$ \\
\hline
3075 & \emph{evident} & 15992 & 194 & 23 & 1 & 0 & $< 0.0001$ \\
3076 & \emph{refugee} & 15914 & 176 & 77 & 27 & 16  & $0.0002$ \\
3077 & \emph{button} & 15969 & 167 & 43 & 18 & 13 & $< 0.0001$ \\
3078 & \emph{belt} & 15920 & 206 & 54 & 24 & 6 & $< 0.0001$ \\
3079 & \emph{vegetable} & 15859 & 194 & 93 & 39 & 25 & $0.0126$ \\
3080 & \emph{expertise} & 16053 & 124 & 29 & 4 & 0 & $< 0.0001$ \\
\hline
\end{tabular}
\captionof{table}{$\chi^2$-tests on distributions of $X:=p^T_{i,n}/p_{i,n}$} 
\label{tab:chisq}

%  \includegraphics[scale=0.36,bb=0 0 560 270,clip]{fig7_powergraph.pdf}
%  \captionof{figure}{Plot of standard deviation of norms of natural phrase vectors against different $\lambda$.}
%  \label{fig:powergraph}
\end{minipage}
\end{figure}

As a concrete example, in Figure~\ref{fig:genzipfsample} we show a log-log graph of the $x$-th 
largest probability ratios $p^s_{i,n}/p_{i,n}$ and $p^t_{i,n}/p_{i,n}$ $(1\leq i\leq n)$, where 
$s$ and $t$ are two individual word targets. The red points are cut off because their $y$ values are lower than 
the boundaries of power law behavior estimated from data. The blue and green points are the power law 
tails. 

Further, to document this Zipfian behavior quantitatively, we conduct a $\chi^2$-test on the distribution of 
$p^T_{i,n}/p_{i,n}$. In this test, we fix each $i$ and categorize the values of $X:=p^T_{i,n}/p_{i,n}$, where $T$ 
varies over the 16,210 unigram samples. According to Figure~\ref{fig:genzipf} and 
Figure~\ref{fig:genzipfsample}, we assume that $X$ has a power law tail starting from $X \geq 2^4$. 
Thus, we divide values of $X$ into 5 categories, being $X<2^4$, $2^{4+k}\leq X < 2^{5+k}$ $(k=0,1,2)$, and 
$X\geq 2^7$, respectively. We count frequencies in each category and choose a parameter 
$\frac{1}{2^4}\leq m\leq\frac{1}{2}$ by minimizing the $\chi^2$ statistic. The parameter $\alpha$ is fixed to 1. 
Then, the degree of freedom is calculated as $5-1-1=3$, and 
%Hence, the degree of freedom is $5-1-1=3$. 
the $\chi^2$-test produces a $p$-value indicating how good 
the power law hypothesis fit to the observed frequencies. We decide that the test is passed if $p\geq 0.0001$. 

Among all indices $1\leq i \leq n$, there are 16\% distributions passing the test. A selection of examples is 
shown in Table~\ref{tab:chisq}. 
It turns out that many function words, such as ``\emph{the}'' and ``\emph{be}'' cannot pass the test 
(with all values of $X$ less than $2^4$), because 
the occurring probabilities of these words do not change much, whether or not conditioned on a target. 
An exception is that several prepositions, such as ``\emph{between}'' and ``\emph{under}'' do pass. 
On the other hand, as $i$ becomes larger (i.e.~$p_{i,n}$ becomes smaller), more of the distributions of 
$p^T_{i,n}/p_{i,n}$ become distorted, similar to the green dots in Figure~\ref{fig:genzipfsample} 
which will fail the test. As Table~\ref{tab:chisq} suggests, 
no obvious linguistic factor seems able to explain which word would pass.
However, Figure~\ref{fig:genzipf} still confirms that the averaged behavior of these distributions obeys a 
power law. 

\subsection{The Choice of Function $F$}
\label{sec:expfunctionF}

In this section we experimentally verify the effects brought by different function $F$. Recall that $F$ is 
parameterized by $\lambda$ as defined in Definition~\ref{defn:fdef}. 
In Section~\ref{sec:effectF}, we have shown that $\mathbb{E}[F(X)^2]<\infty$ is a sufficient and necessary 
condition for the norms of natural phrase vectors to converge to $1$. If $X$ has a power law tail of index 
$\alpha$, then the condition for $\mathbb{E}[F(X)^2]<\infty$ is $\lambda<\alpha/2$. So if we construct 
vector representations with different $\lambda$, only those vectors satisfying $\lambda<\alpha/2$ 
will have convergent norms. We verify this prediction first. 

\begin{figure}
\centering
\begin{minipage}{.52\textwidth}
  \centering
  \includegraphics[scale=0.4,bb=0 0 480 240,clip]{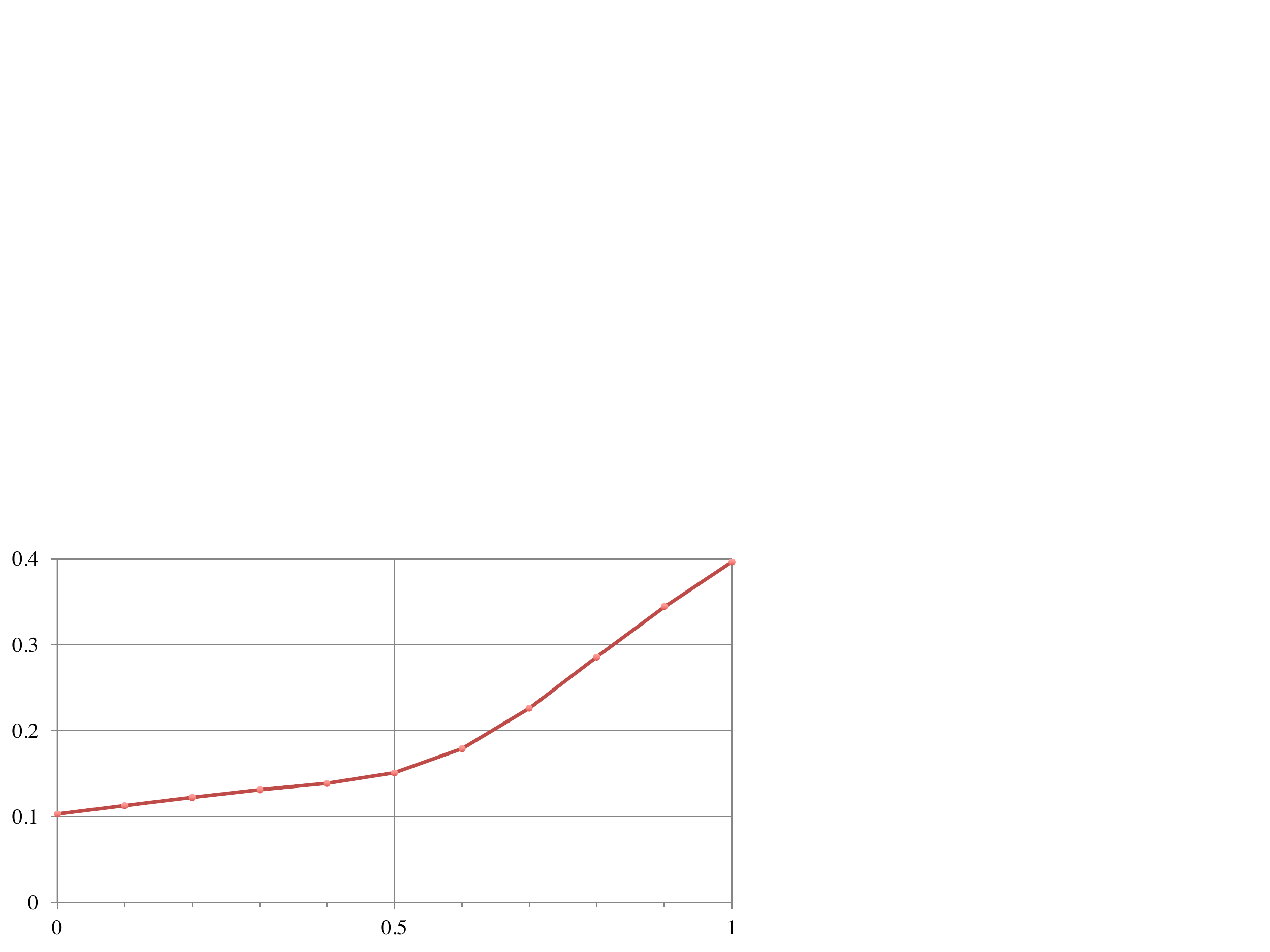}
  \captionof{figure}{Plot of standard deviation of norms of natural phrase vectors against different $\lambda$.}
  \label{fig:powergraph}
\end{minipage}%
\begin{minipage}{.02\textwidth}
  ~
\end{minipage}%
\begin{minipage}{.46\textwidth}
  \centering
  \includegraphics[scale=0.4,bb=0 0 280 240,clip]{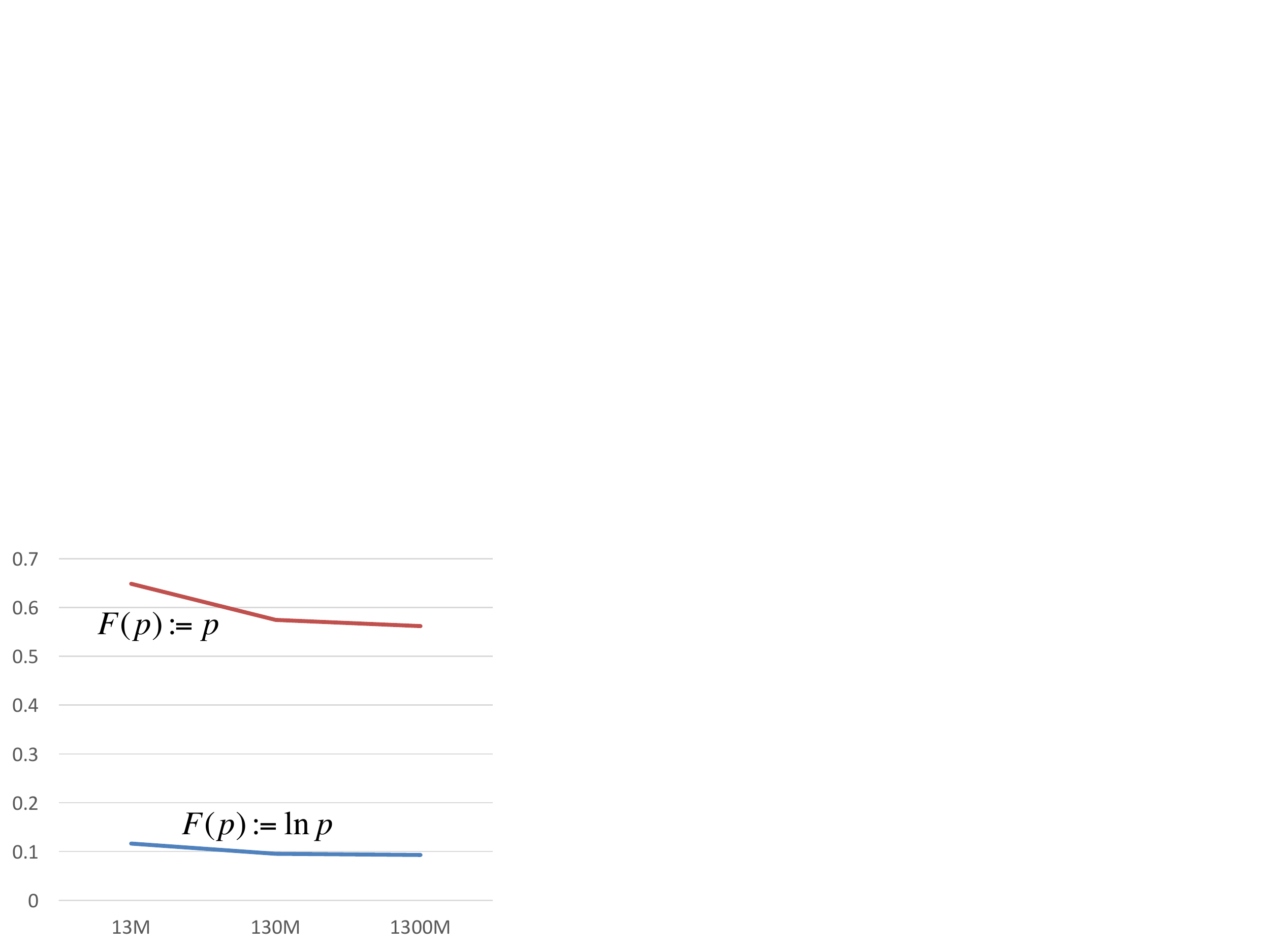}
  \captionof{figure}{Plot of standard deviation of norms against number of tokens in corpus.}
  \label{fig:wikicorpus}
\end{minipage}
\end{figure}

In Figure~\ref{fig:powergraph}, we plot the standard deviation of the norms of natural phrase vectors on 
$y$-axis, against different $\lambda$ values used for constructing the vectors. We tried 
$\lambda=0,0.1,\ldots,1$. As the graph shows, as long as $\lambda< 0.5$, most of the norms lie within the 
range of $1\pm 0.1$. In contrast, the observed standard deviation quickly explodes as $\lambda$ 
gets larger. In addition, the transition point appears to be slightly larger than $0.5$, which complies with 
the fact that the observed $\alpha$ is slightly larger than $1$ (i.e., the slope $-1/\alpha$ of the power law 
tails in Figure~\ref{fig:genzipf} and Figure~\ref{fig:genzipfsample} appear to be slightly more gradual than $-1$). 

To confirm that the above observation represents a general principle across different corpora, we also conduct 
experiments on English Wikipedia\footnote{\url{https://dumps.wikimedia.org}}. 
We use WikiExtractor\footnote{\url{https://github.com/attardi/wikiextractor}} to extract texts from a 
2015-12-01 dump, 
and Stanford CoreNLP\footnote{\url{http://stanfordnlp.github.io/CoreNLP/}} for sentence splitting. 
The corpus has 1300M word tokens (about 13 times the size of BNC), and we use words in their surface 
forms instead of lemmas. We extract words and unordered bigrams which occur more than 500 times, 
resulting in about 85K words and 264K bigrams. Then, we additionally make two smaller corpora 
by uniformly sampling 10\% and 1\% sentences in Wikipedia. For each corpus, we construct natural phrase 
vectors and 
calculate the standard deviation of their norms as previous. The results are shown in Figure~\ref{fig:wikicorpus}. 
Again, we found that when one sets $F(p):=\ln p$, the standard deviation is around 0.1; in contrast when 
$F(p):=p$, the standard deviation is above 0.5. As the corpus increases, the standard 
deviation slightly decreases; at Wikipedia's full size, the standard deviation for $F(p):=\ln p$ 
descends below 0.095. 

Next, we investigate how $F$ affects the Euclidean distance $\mathcal{B}^{\{st\}}_n$. In Figure~\ref{fig:funcF}, 
we plot 
$$
\mathcal{B}^{\{st\}}_n \text{ on $y$-axis,} \quad\text{ against }\quad
\sqrt{\frac{1}{2}(\pi_{s/t\backslash s}^2+\pi_{t/s\backslash t}^2+\pi_{s/t\backslash s}\pi_{t/s\backslash t})}
\text{ on $x$-axis}, 
$$
for every unordered bigram $\{st\}$. We tried four different choices of function $F$, as indicated above the 
graphs. For the choices (a) $F(p):=\ln{p}$ and (b) $F(p):=\sqrt{p}$, we verify the upper bound $y\leq x$ 
as suggested by 
Claim~\ref{claim:biasbound}. In contrast, the approximation errors seem no longer bounded 
when (c) $F(p):=p$ or (d) $F(p):=p\ln{p}$. 

In Section~\ref{sec:exteval} we will extrinsically evaluate the additive compositionality of vector representations, and 
find $F$ a crucial factor there; while $F(p):=\ln{p}$ and $F(p):=\sqrt{p}$ 
evaluate similarly well, $F(p):=p$ and $F(p):=p\ln{p}$ do much worse. This suggests that our bias bound indeed 
has the power of predicting additive compositionality, demonstrating the usefulness of our theory. 
In contrast, it seems that the average level of approximation errors for observed bigrams 
(shown as green dashed lines in Figure~\ref{fig:funcF}) is less predictive, as the poor choices 
$F(p):=p$ and $F(p):=p\ln{p}$ actually have lower average error levels. 
This emphasizes a particular caveat that, choosing 
composition operations by minimizing the observed average error may not always 
be justifiable. Here if we consider the function $F$ as a parameter in additive composition, 
and choose the one with the lowest average error observed, we will get the worst setting 
$F(p):=p$. Therefore, we see how important a learning theory for composition research is. 

\begin{figure}[t]
\centering
\includegraphics[scale=0.158,bb=0 0 2200 620,clip]{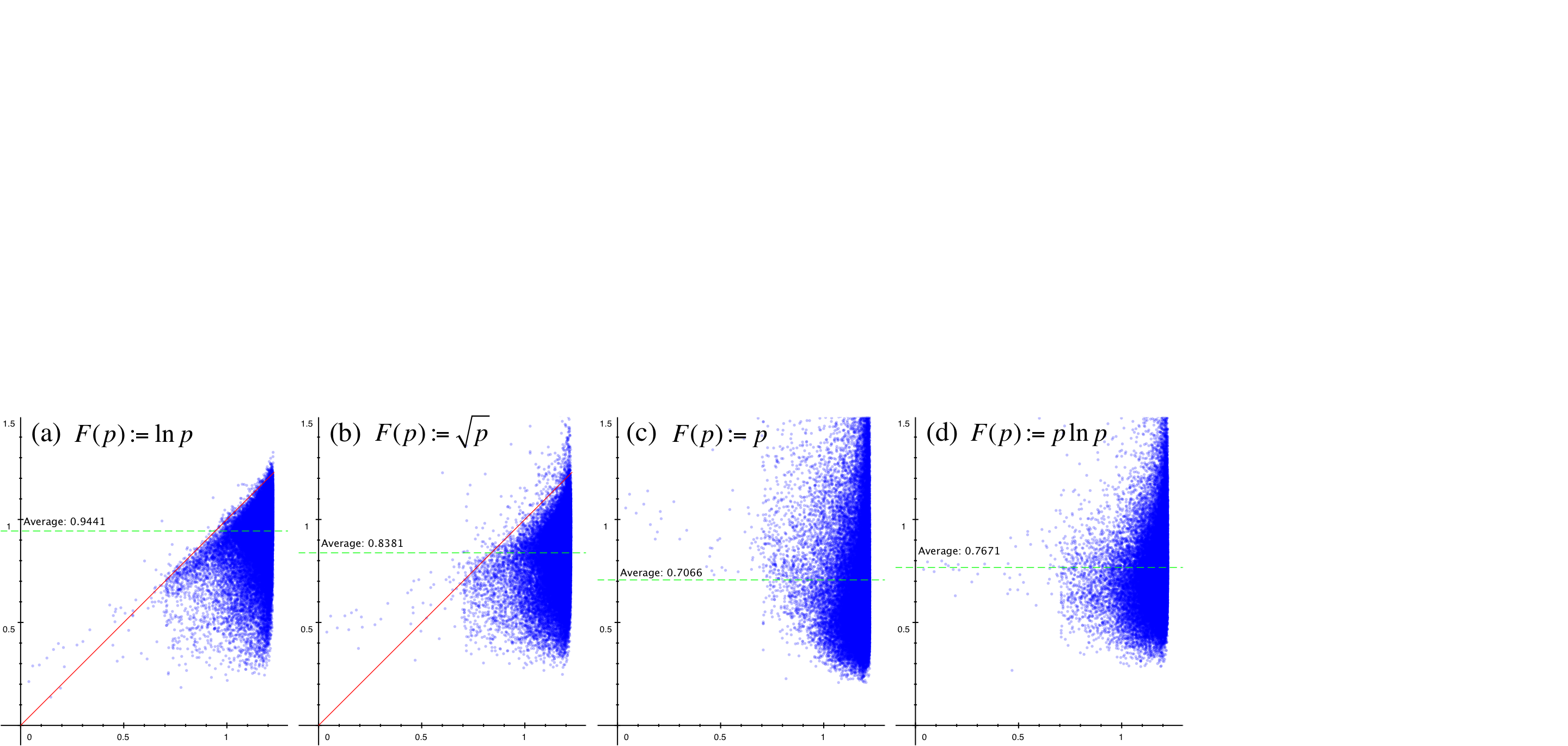}
%\vspace{-20pt}
\caption{Approximation errors for unordered bigrams observed in BNC. The choice of $F$ is shown above 
each graph. The theoretical upper bound $y\leq x$ is drawn as red solid lines in (a) and (b). The average error 
levels are drawn as green dashed lines.}
%\vspace{-8pt}
\label{fig:funcF}
\end{figure}

\subsection{Handling Word Order in Additive Composition}
\label{sec:expnearfar}

For vector representations constructed from the Near-far Contexts (Section~\ref{sec:wordorder}), we have a 
similar bias bound given by Claim~\ref{claim:nearfar}. 
In this section, we experimentally verify the bound 
and qualitatively show that the additive composition of 
Near-far Context vectors can be used for assessing semantic similarities between \emph{ordered} bigrams. 

In Figure~\ref{fig:nearfar_error_1} and Figure~\ref{fig:nearfar_error_2}, we plot 
\begin{multline*}
\quad\text{(a) } \mathcal{B}^{st}_n \quad\text{and}\quad \text{(b) } 
\lVert \mathbf{w}^{ts}_n-\frac{1}{2}(\mathbf{w}^{s\bullet}_n + \mathbf{w}^{\bullet t}_n) \rVert 
\quad\text{on $y$-axis}, \\
\text{against}\quad
\sqrt{\frac{1}{2}(\pi_{{s\bullet}\backslash t}^2+\pi_{s/{\bullet t}}^2+\pi_{{s\bullet}\backslash t}\pi_{s/{\bullet t}})}
  \text{ on $x$-axis}, 
\end{multline*}
for every ordered bigram $st$. We tried two settings of $F$, namely 
$F(p):=\ln{p}$ (Figure~\ref{fig:nearfar_error_1}) and $F(p):=\sqrt{p}$ (Figure~\ref{fig:nearfar_error_2}). 
In both cases, the approximation errors in (a) are bounded by $y\leq x$ (red solid lines) as suggested 
by Claim~\ref{claim:nearfar}. In contrast, the approximation errors for order-reversed bigrams exceed 
this bound, showing that the additive composition of Near-far Context vectors actually recognizes word order. 

In Table~\ref{tab:wordorder}, 
we show the 8 nearest word pairs for each of 8 ordered bigrams, measured by cosine similarities between 
additive compositions of Near-far Context vectors. More precisely, for word pairs ``$s_1$ $t_1$'' and 
``$s_2$ $t_2$'', we calculate the cosine similarity between 
$\frac{1}{2}(\mathbf{v}^{s_1\bullet} + \mathbf{v}^{\bullet t_1})$ and 
$\frac{1}{2}(\mathbf{v}^{s_2\bullet} + \mathbf{v}^{\bullet t_2})$, where $\mathbf{v}^{s\bullet}$ and 
$\mathbf{v}^{\bullet t}$ are normalized 
$200$-dimensional SVD reductions of $\mathbf{w}^{s\bullet}_n$ and $\mathbf{w}^{\bullet t}_n$, respectively, with 
$F(p):=\sqrt{p}$. 
%we use the additive composition $\frac{1}{2}(\mathbf{w}_{t_1^L} + \mathbf{w}_{t_2^R})$ to represent 
%meaning of the word pair ``$t_1$ $t_2$'', and 
%%additive composition of Near-far Context vectors to represent meanings of word pairs, and 
%we show the nearest 10 pairs for each of 6 ordered bigrams, measured by cosine similarity. 
The table shows that additive composition of Near-far Context vectors can indeed represent meanings of 
ordered bigrams, 
for example, ``\textit{pose problem}'' is near to ``\textit{arise dilemma}'' but not to ``\textit{dilemma arise}'', 
and ``\textit{problem pose}'' is near to ``\textit{difficulty cause}'' but not to ``\textit{cause difficulty}''. 
It is also noteworthy that ``\textit{not enough}'' is similar to ``\textit{always want}'', showing some degree 
of semantic compositionality beyond word level. 
We believe this ability of computing meanings of arbitrary ordered 
bigrams is already highly useful, because only a 
few bigrams can be directly observed from real corpora. 
%For example, in BNC, about $16000$ words have frequencies greater than 200, but only about 
%$46000$ ordered bigrams are observed 
%more than 200 times in the corpus, far less than the $16000^2$ possible word combinations. Therefore, 
%with additive composition we can calculate meanings of a large number of word pairs, most of them unseen in 
%the corpus. 

\begin{figure}
\centering
\begin{minipage}{.49\textwidth}
  \centering
  \includegraphics[scale=0.1,bb=0 0 1720 960,clip]{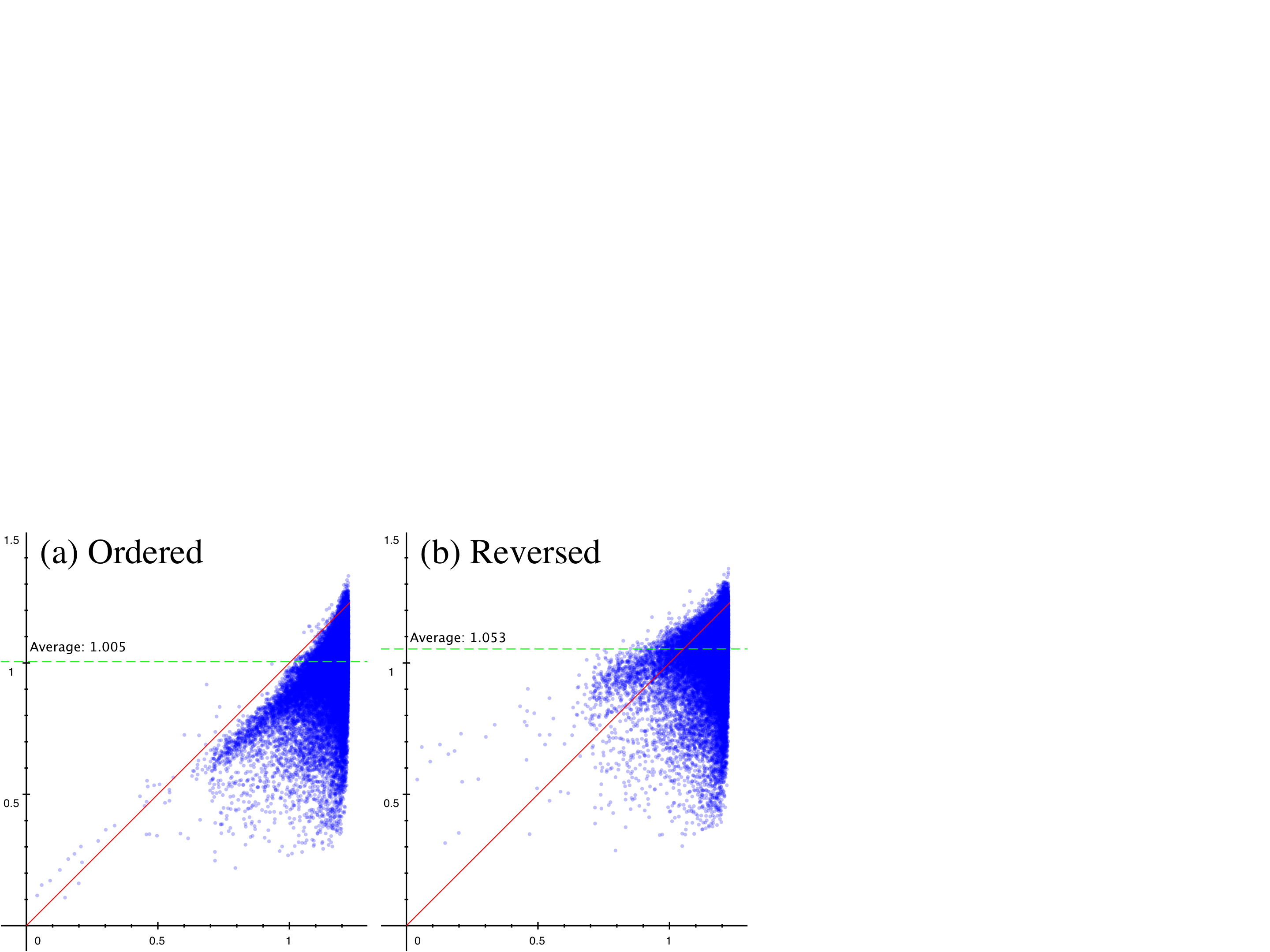}
  \captionof{figure}{Near-far Context, $F(p):=\ln{p}$.}
  \label{fig:nearfar_error_1}
\end{minipage}%
\begin{minipage}{.02\textwidth}
  ~
\end{minipage}%
\begin{minipage}{.49\textwidth}
  \centering
  \includegraphics[scale=0.1,bb=0 0 1720 960,clip]{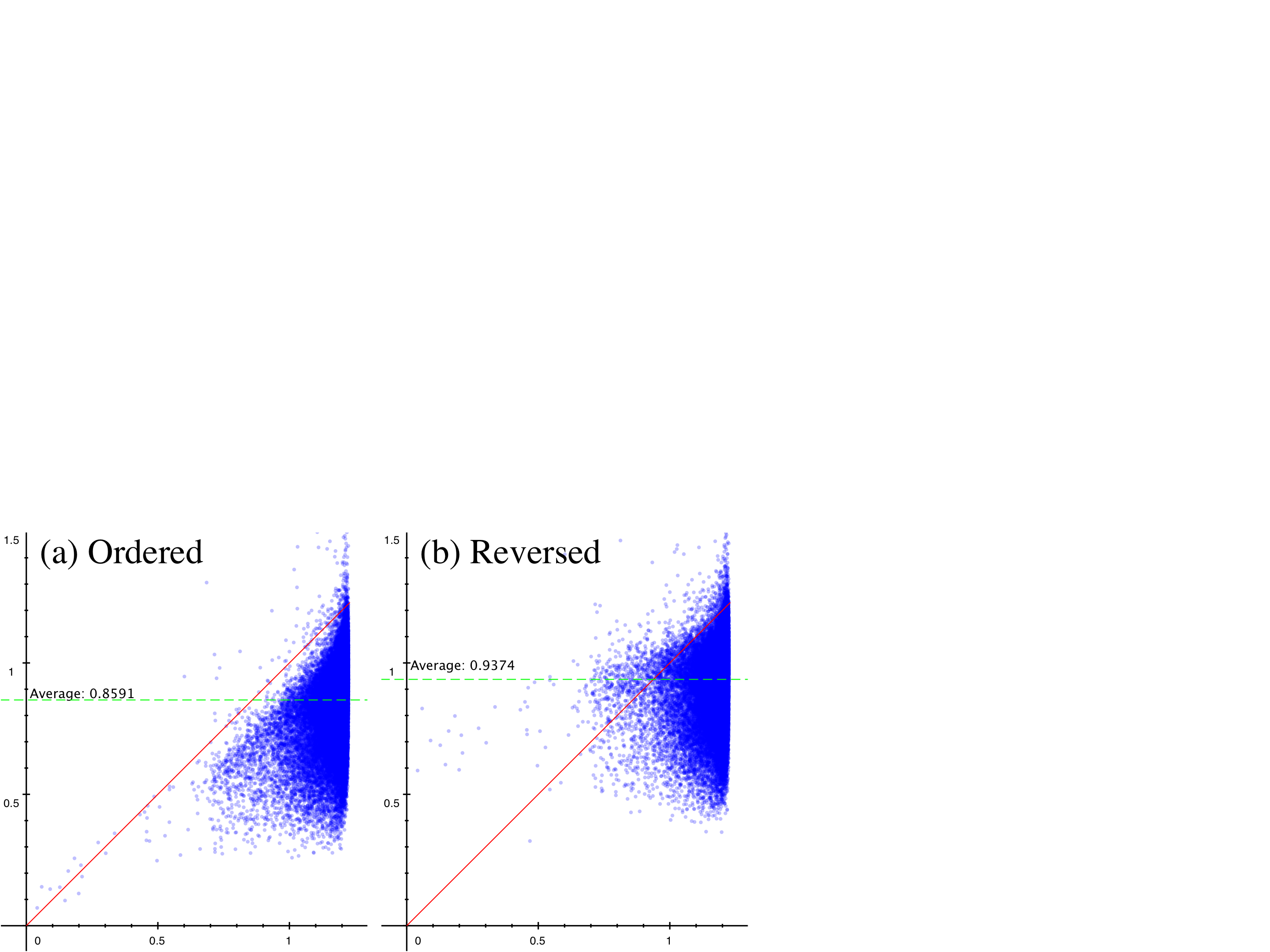}
  \captionof{figure}{Near-far Context, $F(p):=\sqrt{p}$.}
  \label{fig:nearfar_error_2}
\end{minipage}
\end{figure}

\begin{table}[t]
\centering
%\scriptsize
\renewcommand{\arraystretch}{1.2}
\setlength{\tabcolsep}{8pt}
\it
%\begin{tabular}{| c | c ? c | c ? c | c | }
%\hline
%pose problem & problem pose & tax rate & rate tax & high price & price high \\
%\hline
%solve dilemma & difficulty solve & income price & {\rm income inflation} & low rate & rate low\\
%arise dilemma & difficulty cause & income inflation & premium taxation & low premium & level low\\
%solve difficulty & difficulty tackle & taxation premium & premium inflation & low output & value low\\
%solve concern & tendency solve & income premium & price income & low value & cost low\\
%cause dilemma & solution cause & {\rm inflation income} & {\rm taxation premium} & low cost & premium low\\
%tackle difficulty & dilemma cause & taxation price & inflation income & low wage & output low\\
%{\rm dilemma serious} & shortage solve & {\rm premium taxation} & earnings taxation & low level & inflation low\\
%confront difficulty & consequence solve & inflation premium & premium income & low margin & market low\\
%tackle dilemma & difficulty beset & {\rm premium income} & {\rm income premium} & low stock & discount low\\
%confront dilemma & possibility solve & {\rm price income} & {\rm income taxation} & low inflation & wage low\\
%\hline
%\end{tabular}
\begin{tabular}{| c | c ? c | c | }
\hline
\textbf{pose problem} & \textbf{problem pose} & \textbf{tax rate} & \textbf{rate tax} \\
\hline
solve dilemma & difficulty solve & income price & {\rm income inflation} \\
arise dilemma & difficulty cause & income inflation & premium taxation \\
solve difficulty & difficulty tackle & taxation premium & premium inflation \\
solve concern & tendency solve & income premium & price income \\
cause dilemma & solution cause & {\rm inflation income} & {\rm taxation premium} \\
tackle difficulty & dilemma cause & taxation price & inflation income \\
{\rm dilemma serious} & shortage solve & {\rm premium taxation} & earnings taxation \\
confront difficulty & consequence solve & inflation premium & premium income \\
%tackle dilemma & difficulty beset & {\rm premium income} & {\rm income premium} \\
%confront dilemma & possibility solve & {\rm price income} & {\rm income taxation} \\
\hline
\hline
\textbf{high price} & \textbf{price high} & \textbf{not enough} & \textbf{enough not} \\
\hline
 low rate & rate low & really sufficient & too never \\
 low premium & level low & insufficient bother & {\rm really never} \\
 low output & value low & still bother & too really \\
 low value & cost low & always want & {\rm ought too} \\
 low cost & premium low & always bother & too actually \\
 low wage & output low & really prepared & too always \\
 low level & inflation low & really unwilling & sufficient never \\
 low margin & market low & really obliged & quite never \\
% low stock & discount low & really willing & gon never\\
% low inflation & wage low & think sufficient & gon too\\
\hline
\end{tabular}
\rm
\caption{Top 8 similar word pairs, assessed by cosine similarities between additive compositions of 
Near-far Context vectors.}
%\vspace{-6pt}
\label{tab:wordorder}
\end{table}

\subsection{Dimension Reduction}
\label{sec:expdimred}

In this section, we verify our prediction in Section~\ref{sec:dimensionreduction} that vectors trained by SVD 
preserve our bias bound more faithfully than GloVe and SGNS. In 
Figure~\ref{fig:dimred}, we use 
%Nevertheless, in our experiments, we find that word vectors trained by SVD can preserve the error bound 
%well, even after the normalization step is conducted. On the other hand, word vectors trained by GloVe or 
%SGNS are less respectful to the error bound as predicted. This is illustrated in \Cref{fig:dimred}, where we use 
\emph{normalized} word vectors $\mathbf{v}^t$ 
that are constructed from the distributional vectors $\mathbf{w}^t_n$ by reducing to $200$ dimensions using 
different reduction methods. We use SVD in (a) and (b), with $F(p):=\ln{p}$ in (a) and $F(p):=\sqrt{p}$ in (b). 
The GloVe model is shown in (c) and SGNS in (d), both of them using $F(p):=\ln{p}$. For each unordered 
bigram $\{st\}$ we plot 
$$
\lVert \mathbf{v}^{\{st\}}-\frac{1}{2}(\mathbf{v}^{s}+\mathbf{v}^{t}) \rVert
\text{ on $y$-axis,} \quad\text{ against }\quad
\sqrt{\frac{1}{2}(\pi_{s/t\backslash s}^2+\pi_{t/s\backslash t}^2+\pi_{s/t\backslash s}\pi_{t/s\backslash t})}
\text{ on $x$-axis}. 
$$
The graphs show that 
vectors trained by SVD still largely conform to our bias bound $y\leq x$ (red solid lines), but vectors trained by 
GloVe or SGNS no longer do. Our extrinsic evaluations in Section~\ref{sec:exteval} also show that SVD might 
perform better than GloVe and SGNS. 

\begin{figure}[t]
\centering
\includegraphics[scale=0.158,bb=0 0 2200 620,clip]{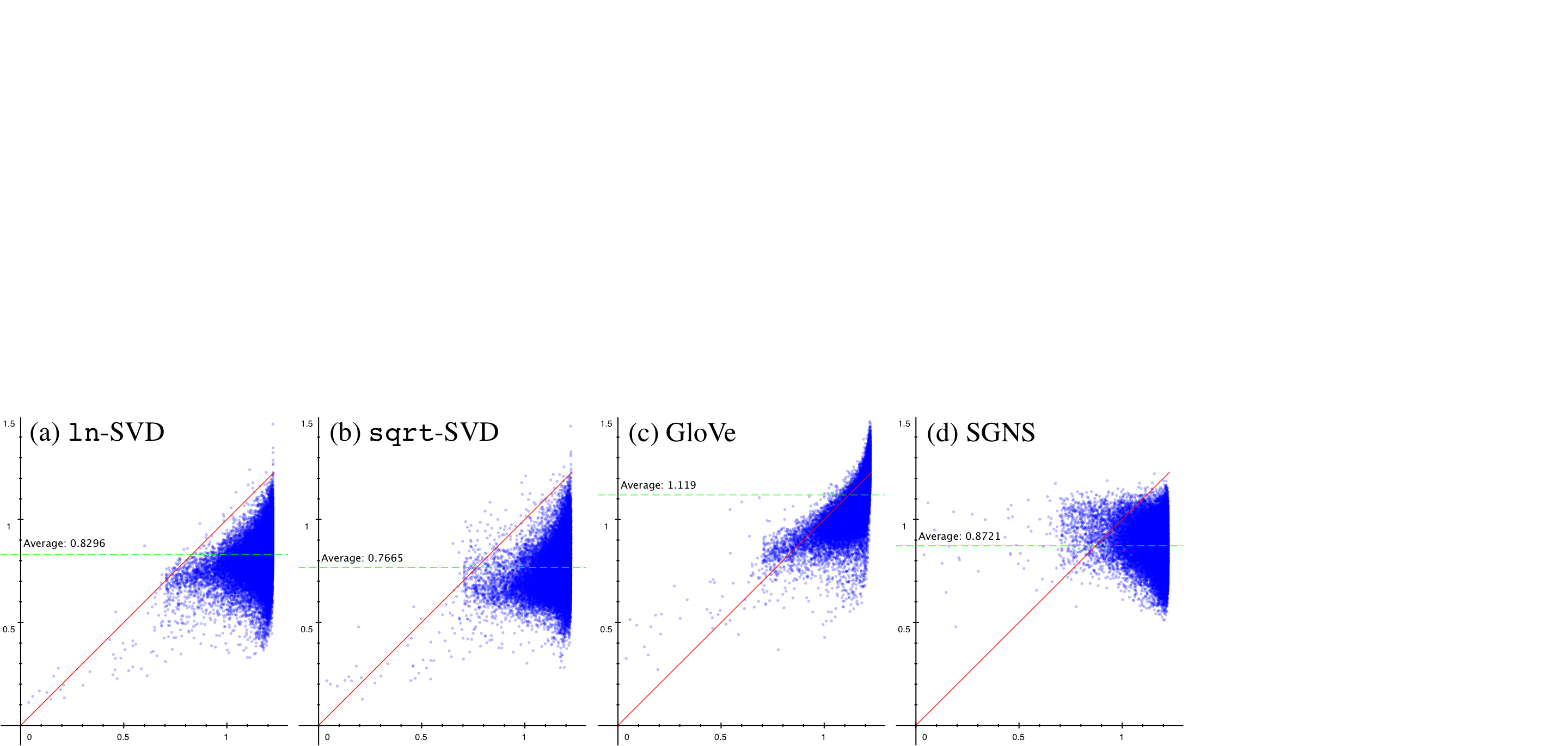}
%\vspace{-20pt}
\caption{Approximation errors for different dimension reduction methods.}
%\vspace{-8pt}
\label{fig:dimred}
\end{figure}

\section{Extrinsic Evaluation of Additive Compositionality}
\label{sec:exteval}

In this section, we test additive composition on human annotated data sets to see if our theoretical predictions
correlate with human judgments. We conduct a phrase similarity task and a word analogy task. 

\subsection{Phrase Similarity}
\label{sec:phrasesim}

In a data set\footnote{\url{http://homepages.inf.ed.ac.uk/s0453356/}} 
created by \citet{mitchell10}, phrase pairs are annotated with similarity scores. 
Each instance in the data is a (\textit{phrase1}, \textit{phrase2}, \textit{similarity})
triplet, and each phrase consists of two words. 
The similarity score is annotated by humans,
ranging from 1 to 7, indicating how similar the meanings of the two phrases are. 
For example, one annotator assessed the similarity between ``\textit{vast amount}'' and
``\textit{large quantity}'' as 7 (the highest),
and the similarity between ``\textit{hear word}'' and ``\textit{remember name}'' as 1 (the lowest). 
Phrases are divided into three categories: Verb-Object, Compound Noun, and Adjective-Noun. 
Each category has 108 phrase pairs, and they are annotated by 18 human participants 
(i.e.,~1,944 instances in each category). 
Using this data set, we can compare the human ranking of phrase similarity with the one calculated from 
cosine similarities between vector-based compositions. We use Spearman's $\rho$ to measure 
how correlated the two rankings are. 

Vector representations are constructed from BNC, with the same settings described in Section~\ref{sec:expveri}. 
We plot in Figure~\ref{fig:phrasestat} the distributions of how many times the phrases in the data set occur 
as bigrams in BNC. 
The figure indicates that a large portion of the phrases are rare or unseen as bigrams, so their 
meanings cannot be directly assessed as natural vectors from the corpus. Therefore, the data is 
suitable for testing compositions of word vectors. 

We reduce the high dimensional distributional representations into $200$-dimensional and normalize the 
vectors. The 
%After the vector representations are constructed, they are reduced to $200$-dimensional and 
%normalized. The 
dimension $200$ is selected by observing the top 800 
singular values calculated by SVD. As illustrated in Figure~\ref{fig:sing}, the decrease of singular values 
flattens to a constant rate at a rank of about $200$. This suggests that the most characteristic features 
in the vector representations are projected into 200 dimensions. In our preliminary experiments, we have 
confirmed that 200-dimension performs better than 100-dimension, 500-dimension or no dimension reduction. 

\begin{figure}
\centering
\begin{minipage}{.48\textwidth}
  \centering
  \includegraphics[scale=0.46,bb=0 0 380 170,clip]{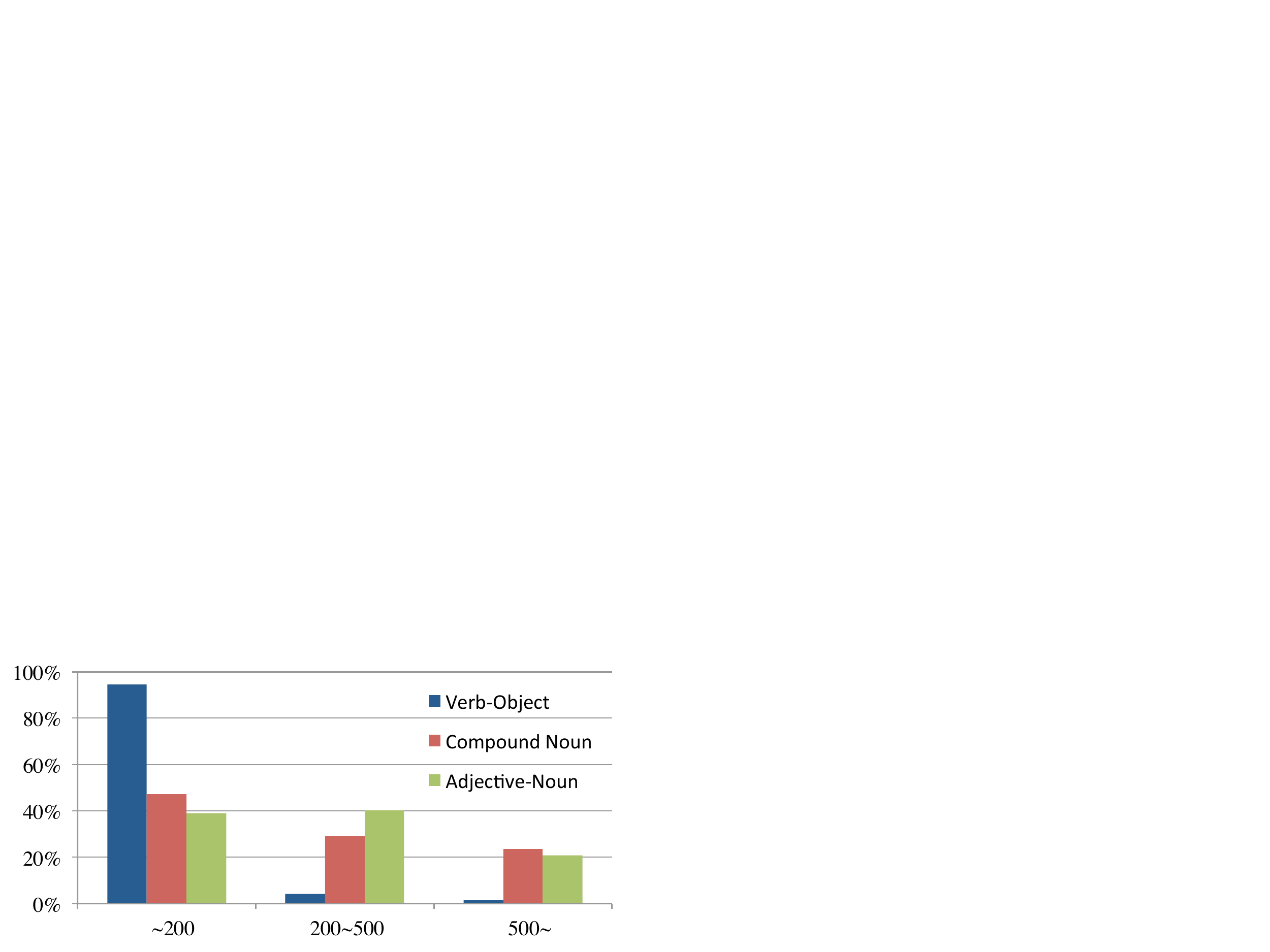}
  \captionof{figure}{Distributions of how many times the phrases in the data occur as 
  bigrams in BNC. The $y$-axis shows percentage and $x$-axis shows frequency range.}
  \label{fig:phrasestat}
\end{minipage}%
\begin{minipage}{.04\textwidth}
  ~
\end{minipage}%
\begin{minipage}{.48\textwidth}
  \centering
  \includegraphics[scale=0.46,bb=0 0 380 170,clip]{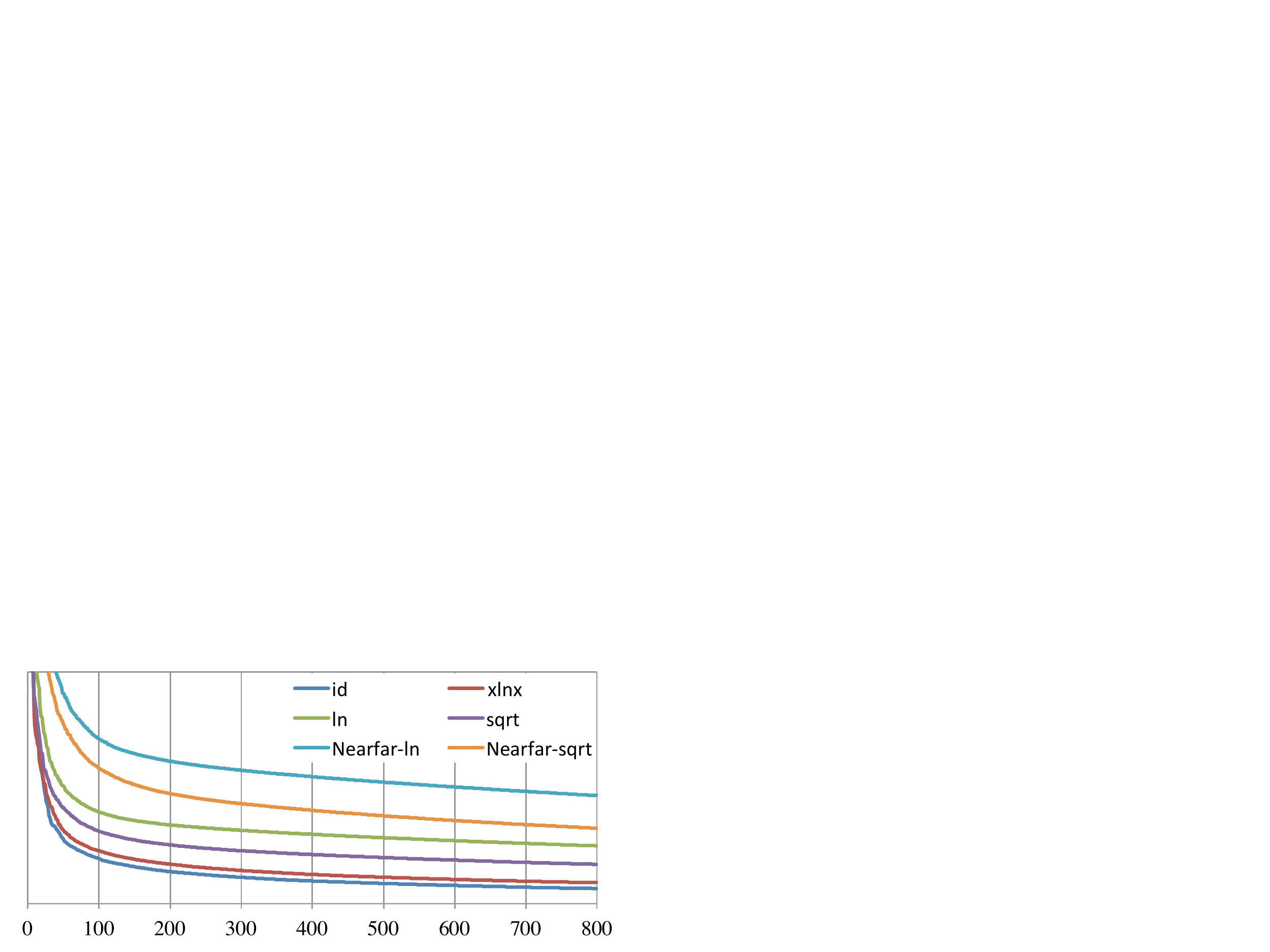}
  \captionof{figure}{Top 800 singular values calculated by SVD. The $y$-axis 
  shows singular value and $x$-axis shows rank. Different $y$-scales are used for different settings.}
  \label{fig:sing}
\end{minipage}
\end{figure}

For training word embeddings, we use the random projection algorithm~\citep{svd} for SVD, 
and Stochastic Gradient Descent (SGD)~\citep{bottou12} 
for SGNS and GloVe. Since these are randomized algorithms, we run each test $20$ 
times and report the mean performance with standard deviation. We tune SGD learning rates 
by checking convergence of the objectives, and get slightly better results 
than the default training parameters set in the software of SGNS\footnote{\url{https://code.google.com/p/word2vec/}} 
and GloVe\footnote{\url{http://nlp.stanford.edu/projects/glove/}}. 

As pointed out by \citet{levyTACL}, there are other detailed settings that can vary in SGNS and GloVe. 
We make these settings close enough to be comparable but emphasize the differences of loss 
functions. More precisely, 
we use no subsampling and set the number of negative samples to 2 in SGNS, and use the default loss 
function in GloVe with the cutoff threshold set to $10$. In addition, the default implementations of both 
SGNS and GloVe 
weigh context words by a function of distance to their targets, which we disable 
(i.e.~equal weights are used for all context words), so as to make it 
compatible with our problem setting.% the contexts we consider in this article. 

The test results are shown in Table~\ref{tab:composition}. We compare 
different settings of function $F$, Ordinary and Near-far Contexts, and different dimension reductions. 
When using ordinary contexts and SVD reduction, we find that the functions \texttt{ln} ($F(p):=\ln{p}$) 
and \texttt{sqrt} ($F(p):=\sqrt{p}$) perform similarly well, whereas \texttt{id} ($F(p):=p$) and \texttt{xlnx} 
($F(p):=p\ln{p}$) are much worse, confirming our predictions in Section~\ref{sec:functionF}. 
As for Near-far Context vectors (Section~\ref{sec:wordorder}), we find that the Nearfar-\texttt{sqrt}-SVD setting 
has a high performance, 
demonstrating improvements brought by Near-far to additive composition. However, we note that 
Nearfar-\texttt{ln}-SVD is worse. One reason could be that the function \texttt{ln} emphasizes 
lower co-occurrence probabilities, which combined with Near-far labels could make the vectors more prone 
to data sparseness; or correlatively, 
%context words more sparse; 
some important syntactic markers might be obscured because 
they occur in high frequency. Finally, 
we note that SVD is consistently good and usually better than GloVe and SGNS, which supports our 
arguments in Section~\ref{sec:dimensionreduction}. 

We report some additional test results for reference. In Table~\ref{tab:composition}, 
the ``Tensor Product'' row shows the results of composing Ordinary-\texttt{ln}-SVD vectors by 
tensor product instead of average, which means that the similarity between two phrases 
``$s_1$ $t_1$'' and 
``$s_2$ $t_2$'' is assessed by taking product of the word cosine similarities 
$\cos(s_1,s_2)\cdot\cos(t_1,t_2)$. The numbers are worse than additive composition, suggesting that 
a similar phrase might be something more than a sequence of individually similar words. 
In the ``Upper Bound'' row, we show the best possible Spearman's $\rho$ for this task, which are less than 
$1$ because there are disagreements between human annotators. Compared to these numbers, we find 
that the performance of additive composition on compound nouns is remarkably high. 
Furthermore, in ``Muraoka et al.'' we cite the best results reported in \citet{muraoka14}, which has tested 
several compositional frameworks. In ``Deep Neural'', we also test 
additive composition of word vectors trained by deep neural networks 
(normalized $200$-dimensional vectors trained by \citealt{turian10}, using the model of \citealt{cwmodel}). 
These results cannot be directly compared to each other because they construct vector representations
from different corpora; but we can fairly say that additive composition is still a powerful method 
for assessing phrase similarity, and linear dimension reductions might be more suitable for training additively 
compositional word vectors than deep neural networks. Therefore, our theory on additive composition is 
about the state-of-the-art. 

\begin{table}[t]
\centering
%\scriptsize
\renewcommand{\arraystretch}{1.2}
\setlength{\tabcolsep}{12pt}
\begin{tabular}{| c | c | c | c | }
\hline
 & Verb-Object & Compound Noun & Adjective-Noun \\
%\hline
%\multicolumn{4}{|c|}{Different $F$, Ordinary Context, SVD} \\
\hline
Ordinary-\texttt{id}-SVD & $.4029 \pm .0009$ & $.4275 \pm .0009$ & $.4160 \pm .0009$ \\
Ordinary-\texttt{xlnx}-SVD & $.4204 \pm .0011$ & $.4728 \pm .0013$ & $.4511 \pm .0012$ \\
Ordinary-\texttt{ln}-SVD & $\mathbf{.4369 \pm .0022}$ & $\mathbf{.5187 \pm .0016}$ & $.4604 \pm .0033$ \\
%$\sqrt{p}$ & $\mathbf{.4377 \pm .0018^*}$ & $.5105 \pm .0018$ & $\mathbf{.4803 \pm .0016}$ \\
Ordinary-\texttt{sqrt}-SVD & $.4318 \pm .0019$ & $.5051 \pm .0020$ & $.4790 \pm .0018$ \\
%\hline
%\multicolumn{4}{|c|}{Different Contexts, $F(x) := \ln x$, SVD} \\
%\hline
%Ordinary & $\mathbf{.4369 \pm .0022^*}$ & $\mathbf{.5187 \pm .0016^*}$ & $.4604 \pm .0033$ \\
%No-lap & $.3936 \pm .0020$ & $\mathbf{.5184 \pm .0021^*}$ & $\mathbf{.4916 \pm .0029}$ \\
Nearfar-\texttt{ln}-SVD & $.4204 \pm .0018$ & $.5135 \pm .0020$ & $.4491 \pm .0028$ \\
%Reverse & $.4235 \pm .0020$ & $.5171 \pm .0020$ & $.4551 \pm .0022$ \\
%\hline
%\multicolumn{4}{|c|}{Different Contexts, $F(x) := \sqrt{x}$, SVD} \\
%\hline
%Ordinary & $.4318 \pm .0019$ & $.5051 \pm .0020$ & $.4790 \pm .0018$ \\
%No-lap & $.4122 \pm .0021$ & $.4748 \pm .0016$ & $.4830 \pm .0024$ \\
Nearfar-\texttt{sqrt}-SVD & $\mathbf{.4359 \pm .0020}$ & $\mathbf{.5193 \pm .0024}$ & $.4873 \pm .0011$ \\
%Reverse & $.4114 \pm .0013$ & $.5069 \pm .0015$ & $.4769 \pm .0019$ \\
%\hline
%\multicolumn{4}{|c|}{Different Reductions, Ordinary Context, $F(x) = \ln x$} \\
%\hline
%SVD & $\mathbf{.4369 \pm .0022^*}$ & $\mathbf{.5187 \pm .0016^*}$ & $.4604 \pm .0033$ \\
SGNS & $.4273 \pm .0035$ & $.4977 \pm .0025$ & $\mathbf{.5125 \pm .0032}$ \\
GloVe & $.4014 \pm .0046$ & $.4986 \pm .0053$ & $.4308 \pm .0062$ \\
%\hline
%\multicolumn{4}{|c|}{Reference} \\
\hline
Tensor Product & $.4092 \pm .0033$ & $.4801 \pm .0035$ & $.4348 \pm .0048$ \\
Upper Bound & $.691$ & $.693$ & $.715$ \\
%Tensor & $.4014 \pm .0046$ & $.4986 \pm .0053$ & $.4308 \pm .0062$ \\
%\hline
Muraoka et al. & $.430$ & $.481$ & $.469$ \\
Deep Neural & $.305$ & $.385$ & $.207$ \\
\hline
\end{tabular}
\caption{Spearman's $\rho$ in the phrase similarity task.}
%\vspace{-6pt}
\label{tab:composition}
\end{table}

\subsection{Word Analogy}
\label{sec:wordanalogy}

Word analogy is the task of solving questions of the form ``\textit{a} is to \textit{b} as \textit{c} is to \_\_?'', and 
an elegant approach proposed by \citet{word2vecNAACL} is to find the word vector most similar to 
$\mathbf{v}^b - \mathbf{v}^a + \mathbf{v}^c$ . For example, in order to answer the 
question ``\textit{man} is to \textit{king} as \textit{woman} is to \_\_?'', one needs to calculate 
$\mathbf{v}^{\text{king}} - \mathbf{v}^{\text{man}} + \mathbf{v}^{\text{woman}}$ and find out its most similar 
word vector, which will probably turn out to be $\mathbf{v}^{\text{queen}}$, indicating the correct answer 
\textit{queen}. 

As pointed out by \citet{levyCoNLL}, the key to solving analogy questions is the ability to ``add'' 
(resp.~``subtract'') some aspects to (resp.~from) a concept. For example, \textit{king} is a concept of \textit{human} 
that has the aspects of being \textit{royal} and \textit{male}. If we can ``subtract'' the aspect \textit{male} 
from \textit{king} and ``add'' the aspect \textit{female} to it, then we will probably get the concept 
\textit{queen}. Thus, the vector-based solution proposed by \citet{word2vecNAACL} is 
essentially assuming that ``adding'' and ``subtracting'' aspects can be realized by adding and subtracting 
word vectors. Why is this assumption admissible?

We believe this assumption is closely related to additive compositionality. Because, if an aspect is 
represented by an adjective (e.g.~\textit{male}) and a concept is represented by a noun 
(e.g.~\textit{human}), we can usually ``add'' the aspect to the concept by simply arranging the 
adjective and the noun to form a phrase (e.g.~\textit{male human}). Therefore, as the meaning of the phrase 
can be calculated by additive composition (e.g.~$\mathbf{v}^{\text{male}}+\mathbf{v}^{\text{human}}$), 
we have indeed realized the ``addition'' of aspects by addition of word vectors. Specifically, 
since $\textit{man}\approx\textit{male human}$, $\textit{king}\approx\textit{royal male human}$, 
$\textit{woman}\approx\textit{female human}$ and $\textit{queen}\approx\textit{royal female human}$, we 
expect the following by additive composition of phrases. 
\[\begin{split}
\mathbf{v}^{\text{man}} & \approx  \mathbf{v}^{\text{male}} + \mathbf{v}^{\text{human}} \\
\mathbf{v}^{\text{king}} & \approx \mathbf{v}^{\text{royal}} + \mathbf{v}^{\text{male}} + \mathbf{v}^{\text{human}} \\
\mathbf{v}^{\text{woman}} & \approx \mathbf{v}^{\text{female}} + \mathbf{v}^{\text{human}} \\
\mathbf{v}^{\text{queen}} & \approx \mathbf{v}^{\text{royal}} + \mathbf{v}^{\text{female}} + \mathbf{v}^{\text{human}} \\
\end{split}\]
Here, ``$\approx$'' denotes proximity between vectors in the sense of cosine similarity. From these approximate 
equations, we can imply that 
%Therefore, one implies that 
$
\mathbf{v}^{\text{king}} - \mathbf{v}^{\text{man}} + \mathbf{v}^{\text{woman}} \approx 
\mathbf{v}^{\text{royal}} + \mathbf{v}^{\text{female}} + \mathbf{v}^{\text{human}} \approx \mathbf{v}^{\text{queen}} 
$, which solves the analogy question. 

%Based on the above arguments, 
Therefore, we expect word analogy task to serve as an extrinsic evaluation for additive 
compositionality as well. For this reason, we conduct word analogy task on the standard 
\textsc{Msr}\footnote{\url{http://research.microsoft.com/en-us/projects/rnn/}}~\citep{word2vecNAACL} and 
\textsc{Google}\footnote{\url{https://code.google.com/p/word2vec/}}~\citep{word2vecNIPS} data sets. 
Each instance in the data is a $4$-tuple of words subject to 
``\textit{a} is to \textit{b} as \textit{c} is to \textit{d}'', and the task is to find out \textit{d} from 
\textit{a}, \textit{b} and \textit{c}. 
We train word vectors with the same settings described in Section~\ref{sec:expveri}, but using surface forms 
instead of lemmatized words in BNC. Tuples with out-of-vocabulary words are removed from data, 
which results in 4382 tuples in \textsc{Msr} and 8906 in 
\textsc{Google}\footnote{These are about half the size of the original data sets.}. 

\begin{table}[t]
\centering
%\scriptsize
\renewcommand{\arraystretch}{1.2}
\setlength{\tabcolsep}{5pt}
\begin{tabular}{ c | c  c  c  c  c  c }
\hline
 & \texttt{id}-SVD & \texttt{xlnx}-SVD & \texttt{ln}-SVD & \texttt{sqrt}-SVD & SGNS & GloVe \\
\hline
\textsc{Google} & $19.43\pm .06$ & $32.47\pm .10$ & $\mathbf{52.04\pm .36}$ & $51.28\pm .25$ & $45.16\pm .44$ & $48.39\pm .44$ \\
\textsc{Msr} & $17.36\pm .06$ & $33.85\pm .12$ & $\mathbf{66.67\pm .26}$ & $60.93\pm .25$ & $55.56\pm .30$ & $65.05\pm .55$ \\
\hline
\end{tabular}
\caption{Accuracy (\%) in the word analogy task.}
%\vspace{-4pt}
\label{tab:analogy}
\end{table}

The test results are shown in Table~\ref{tab:analogy}. Again, we find that \texttt{ln} and \texttt{sqrt} perform similarly 
well but \texttt{id} and \texttt{xlnx} are worse, confirming that the choice of function $F$ can drastically affect 
performance on word analogy task as well, which we believe is related to additive compositionality. In addition, 
we confirm that SVD can perform better than SGNS and GloVe, which gives more support to our conjecture that 
vectors trained by SVD might be more compatible to additive composition. 
%This supports our arguments in \Cref{sec:errbdproof} and \Cref{sec:dimred}. 

\section{Conclusion}
\label{sec:conclusion}

In this article, we have developed a theory of additive composition regarding its bias. 
The theory has explained why and how additive composition works, making useful suggestions 
about improving additive compositionality, which include the choice of a transformation function, 
the awareness of word order, and the dimension reduction methods. Predictions made by our theory 
have been verified experimentally, and shown positive correlations with human judgments. In short, 
we have revealed the mechanism of additive composition. 
%Consequently, this work has revealed a deep insight about additive composition. 

However, we note that our theory is not ``proof'' of additive composition being a ``good'' compositional 
framework. As a generalization error bound usually is in machine learning theory, our bound for the bias does not 
show if additive composition is ``good''; rather, it specifies some factors that can affect the errors. If 
we have generalization error bounds for other composition operations, a comparison between such bounds 
can bring 
useful insights into the choices of compositional frameworks in specific cases. 
We expect our bias bound to inspire more results
in the research of semantic composition. 

Moreover, we believe this line of theoretical research can be pursued further. 
In computational linguistics, the idea of treating semantics and semantic relations by algebraic operations on 
distributional context vectors is relatively new~\citep{Clarke:2012}. Therefore, the relation between 
linguistic theories and 
our approximation theory of semantic composition is left largely unexplored. For example, the intuitive 
distinction between compositional (e.g.~\textit{high price}) and non-compositional (e.g.~\textit{white lie}) 
phrases is currently ignored in our theory. Our bias bound treats both cases by a single 
collocation measure. Can one improve the bound by taking account of this distinction, and/or other kinds 
of linguistic knowledge? This is an intriguing question for future work. 

\newpage

\appendix

%\renewcommand{\thesection}{\Alph{section}}
%\numberwithin{equation}{section}
%\setcounter{section}{1}

\section{Proof of Lemmas}
\label{app:fullproof}

In this appendix, we prove Lemma~\ref{lem:ycalc} and Lemma~\ref{lem:phicalc} in Section~\ref{sec:formalization}. 

In order to prove Lemma~\ref{lem:ycalc}, we first note the following equations: 
\begin{align}
\frac{F\bigl(x+(np_{i,n})^{-1}\bigr)-F\bigl(\beta+(np_{i,n})^{-1}\bigr)}{\bigl(1+(\beta np_{i,n})^{-1}\bigr)^{-1+\lambda}} & \geq F(x)-F(\beta) 
\label{eq:aux1}
\\
\frac{F\bigl(x+(np_{i,n})^{-1}\bigr)-F\bigl(\beta+(np_{i,n})^{-1}\bigr)}{\bigl(1+(\beta np_{i,n})^{-1}\bigr)^{\lambda}} & = F\Bigl(\frac{x+(np_{i,n})^{-1}}{1+(\beta np_{i,n})^{-1}}\Bigr)-F(\beta) 
\label{eq:aux2}
\end{align}
Equation~\eqref{eq:aux1} can be obtained by analyzing the derivatives $F'(x)=x^{-1+\lambda}$, and 
Equation~\eqref{eq:aux2} immediately follows from the identity $z^\lambda(F(x)-F(y))=F(zx)-F(zy)$. 

\begin{proof}[Proof of Lemma~\ref{lem:ycalc}(a)(b)]
We calculate $e_{i,n}$ as follows. By definition, 
$$
e_{i,n}=\mathbb{E}\Bigl[ \frac{Y_{i,n}}{\sqrt{\varphi_{i,n}}}I_{X\leq\beta} \Bigr]
+\mathbb{E}\Bigl[ \frac{Y_{i,n}}{\sqrt{\varphi_{i,n}}}I_{X\geq\beta} \Bigr].
$$
Then, note that $Y_{i,n}\leq 0$ when $X\leq\beta$, and by \eqref{eq:aux1} we have 
\begin{equation}
\label{eq:last1}
0\geq\mathbb{E}\Bigl[ \frac{Y_{i,n}}{\sqrt{\varphi_{i,n}}}I_{X\leq\beta} \Bigr]\geq 
\frac{\mathbb{E}\bigl[\bigl(F(X)-F(\beta)\bigr)I_{X\leq\beta}\bigr]}{\sqrt{1+(\beta np_{i,n})^{-1}}}. 
\end{equation}
From the condition $\mathbb{E}[F(X)^2]<\infty$ we have $\mathbb{E}[\lvert F(X)\rvert ]<\infty$, so
\begin{equation}
\label{eq:res1}
\eqref{eq:last1}\geq -\mathbb{E}[\lvert F(X)\rvert ]-\lvert F(\beta) \rvert\quad\text{and}\quad
\eqref{eq:last1}\rightarrow 0 \text{ when } np_{i,n}\rightarrow 0. 
\end{equation}
Next, when $X\geq\beta$ we have $Y_{i,n}\geq 0$, and by \eqref{eq:aux2} we get 
\begin{equation}
\label{eq:last2}
0\leq\mathbb{E}\Bigl[ \frac{Y_{i,n}}{\sqrt{\varphi_{i,n}}}I_{X\geq\beta} \Bigr]= 
\sqrt{1+(\beta np_{i,n})^{-1}}\cdot\mathbb{E}\left[\left(F\Bigl(\frac{X+(np_{i,n})^{-1}}{1+(\beta np_{i,n})^{-1}}\Bigr)-F(\beta)\right)I_{X\geq\beta}\right]. 
\end{equation}
By Assumption (B2), $X$ obeys a power law at $X\geq\beta$, so if we put $Z:=X-\beta$, the probability 
density of $Z$ is given by 
$$
-\D\mathbb{P}(z\leq Z)=\frac{\xi\D z}{(z+\beta)^2}\leq\frac{\xi\D z}{\sqrt{z}(z+\beta)^{3/2}}
\quad\text{(where $z> 0$)}. 
$$
Thus, 
\begin{equation}
\label{eq:last3}
\eqref{eq:last2}\leq \int_0^\infty \frac{F\Bigl(\dfrac{z}{1+(\beta np_{i,n})^{-1}}+\beta\Bigr)-F(\beta)}{\sqrt{\dfrac{z}{1+(\beta np_{i,n})^{-1}}}}\cdot\frac{\xi\D z}{(z+\beta)^{3/2}}. 
\end{equation}
The condition $\mathbb{E}[F(X)^2]<\infty$ implies $\lambda<0.5$, so 
$\dfrac{F(u+\beta)-F(\beta)}{\sqrt{u}}\rightarrow 0$ at $u\rightarrow\infty$. In addition the 
%Note that $\dfrac{F(u+\beta)-F(\beta)}{\sqrt{u}}\rightarrow 0$ at both $u\rightarrow 0$ and $u\rightarrow\infty$, 
%and the
function is smooth on $[0,\infty)$, so it can be bounded by a constant $M$. 
Therefore, 
\begin{equation}
\label{eq:res2}
\eqref{eq:last3}\leq\int_0^\infty M\cdot\frac{\xi\D z}{(z+\beta)^{3/2}}<\infty \quad\text{and}\quad 
\eqref{eq:last3}\rightarrow 0 \text{ when } np_{i,n}\rightarrow 0. 
\end{equation}
The limit above is a consequence of Lebesgue's Dominated Convergence Theorem. 

Combining \eqref{eq:res1} and \eqref{eq:res2}, we have proven Lemma~\ref{lem:ycalc}(a)(b). 
\end{proof}

\begin{proof}[Proof of Lemma~\ref{lem:ycalc}(c)(d)]
We calculate $\mathbb{E}[Y_{i,n}^2/\varphi_{i,n}]$ as follows. By definition, 
$$
\mathbb{E}[Y_{i,n}^2/\varphi_{i,n}]=\mathbb{E}\Bigl[ \frac{Y_{i,n}^2}{\varphi_{i,n}}I_{X\leq\beta} \Bigr]
+\mathbb{E}\Bigl[ \frac{Y_{i,n}^2}{\varphi_{i,n}}I_{X\geq\beta} \Bigr].
$$
By \eqref{eq:aux1} we have 
\begin{equation}
\label{eq:last4}
\mathbb{E}\Bigl[ \frac{Y_{i,n}^2}{\varphi_{i,n}}I_{X\leq\beta} \Bigr]\leq 
\frac{\mathbb{E}\bigl[\bigl(F(X)-F(\beta)\bigr)^2I_{X\leq\beta}\bigr]}{1+(\beta np_{i,n})^{-1}}. 
\end{equation}
Then, from the condition $\mathbb{E}[F(X)^2]<\infty$, we have
\begin{equation}
\label{eq:res3}
\eqref{eq:last4} \text{ is uniformly integrable}\quad\text{and}\quad
\eqref{eq:last4}\rightarrow 0 \text{ when } np_{i,n}\rightarrow 0. 
\end{equation}
Next, by \eqref{eq:aux2} we get 
\begin{equation}
\label{eq:last5}
\mathbb{E}\Bigl[ \frac{Y_{i,n}^2}{\varphi_{i,n}}I_{X\geq\beta} \Bigr]= 
\bigl(1+(\beta np_{i,n})^{-1}\bigr)\cdot\mathbb{E}\left[\left(F\Bigl(\frac{X+(np_{i,n})^{-1}}{1+(\beta np_{i,n})^{-1}}\Bigr)-F(\beta)\right)^2I_{X\geq\beta}\right]. 
\end{equation}
By Assumption (B2), $X$ obeys a power law at $X\geq\beta$, 
so if we put $Z:=\dfrac{X-\beta}{1+(\beta np_{i,n})^{-1}}$, 
the probability density of $Z$ is given by 
\[
\begin{split}
-\D\mathbb{P}(z\leq Z) & =\frac{1}{1+(\beta np_{i,n})^{-1}}\cdot 
\frac{\xi\D z}{\bigl(z+\dfrac{\beta}{1+(\beta np_{i,n})^{-1}}\bigr)^2} \\
& \leq \frac{1}{1+(\beta np_{i,n})^{-1}}\cdot\frac{\xi\D z}{z^2}
\end{split}
\quad\text{(where $z> 0$)}. 
\]
Thus, 
$$
%\begin{equation}
%\label{eq:last6}
\eqref{eq:last5}\leq \int_0^\infty\bigl(F(z+\beta)-F(\beta)\bigr)^2\cdot\frac{\xi\D z}{z^2}. 
%\end{equation}
$$
The condition $\mathbb{E}[F(X)^2]<\infty$ implies $\lambda<0.5$, so the above integral %\eqref{eq:last6} 
is finite. The integral is independent of $i$ and $n$, so \eqref{eq:last5} is uniformly integrable; 
in addition, when $np_{i,n}\rightarrow 0$ we have 
$\bigl(z+\dfrac{\beta}{1+(\beta np_{i,n})^{-1}}\bigr)^2\rightarrow z^2$, so 
\begin{equation}
\label{eq:res4}
\lim_{np_{i,n}\rightarrow 0}\eqref{eq:last5}=\int_0^\infty\bigl(F(z+\beta)-F(\beta)\bigr)^2\cdot\frac{\xi\D z}{z^2} 
\end{equation}
by Lebesgue's Dominated Convergence Theorem. 

Combining \eqref{eq:res3} and \eqref{eq:res4}, we have proven Lemma~\ref{lem:ycalc}(c)(d). 
\end{proof}

\begin{proof}[Proof of Lemma~\ref{lem:phicalc}]
Firstly, by the definition of $\varphi_{i,n}$ we have 
$$
n^{-1+2\lambda}\cdot\varphi_{i,n}=\bigl(np_{i,n}+1/\beta\bigr)^{-1+2\lambda}\cdot p_{i,n}. 
$$
So Lemma~\ref{lem:phicalc}(a) immediately follows. 
To prove Lemma~\ref{lem:phicalc}(b), we simply apply Assumption (A) to the above. 

To prove Lemma~\ref{lem:phicalc}(c), we calculate 
\[\begin{split}
n^{-1+2\lambda}\ln n\sum_{i=1}^{\frac{n}{\delta\ln n}}\varphi_{i,n} & =\sum_{i=1}^{\frac{n}{\delta\ln n}}\bigl(np_{i,n}+1/\beta\bigr)^{-1+2\lambda}\cdot p_{i,n}\ln n \\
& \leq \sum_{i=1}^{\frac{n}{\delta\ln n}}(np_{i,n})^{-1+2\lambda}\cdot p_{i,n}\ln n \\
& \approx \bigl(\frac{n}{\ln n}\bigr)^{-1+2\lambda}\sum_{i=1}^{\frac{n}{\delta\ln n}} i^{-2\lambda} 
  \quad\quad\quad\text{(by Assumption (A))} \\[4pt]
& \approx \bigl(\frac{n}{\ln n}\bigr)^{-1+2\lambda}\cdot\frac{1}{1-2\lambda}\bigl(\frac{n}{\delta\ln n}\bigr)^{1-2\lambda} \\[10pt]
& =\frac{\delta^{-1+2\lambda}}{1-2\lambda}, 
\end{split}\]
so the required $M_\delta$ exists. 

To prove Lemma~\ref{lem:phicalc}(d), we note that by Equation \eqref{eq:indexnp}, 
$$
\frac{n}{\delta\ln n}\leq i\leq n \Rightarrow np_{i,n}\leq\delta. 
$$
Hence, 
\[\begin{split}
n^{-1+2\lambda}\ln n\sum_{\frac{n}{\delta\ln n}\leq i}^{n}\varphi_{i,n} & =\sum_{\frac{n}{\delta\ln n}\leq i}^{n}\bigl(np_{i,n}+1/\beta\bigr)^{-1+2\lambda}\cdot p_{i,n}\ln n \\
& \geq \sum_{\frac{n}{\delta\ln n}\leq i}^{n}(\delta+1/\beta)^{-1+2\lambda}\cdot p_{i,n}\ln n \\
& \approx (\delta+1/\beta)^{-1+2\lambda} \sum_{\frac{n}{\delta\ln n}\leq i}^{n} i^{-1} 
  \quad\quad\quad\text{(by Assumption (A))} \\[8pt]
& \approx (\delta+1/\beta)^{-1+2\lambda}\ln\bigl(\delta\ln n\bigr)\rightarrow\infty 
\quad\text{(when $n\rightarrow\infty$)}.
\end{split}\]
Therefore, Lemma~\ref{lem:phicalc}(d) is proven. 
\end{proof}

%\newpage

\section{The Loss Function of SGNS}
\label{app:sgns}

In this appendix, we discuss the loss function of SGNS. The model 
is originally proposed as an ad hoc objective function using the negative sampling technique~\citep{word2vecNIPS}, 
without any explicit explanation on what is optimized and what is the loss. It is later 
shown that SGNS is a factorization of the shifted-PMI matrix~\citep{levyNIPS}, but the loss function for this 
factorization remains unspecified. Here, we give a re-explanation of the SGNS model, with the loss function 
explicitly stated. 

\subsection{Noise Contrastive Estimation}

The original objective function of SGNS is proposed as an adaptation of the Noise Contrastive 
Estimation (NCE) method, but in fact SGNS \emph{is} using NCE without any adaptation. 
NCE~\citep{gutmann12} is a method for solving the classical problem that, given a sample $(x_i)_{i=1}^N$ 
(wherein $x_i\in\mathcal{X}$) drawn from an \emph{unknown} probability distribution 
$\mathbb{P}_{\text{data}}$, and 
a function family $f(\cdot;\theta):\mathcal{X}\rightarrow\mathbb{R}_{\geq 0}$ parameterized by $\theta$, 
to find the optimal $\theta^{*}$ such that $f(x;\theta^{*})$ approximates the distribution 
$\mathbb{P}_{\text{data}}(x)$ best. An alternative to NCE is the Maximum Likelihood Estimation (MLE), 
in which $\theta^{*}$ is 
chosen as to maximize the log-likelihood of the sample $(x_i)_{i=1}^N$, with respect to the constraint that 
$f(\cdot;\theta^{*})$ should be a probability:
\[
\theta^{*}_{\text{MLE}}=\argmax_{\theta}\sum_{i=1}^N\ln f(x_i;\theta),
\quad\text{where }\sum_{x\in\mathcal{X}}f(x;\theta)=1.
\]
For MLE, the constraint $\sum_{x\in\mathcal{X}}f(x;\theta)=1$ is important, because 
$f(x;\theta)$ can tend to arbitrarily large if we maximize the log-likelihood without the constraint. NCE finds 
$\theta^{*}$ in a different way. It firstly mixes $(x_i)$ with a noise sample drawn from a \emph{known} distribution $\mathbb{P}_{\text{noise}}$, each 
data point $x_i$ mixed with $k$ noise points $y_{i,1},\ldots,y_{i,k}\sim\mathbb{P}_{\text{noise}}$. Hence, 
\begin{equation}
\label{eq:PxIsData}
\mathbb{P}(x\text{ is data}\;|\;x)=\frac{\mathbb{P}_{\text{data}}(x)}{\mathbb{P}_{\text{data}}(x)+k\mathbb{P}_{\text{noise}}(x)},
\end{equation}
which gives the probability of a given point $x\in\mathcal{X}$ being a data point. $\mathbb{P}_{\text{data}}$ is 
unknown in~\eqref{eq:PxIsData}, so we approximate $\mathbb{P}(x\text{ is data}\;|\;x)$ by $g(x;\theta)$ as below:
\begin{equation}
\label{eq:gApproxP}
g(x;\theta):=\frac{f(x;\theta)}{f(x;\theta)+k\mathbb{P}_{\text{noise}}(x)}. 
\end{equation}
Then, NCE maximizes the log-likelihood of 
``\textit{$x_i$ being data and $y_{i,1},\ldots,y_{i,k}$ being noise}'':
\begin{equation}
\label{eq:NCE}
\theta^{*}_{\text{NCE}}=\argmax_{\theta}\sum_{i=1}^N\Bigl(\ln g(x_i;\theta)
+\sum_{j=1}^k\ln(1-g(y_{i,j};\theta))\Bigr).
\end{equation}
The most important point of NCE is that, $f(x;\theta)$ will \emph{not} tend to infinity even we 
maximize~\eqref{eq:NCE} \emph{without} the constraint $\sum_{x\in\mathcal{X}}f(x;\theta)=1$. 
This is because making 
$f(x;\theta)$ large will accordingly make $1-g(y_{i,j};\theta)$ small, which will \emph{decrease} the likelihood of 
``\textit{$y_{i,1},\ldots,y_{i,k}$ being noise}''. No longer necessary to repeatedly calculate  
$\sum_{x\in\mathcal{X}}f(x;\theta)$ during parameter update, NCE usually results in efficient training 
algorithms. 

\subsection{The Skip-Gram with Negative Sampling Model}
\label{sec:SGNS}

%The skip-gram model learns the probability distribution $\Pr(c|t)$ from a corpus $\mathcal{C}$ 
%comprised of target-context pairs~\cite{levy14}. 

Let $p^t_i$ be the co-occurrence probability of the $i$-th word, conditioned on it being in the 
context of a target word $t$. 
%Let $P(c|t)$ be the co-occurrence probability of a context word $c$ and a target word $t$. 
SGNS approximates  $p^t_i$ by the function family 
$$f(i,t;\textbf{u},\textbf{v}):=\exp(\textbf{u}_i\cdot\textbf{v}^t+\ln (kp^{\text{noise}}_i)),$$ 
using NCE to optimize parameters. 
Here, $\textbf{u}$ and $\textbf{v}$ are parameters of the function family, whose columns are vectors 
$\textbf{u}_i$ and $\textbf{v}^t$, corresponding to the $i$-th context word and the word target $t$, respectively. 
The training data $\mathcal{C}$ is a collection of co-occurring context-target word pairs. 
On the other hand, $k$ and $p^{\text{noise}}_i$ are constants in the definition of the function family, 
where $k$ is the number of noise points drawn for each training instance, and 
$p^{\text{noise}}_i$ is the probability value of the $i$-th context word being drawn from the noise 
distribution $\mathbb{P}_{\text{noise}}$. 

Substituting the above $f(i,t;\textbf{u},\textbf{v})$ into \eqref{eq:gApproxP},  we get 
$$
g(i,t;\textbf{u},\textbf{v})=\frac{\exp(\textbf{u}_i\cdot\textbf{v}^t+\ln (kp^{\text{noise}}_i))}{\exp(\textbf{u}_i\cdot\textbf{v}^t+
\ln (kp^{\text{noise}}_i))+kp^{\text{noise}}_i}=\sigma(\mathbf{u}_i\cdot\mathbf{v}^t),
$$
where $\sigma(x)=1/\bigl(1+\exp(-x)\bigr)$ is the sigmoid function. 

Substituting the 
obtained $g(i,t;\textbf{u},\textbf{v})$ into \eqref{eq:NCE}, we get 
$$%\begin{equation}
%\label{eq:SGNS}
%O(\textbf{u},\textbf{v}):=
(\mathbf{u},\mathbf{v})^*_{\text{NCE}}=
\argmax_{\mathbf{u}, \mathbf{v}}\sum_{(i,t)\in\mathcal{C}}\Bigl(\ln\sigma(\mathbf{u}_i\cdot\mathbf{v}^t)
+\sum_{y\sim \mathbb{P}_{\text{noise}}}\ln(1-\sigma(\textbf{u}_{y}\cdot\textbf{v}^t))\Bigr) 
$$%\end{equation}
which is exactly the objective function of SGNS proposed in \cite{word2vecNIPS}. 

Since SGNS is using $f(i,t;\textbf{u},\textbf{v})=\exp(\textbf{u}_i\cdot\textbf{v}^t+\ln (kp^{\text{noise}}_i))$ to 
approximate $p^t_i$, it is using $\textbf{u}_i\cdot\textbf{v}_t$ to approximate 
$w^t_i:=\ln p^t_i-\ln (kp^{\text{noise}}_i)$. This $w^t_i$ is in the form of vector entries of distributional 
representations as given in Definition~\ref{defn:wvec}. Thus, SGNS can be viewed as a 
dimension reduction of the distributional representations we consider in this article. 

\begin{proof}[Proof of Claim~\ref{claim:NCEloss}] 
To calculate its loss function, we consider the objective of SGNS: 
$$
O(\mathbf{u}, \mathbf{v}):=
\sum_{(i,t)\in\mathcal{C}}\Bigl(\ln\sigma(\mathbf{u}_i\cdot\mathbf{v}^t)
+\sum_{y\sim \mathbb{P}_{\text{noise}}}\ln(1-\sigma(\textbf{u}_{y}\cdot\textbf{v}^t))\Bigr). 
$$
The sum is taken across all context-target pairs in $\mathcal{C}$. We regroup the summands by each 
distinct target, and note that conditioned on an occurrence of target $t$, the probability for 
one to encounter the $i$-th context word co-occurring in the training data is $p^t_i$, 
whereas the expected times for one to draw the context word from noise is given by 
$kp^\text{noise}_i$. So we have 
$$
O(\mathbf{u}, \mathbf{v})=\sum_t C(t)\sum_i 
\Bigl(p^t_i \ln\sigma(\mathbf{u}_i\cdot\mathbf{v}^t)
+kp^\text{noise}_i\ln(1-\sigma(\textbf{u}_{n}\cdot\textbf{v}^t))\Bigr) 
$$
where $C(t)$ is the occurrence count of $t$. 
Now, we know that the optimal $O(\mathbf{u}, \mathbf{v})$ is achieved at 
$\textbf{u}_i\cdot\textbf{v}^{t}=w^t_i$, so we define 
$$
M:=\sum_t C(t)\sum_i 
\Bigl(p^t_i \ln\sigma(w^t_i)
+kp^\text{noise}_i\ln(1-\sigma(w^t_i))\Bigr). 
$$
Then, to maximize $O(\mathbf{u}, \mathbf{v})$ is to minimize $M-O(\mathbf{u}, \mathbf{v})$, 
and by some calculation we obtain 
$$
M-O(\mathbf{u}, \mathbf{v})=\sum_t C(t)\sum_i 
D_{\phi}\bigl(\textbf{u}_{i}\cdot\textbf{v}^{t} + \ln (kp^{\text{noise}}_i),\, w^t_i + \ln (kp^{\text{noise}}_i)\bigr), 
$$
where $D_{\phi}(p,\,q):=\phi(p)-\phi(q)-\phi'(q)(p-q)$ is the Bregman divergence associated with the 
convex function 
$$
\phi(x)=(p^t_i+kp^{\text{noise}}_i)\ln(\exp(x)+kp^{\text{noise}}_i).
%(P(c|t)+e^{\gamma(c)})\ln(e^x+e^{\gamma(c)}).
$$
This is the loss function given in Claim~\ref{claim:NCEloss}. 
The limit of $D_{\phi}$ at $k\rightarrow+\infty$ is easily derived. 
\end{proof}

%\begin{acknowledgements}
%If you'd like to thank anyone, place your comments here
%and remove the percent signs.
%\end{acknowledgements}

% BibTeX users please use one of
\bibliographystyle{spbasic}      % basic style, author-year citations
\bibliography{ref.bib}   % name your BibTeX data base

% Non-BibTeX users please use
%\begin{thebibliography}{}
%
% and use \bibitem to create references. Consult the Instructions
% for authors for reference list style.
%
%\bibitem{RefJ}
% Format for Journal Reference
%Author, Article title, Journal, Volume, page numbers (year)
% Format for books
%\bibitem{RefB}
%Author, Book title, page numbers. Publisher, place (year)
% etc
%\end{thebibliography}

\end{document}